\numberwithin{equation}{section}
\newcommand{\bR}{{\bf R}}
\newcommand{\cA}{\mathcal{A}}
\newcommand{\cF}{\mathcal{F}}
\newcommand{\cL}{\mathcal{L}}
\newcommand{\R}{\bR}
\newcommand{\bbR}{\mathbb{R}}
\newcommand{\bbN}{\mathbb{N}}
\newcommand{\bbP}{\mathbb{P}}
\renewcommand{\det}{\mathop\mathrm{det}\nolimits}
\renewcommand{\epsilon}{\varepsilon}
\newcommand{\e}{\varepsilon}
\def\<{\mathopen{}\left<}
\def\>{\right>\mathclose{}}
\def\({\mathopen{}\left(}
\def\){\right)\mathclose{}}
\newcommand{\subalign}[1]{%
	\vcenter{%
		\Let@ \restore@math@cr \default@tag
		\baselineskip\fontdimen10 \scriptfont\tw@
		\advance\baselineskip\fontdimen12 \scriptfont\tw@
		\lineskip\thr@@\fontdimen8 \scriptfont\thr@@
		\lineskiplimit\lineskip
		\ialign{\hfil$\m@th\scriptstyle##$&$\m@th\scriptstyle{}##$\hfil\crcr
			#1\crcr
		}%
	}%
}
\newtheorem{theorem}{Theorem}
\newtheorem{corollary}{Corollary}
\newtheorem{example}{Example}
\newtheorem{lemma}{Lemma}
\newtheorem{proposition}{Proposition}
\newtheorem{remark}{Remark}
\numberwithin{equation}{section}
\newenvironment{customthm}[1]
{\innercustomthm}
{\endinnercustomthm}
\title[Wide neural networks: non-gaussian random fields at initialization and the NTK]{Wide neural networks: From non-gaussian random fields at initialization to the NTK geometry of training}
\author{Luis Carvalho}
\address{}
\email{}
\author{Jo\~ao Lopes Costa}
\address{}
\email{}
\author{Jos\'e Mourão}
\address{}
\email{}
\author{Gon\c{c}alo Oliveira}
\address{}
\email{galato97@gmail.com}
\begin{document}

	\begin{abstract}
		
		Recent developments in applications of artificial neural networks with over $n=10^{14}$  parameters make it extremely important to study the large $n$ behaviour of such networks. 
		
		Most works studying wide neural networks have focused on the infinite width $n \to +\infty$ limit of such networks and have shown that, at initialization, they correspond to Gaussian processes \cite{Neal,Lee}. In this work we will study their behavior for large, but finite $n$. Our main contributions are the following:
		
		\begin{itemize}
			\item The computation of the corrections to Gaussianity in terms of an asymptotic series in $n^{-\frac{1}{2}}$. The coefficients in this expansion are determined by the statistics of parameter initialization and by the activation function.
			
			\item Controlling the evolution of the outputs of finite width $n$ networks, during training, by computing deviations from the limiting infinite width case (in which the network evolves through a linear flow). This improves previous estimates \cite{Arora,Jacot,Du,Huang} and along the way we also obtain sharper decay rates for the (finite width) NTK in terms of $n$, valid during the entire training procedure. As a corollary, we also prove that, with arbitrarily high probability, the training of sufficiently wide neural networks converges to a global minimum of the corresponding quadratic loss function.   
			
			\item Estimating how the deviations from Gaussianity evolve with training in terms of $n$. In particular, using a certain metric in the space of measures we find that, along training, the resulting measure is within $n^{-\frac{1}{2}}(\log n)^{1+}$ of the time dependent Gaussian process corresponding to the infinite width network (which is explicitly given by precomposing the initial Gaussian process with the linear flow corresponding to training in the infinite width limit).
		\end{itemize}

	\end{abstract}

	\maketitle
	
	\tableofcontents

	\section{Introduction}

	\subsection{Context}
	
	Recent developments in applications of artificial neural networks with over $n=10^{14}$ parameters make it extremely important to study the large $n$ behaviour of such networks.

	Most works have focused on the infinite width $n \to +\infty$ limit of such networks and have shown that, at initialization  the limit corresponds to Gaussian random fields (GRF) \cite{Neal,Lee} while their training corresponds, for a quadratic loss,  to linear evolution with respect to the  neural tangent kernel  (NTK$^\infty$) \cite{Jacot, Arora}.

	\subsection{Summary}
	
	Our goal in this work will consist in studying the behavior of artificial neural networks with one single hidden layer for large, but finite $n$, at initialization and training.
	
	\subsubsection{Initialization}
	
	An artificial neural network, with fixed parameters (weights and biases, $\theta \in \mathbb{R}^{P_n}$), is a function from $\mathbb{R}^{n_{in}}$ to $\mathbb{R}^{n_{out}}$ (here, for the sake of simplicity, we will focus on the case $n_{in}=n_{out}=1$, even though, most of our results generalize naturally, but at the cost of overburdening the notation and obscuring some proofs). By varying the parameters we get a map $\mathcal{F}_n$ from the hyper-parameter space $\mathbb{R}^{P_n}$ to a space of functions. 
	At initialization, the parameters involved in making such a network are commonly generated at random from some predetermined distribution, which induces, by pushforward under $\mathcal{F}_n$, a sequence of distributions, $\mathbb{P}^{(n)}$, in the space of functions or a sequence of random fields.

	It is well known that in the $n \to +\infty$ limit this sequence converges to a GRF
	with many forms 
	of such a result
	being established in the literature, see for instance \cite{Neal,Lee} and references therein. While it is  important to understand such a limit, realistic networks have a finite number of neurons $n$, and one must therefore compute the deviation of the (finite dimensional) distributions $\mathbb{P}^{(n)}$ from the limiting Gaussian distribution. Some recent works \cite{Roberts} have proposed to tackle such a problem with techniques from effective quantum field theory. Instead, in the present work we take a simpler approach using the Edgeworth series technique in order to explicitly compute the corrections to Gaussianity in terms of an asymptotic series in $n^{-\frac{1}{2}}$, with coefficients determined by the statistics of parameter initialization. 

	\subsubsection{Training}
	
	The evolution of such networks during training,  can be modelled using the  finite $n$ neural tangent kernel (NTK$^{(n)}$), which is known to converge almost surely to a constant as $n \to +\infty$. Consequently, the training of such networks in this limit is equivalent to a linear flow (in the space of functions). In our work, we improve on previous estimations \cite{Jacot, Arora, Du, Huang} and compute sharper, uniform in time, decay rates for the NTK$^{(n)}$ in terms of $n$. This then results in deviations from the linear flow that models the infinite $n$ limit, which will allow us to control, both in time $t\geq0$ and in width $n$, the evolution of outputs during training. In fact, as typical of non-linear problems, we need to control ``everything at the some time'', since a careful control of NTK$^{(n)}$ requires a detailed control of the network's outputs and vice-versa. This is achieved via a continuity/bootstrap argument -- a standard technique in non-linear analysis that, to the best of our knowledge, hasn't seen widespread use in the study of artificial neural networks. Finally, a particular consequence of our results is that, under appropriate conditions, the training of sufficiently wide neural networks via gradient descent converges, with arbitrarily high probability on the initialization of parameters, to a global minimum of the corresponding quadratic loss function.    
	
	These results them allow us to  estimate, in terms of $n$, how the deviations from Gaussianity, that we have derived at initialization, evolve during training. We start by showing that the probability measure genertaed by the infinite width network is a time dependent Gaussian process which is explicitly given by precomposing the initial Gaussian process with the linear flow corresponding to training in the infinite width limit. Then,
	using the Prokhorof metric in the space of measures, we find that, along the training procedure, the measure associated with the finite width network is within $n^{-\frac{1}{2}}(\log n)^+$ of the time dependent Gaussian process corresponding to the infinite width network.

	\subsection{Main results}
	
	Before stating the main results of this article we require some preparation. 
	We shall refer  to the weights and biases of such a network as parameters and denote them by
	$$ \theta:= ( (W^{(2)}_1 , \ldots , W^{(2)}_n,W^{(1)}_1, \ldots , W^{(1)}_n), (b^{(2)}_{1}, \ldots b^{(2)}_n,b^{(1)}_1 , \ldots , b^{(1)}_n) ) \in (\mathbb{R}^n \times \mathbb{R}^n) \times (\mathbb{R}^n \times \mathbb{R}^n) \cong \mathbb{R}^{4n} =: \mathbb{R}^{P_n}.$$
	These, together with a nonlinear activation function $\sigma : \mathbb{R} \to \mathbb{R}$, determine the scalar valued function encoding the network
	\begin{equation}\label{eq:f_theta 1 hidden layer INTRO}
		f^{(n)}_\theta (x)  = \frac{1}{\sqrt{n}} \sum_{j=1}^{n} \left( W^{(2)}_{j} \ \sigma \left(  W^{(1)}_{j} x +b^{(1)}_j \right) + b^{(2)}_j \right) .
	\end{equation}	
	As a consequence, we obtain a map $\cF^{(n)}$, which to a set of parameters $\theta$ associates  the function $f^{(n)}_\theta \in \mathrm{Map}([a,b], \mathbb{R})$.

	\subsubsection{Initialization}
	
	Typically in applications, the parameters $\theta$ are initialized at random from a pre-determined probability measure on $\mathbb{R}^{4n}$. For each $n$, this induces (by pushforward via $\mathcal{F}_n$) probability measures
	$\mathbb{P}^{(n)}$ on the infinite dimensional space of functions ${\rm Map}([a,b], \mathbb{R})$. Under some very natural assumptions reviewed in \ref{ss:convergence to Gaussian}, and which we also recall below in this introduction, it follows from the central limit theorem that $\mathbb{P}^{(n)}$
	converges to a Gaussian measure
	on ${\rm Map}(\mathbb{R}, \mathbb{R})$, called a GRF. The main goal of the first part of this work is to compute the deviation of $\mathbb{P}^{(n)}$ from Gaussianity. In order to investigate such deviations it is convenient to work instead with the finite dimensional distributions obtained by evaluating the functions at a finite set of points. Notice that, as we recall in \ref{ss:measures and Kolmogorov}, it follows from Kolmogorov's extension theorem that there is no loss of generality in doing so.
	
	We now consider the finite dimensional distributions obtained  by fixing $k \in \mathbb{N}$ and $x_1, \ldots , x_k \in[a,b]$,  considering the map,  which to a set of parameters associates the resulting network evaluated at these points
	\begin{eqnarray*}
		\mathbb{R}^{4n} & \longrightarrow & \mathrm{Map}([a,b],\mathbb{R}) \to \mathbb{R}^{k} , \\
		\theta & \longmapsto & \left(f_\theta(x_1), \cdots , f_\theta(x_k)\right) \,,  
	\end{eqnarray*}
	and taking the pushforward of the initially fixed distribution on $\mathbb{R}^{4n}$ to one in $\mathbb{R}^k$ which we denote by $Q^{(n)}_{x_1, \ldots , x_k}$. 
	
	Working with such finite dimensional distributions it is easy to understand the convergence to a Gaussian process as follows. From now on, suppose that all the tuples $(W^{(2)}_{j},W^{(1)}_{j}, b^{(2)}_{j}, b^{(1)}_{j}) \sim (W^{(2)}, W^{(1)}, b^{(2)}, b^{(1)})$ are i.i.d. and that
	both $W^{(2)}$ and $b^{(2)}$ have zero mean. Then the central limit theorem applies to \ref{eq:f_theta 1 hidden layer INTRO} and gives that 
	\begin{equation}
		\bbP^{(n)}_{x_1, \ldots , x_k} \rightarrow \bbP^C_{x_1, \ldots , x_k},
	\end{equation}
	with $\bbP^C_{x_1, \ldots , x_k}$  a Gaussian distribution on $\mathbb{R}^k$, with covariance $C(x_i,x_j)=\mathbb{E}[p(x_i)p(x_j)]$, where $p(x)=W^{(2)} \sigma (W^{(1)}x+b^{(1)})+b^{(2)}$ and $\mathbb{E}[\cdot]$ denotes the expectation with respect to the initially fixed distribution on the parameters $(W^{(2)}, W^{(1)}, b^{(2)}, b^{(1)})$. More explicitly
	\begin{eqnarray}\label{eq:Gaussian Limit INTRO}
		d\bbP^C_{x_1, \ldots , x_k}(y_1 , \ldots , y_k) = \frac{e^{- \frac{1}{2}  C^{-1}_{ab} y_ay_b}}{(2\pi \det(C_{ij}))^{k/2}} dy^1 \ldots dy^k,
	\end{eqnarray}
	where $C_{ij} := C(x_i,x_j)$ and we used Einstein convention of summing over repeated indices. In particular, if $k=1$ then $C=\mathbb{E}[p(x)^2]$ is the second moment of the random variable $p(x)$ which we will denote by $\mu_2$. Then, in this $k=1$ situation, we have 
	\begin{eqnarray}\label{eq:Gaussian Limit INTRO 2}
		d\bbP^C_{x}(y) = \frac{e^{- \frac{y^2}{2\mu_2} }}{\sqrt{2\pi \mu_2}} dy.
	\end{eqnarray}
	By combining the central limit theorem above with the Kolmogorov extension theorem we
	conclude that the finite dimensional distributions (\ref{eq:Gaussian Limit INTRO})
	define a Gaussian measure on ${\rm Map}([a,b], \bbR)$ with covariance ${\bf C}$.
	Furthermore, if the nonlinear activation function $\sigma$ is Lipschitz, we shall prove in Theorem \ref{app-e T5} that the Gaussian measure
	$\bbP^{C}$ is in fact supported on the smaller and better behaved space of continuous functions, $\mathcal{C} = C([a.b], \bbR)$.

	Our first main result in this article is Theorem \ref{thm: Asymptotic} (see Figure~\ref{figMu3_0} and Figure~\ref{figMu3_neq0}) which, for finite $n$, explicitly computes the deviations from Gaussianity in the distribution at initialization $\bbP^{(n)}_{x_1, \ldots , x_k}$. Here, in the introduction, we shall simply state a special version of this result for $k=1$ which should be compared to equation \ref{eq:Gaussian Limit INTRO 2} and regarded as a refinement of that formula which is valid for large, but finite, $n$.
	
	\begin{customthm}{A}[Special case of Theorem \ref{thm: Asymptotic}]
		\label{thm: Asymptotic}
		Let $x \in \mathbb{R}$ and the tuples $(W^{(2)}_i,b^{(2)}_i,W^{(1)}_i,b^{(1)}_i)$ be independent and identically distributed, for all $i \in \lbrace 1, \ldots , n \rbrace$, and that $W^{(2)}_i,b^{(2)}_i$ have zero average. Then,
		$$d\bbP^{(n)}_{x}(y)=\nu_{x}^{(n)}(y) dy$$ 
		with the density $\nu_{x}^{(n)}$ satisfying
		\begin{align*}
			\nu_{x}^{(n)}(y) & =\frac{e^{-\frac{y^2}{2\mu_2}}}{\sqrt{2\pi\mu_2}} + \frac{1}{\sqrt{n}}  \frac{\mu_{3}}{12 \sqrt{2} \mu_2^{3/2}} H_3 \left( \frac{y}{\sqrt{2\mu_2}} \right) \frac{e^{-\frac{y^2}{2\mu_2}}}{\sqrt{2\pi\mu_2}} \\
			& \ \ \ \  +  \frac{1}{n} \left( \frac{\mu_4-3\mu_2^2}{96\mu_2^2} H_4 \left( \frac{y}{\sqrt{2\mu_2}} \right) + \frac{\mu_{3}^2}{576 \mu_2^3} H_6 \left( \frac{y}{\sqrt{2\mu_2}} \right) \right) \frac{e^{-\frac{y^2}{2\mu_2}}}{\sqrt{2\pi\mu_2}} + O(n^{-3/2})  ,
		\end{align*}
		where $H_n(\cdot)$ denotes the $n$-th Hermite polynomial and $\mu_r= \mathbb{E}[p(x)^r]$.
	\end{customthm}
	
	\begin{remark}
		Again, we remark that this result is an illustration of our main result on $\bbP^{(n)}_{x_1, \ldots , x_k}$, presented in more detail in \ref{ss:deviations from Gaussian} and proven in \ref{ap:proof of theorem} is the following:
	\end{remark}

	\begin{figure}[t]
		\label{figMu3_0}
		\includegraphics[width=8cm]{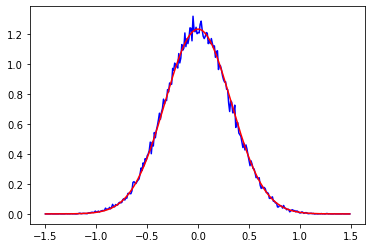}
		\caption{
			An experimental realization of Theorem A revealing Gaussian behavior (in particular $\mu_3=0$).
			The blue curve corresponds to the density function obtained by sampling, 80.000 times, the evaluation, at $x=1$, of a neural network with 2.000 neurons/perceptrons, sigmoide non-linearity and  parameters sampled uniformly in the interval $[-1/2,1/2]$. The red curve is the prediction of the theorem, excluding the $O(n^{-3/2})$ terms.}
		\centering
	\end{figure}

	\begin{figure}[t]
		\label{figMu3_neq0}
		\includegraphics[width=8cm]{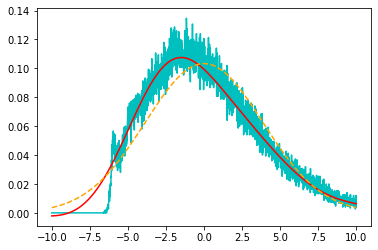}
		\caption{ 
			An experimental realization of Theorem A revealing ``strong'' deviations from Gaussianity (in particular $\mu_3\neq0$).
			The cyan curve corresponds to the density function obtained by sampling, 150.000 times, the evaluation, at $x=1$, of a neural network with 300 neurons/perceptrons, sigmoide non-linearity and  parameters sampled from a distribution with momenta $m_2\approx11,6$, $m_3\approx607,6$ and $m_4\approx34.131,4$. The red curve is the prediction of the theorem, excluding the $O(n^{-3/2})$ terms, and the orange dashed curve is the Gaussian with the same $\mu_2$, presented for comparison.}
		\centering
	\end{figure}

	\subsubsection{Training}
	Now consider that we are given a training set $\{X_i,Y_i\}_{i=1}^N$ and that we want to use it to train our networks~\eqref{eq:f_theta 1 hidden layer INTRO} by minimizing a quadratic loss function via gradient descent. With that in mind we initialize our parameters $\theta_{0}=(W^{(2)}_{0,j},b^{(2)}_{0,j},W^{(1)}_{0,j},b^{(1)}_{0,j})_{j\in\mathbb{N}}$ as before (but now we will also assume that they have finite third order momenta). We then have a well defined dynamical system in parameter space $t\mapsto \theta(t)$ and a corresponding well posed evolution    
	for the outputs 
	$$y(t)=f^{(n)}_{\theta(t)}(x)\;,$$
	of a finite $n$ width network (recall~\eqref{eq:f_theta 1 hidden layer INTRO} once again).  
	
	In this paper we also establish (see Appendix~\ref{appEvolution} for the corresponding proofs) various estimates that provide, under appropriate conditions detailed in  section~\ref{secEvolutionOutputs}, a clear description of the complete dynamics of the outputs of sufficiently wide, but finite, networks. In particular we will show that (see Theorem~\ref{thmEvolutionComplete} for a complete version of this result)

	\begin{customthm}{B}[Simplified version of Theorem \ref{thmEvolutionComplete}]
		By increasing the width $n$, then with arbitrarily high probability on (parameter) initialization: 
		\begin{itemize}
			\item The {\em difference between the finite width NTK and its infinite limit} satisfies an estimate of the form
			\begin{align}
				|\mathrm{NTK}_{zw}^{(n)}(\theta(t))-\mathrm{NTK}_{zw}^\infty|
				&\leq C
				\frac{\;(\log n)^{1/2+}}{\sqrt{n}} ,
			\end{align}
			for all $t\geq 0$, all $|z|\;,|w|\leq 1$.
			\item The {\em  training outputs (of the finite width neural network) convergence exponentially to their labels:}
			\begin{equation}
				\label{decayError0}
				|f^{(n)}_{\theta(t)}(X_l)-Y_l|\leq C e^{-(\lambda_{\infty}-\epsilon )t}\;, 
			\end{equation}
			where $\lambda_{\infty}>0$ is the minimum eigenvalue of $\mathrm{NTK}_{X_iX_j}^\infty$~\cite{NTK>0}.
			\item The {\em training is stable}, in the sense that the output dynamics remains close to the linear (infinite width) dynamics provided by $\mathrm{NTK}^\infty$. More precisely we show that, for all $|x|\leq 1$, 
			\begin{equation}
				|f^{(n)}_{\theta(t)}(x)-f^{\infty}_{t}(x)|\leq
				C \frac{(\log(n))^{1/2+}}{\sqrt{n}}\;.
			\end{equation} 
		\end{itemize}
	\end{customthm}

	Finally, fix $(x_1, \ldots , x_k) \in \mathbb{R}^k$ and consider the evolution, during training, of the measures in the space of outputs. 
	At each training time $t\geq 0$ this gives two measures $\mathbb{P}^{(n)}_{x_1, \ldots , x_k}(t)$ and $\mathbb{P}^{\infty}_{x_1, \ldots , x_k}(t)$, respectively obtained from training  using a finite width $n$ network and using an infinite network (note that, according to Theorem~\ref{prop: density for infinite with along training}, $\mathbb{P}^{\infty}(t)$  is Gaussian, for all $t\geq0$).  Our goal is to estimate how these two differ with $n$ as we evolve in $t$. To this purpose we shall measure the distance between these two metrics using the Prokhorov metric. For two measures $\mu_1, \mu_2$, this is denoted by $\pi(\mu_1, \mu_2)$ and defined as the infimum over the set of positive $\rho>0$ that simultaneously satisfies $\mu_1(A) \leq \mu_2(A^\rho)+\rho$ and $\mu_2(A) \leq \mu_1(A^\rho)+\rho$, with $A^\rho=\bigcup_{a \in A} B_\rho(a)$. Then, we prove the following.
	
	\begin{customthm}{C}\label{thm:distance between measures INTRO}
		Let $(x_1, \ldots , x_k) \in \mathbb{R}^k$. Then, there is a constant $C>0$, independent of $n$, such that for all $t \geq 0$
		$$\pi \Big(  \mathbb{P}^{(n)}_{x_1, \ldots , x_k}(t) , \mathbb{P}^{\infty}_{x_1, \ldots , x_k}(t)  \Big) \leq C \frac{(\log(n))^{1+}}{\sqrt{n}} .$$
	\end{customthm}

	\subsection{Related works}
	
	There as been extensive work concerning wide neural networks and their asymptotics in infinite width limit. Here we highlight some of those results that, to the best of our knowledge, are more closely related to the developments presented in this paper. 
	
	That the $n \to +\infty$ width limit  converges  to a Gaussian process (in an appropriate sense clarified earlier) is by now a classical result in machine learning, first established in~\cite{Neal} for the shallow network case and generalized recently~\cite{Lee} to the deep network case. See also \cite{Lee2,Ghorbani} for a performance comparison with Kernel methods.
	
	As already mentioned, other recent works have already studied deviations from Gaussianity that occur in the large but finite width case. For instance,~\cite{Roberts} deals with deep networks and relies on techniques from effective quantum field theory. Also, the kind of parameter initialization they privilege (standard Gaussian initialization with zero mean) leads to deviations as powers of $n^{-1}$, where $n$ determines the width size. In contrast, in the present work we take a simpler approach using the Edgeworth series technique in order to explicitly compute the corrections to Gaussianity in terms of an asymptotic series in $n^{-\frac{1}{2}}$, with coefficients determined by the statistics of parameter initialization.
	
	The paper~\cite{Arora} also concerns deep neural networks. In that setting it is shown, in
	particular, that  at initialization the difference between the infinite limit NTK$^{\infty}$ and the finite width 
	NTK$^{(n)}$ can be made arbitrarily small by increasing the width of all hidden layers. They also show that  after training the difference between the outputs of the finite and the corresponding infinite width networks can be made arbitrarily small by increasing the width. When compared to the results presented here the clear advantage of~\cite{Arora} is that it deals with deep networks, while we only consider single-layered networks. 
	Nonetheless, their bounds do not have an explicit quantitative dependence on the width of the layers and their results only apply to ReLu non-linearities; a situation that is largely fixed in our paper. 
	
	The results in~\cite{Huang} provide various estimates, in terms of the width $n$, controlling the behavior of the NTK, and higher order generalization of this object, and of the outputs during training. Once again their results apply to deep neural networks, but unfortunately the estimates they provide depend on increasing functions of training time $t$, a shortfall that, in particular, does not allow to control the $t\rightarrow\infty$ limit; nonetheless the estimates provided  in~\cite{Huang} are sufficient to show that outputs of training data can be made arbitrarily close to their labels, by appropriately increasing the width $n$.   
	
	In the, already mentioned, extensive work carried out in~\cite{Roberts} one can also find results concerning the behavior of outputs of wide neural networks, where it is shown that after training the difference between outputs and labels can be controlled up to an error of the order of $n^{-2}$, where $n$ determines the width of the networks.

	Concerning the behavior of outputs during training the  results in~\cite{Du,Du2} are, to the best of our knowledge, the ones with biggest overlap with the ones presented here. They also consider shallow networks and they also obtain uniform, in training  time, estimates that control the output dynamics in terms of a width parameter. In our opinion the results presented in our paper have the following advantages when compared to~\cite{Du}: i) the results in~\cite{Du} are restricted to the ReLu non-linearity and ii) require a initialization of the parameters of the output layer to be done with a uniform distribution; iii) these two restrictions are largely lifted in the current paper; this requires overcoming some new challenges since we can no longer rely on the simplified explicit computations allowed by the ReLu non-linearity and we have to deal with outputs that aren't necessarily compactly supported, something that in~\cite{Du} was assured by the the referred restrictive initialization of output parameter using a uniform distribution. Finally, we also provide a constructive proof based on the continuity/bootstrap method. Other relevant results in this context can be found in~\cite{Chizat}.   
	
	Other interesting and related works which investigate wide neural networks and which may be amenable to the techniques developed here are for instance \cite{Cao}.
	
	An interesting and important direction that can be pursued using the techniques we develop in this article is to further investigate the consequences of overparametrization 
	along the lines of \cite{Bartlett,Mei,Neyshabur,Oymak,Xu,Yang,Arora2}.
	
	The paper \cite{chen} by Chen et al obtains mean field theoretic results about fluctuations of the measure on the space of empirical measures, of order $m^{-1/2}$ at initialization and during training, where $m$ is the number of neurons of a single layered neural net. It would be very interesting to better understand the relation of this mean field theoretic approach for probability measures on the space of measures on $\R^4$ with our more random field theoretic approach
	using measures on the space of functions, $C([a,b], \R)$. 
	
	Along somewhat similar lines, it may also be interesting to investigate how our methods can be adapted to other parametrizations such as in \cite{Yang2}, and to better investigate feature learning.
	
	It would also be an interesting project for future work to try and understand how to use our techniques to better understand Physics informed neural networks and some interesting phenomena which occur there. See for instance \cite{Karniadakis,Wang}.

	\section{Deviations from gaussianity at initialization}
	\label{s-2}

	\subsection{Wide neural networks to be considered} \label{s-2.1}
	We consider a neural network with one hidden layer and width $n \in \mathbb{N}$ and input and output taking values in the real numbers. For convenience we will further assume that the inputs are in a fixed nonempty compact interval  $[a,b] \subset \bbR$. 
	This means that for each choice of parameters
	$\theta \in \mathbb{R}^{P_n}$, with $P_n=4n$, which is short for
	$$ \theta:= ( (W^{(2)}_1 , \ldots , W^{(2)}_n,W^{(1)}_1, \ldots , W^{(1)}_n), (b^{(2)}_{1}, \ldots b^{(2)}_n,b^{(1)}_1 , \ldots , b^{(1)}_n) ) \in (\mathbb{R}^n \times \mathbb{R}^n) \times (\mathbb{R}^n \times \mathbb{R}^n) ,$$
	consisting of weights and biases respectively,
	we construct a  function
	$$f^{(n)}_\theta : [a,b] \, \to \, \mathbb{R},$$
	given by
	\begin{equation}\label{eq:f_theta 1 hidden layer 0}
		f^{(n)}_\theta (x)  = \frac{1}{\sqrt{n}} \sum_{j=1}^{n} \left( W^{(2)}_{j} \ \sigma \left(  W^{(1)}_{j} x +b^{(1)}_j \right) + b^{(2)}_j \right) ,
	\end{equation}	
	where $\sigma : \mathbb{R}  \to \mathbb{R}$ is a nonlinear continuous scalar function called the activation function. 
	Therefore, for every $n \in \bbN$,  the neural network is  the map,
	\begin{eqnarray}
		\label{22a}
		\mathcal{F}^{(n)} : \,  \mathbb{R}^{ 4n} 
		& \longrightarrow & C([a,b], \bbR) \subset {\rm Map}([a,b], \mathbb{R}) \\
		\nonumber \theta & \longmapsto & f^{(n)}_\theta \, . 
	\end{eqnarray}

	\subsection{The induced probability measures on function space at initialization}\label{ss:measures and Kolmogorov}
	
	At initialization the parameters $\theta$ are randomly selected from a probability measure on $\mathbb{R}^{4n}$, which we denote  by $Q^{(n)}$ with density 
	$\nu_n(\theta)$, such that
	$dQ^{(n)} = \nu_n(\theta) d \theta$ where $d\theta:=d\theta^1 \ldots d\theta^{4n}$. By taking the pushforward under $\mathcal{F}^{(n)}$ we  obtain a sequence of probability measures
	$\mathbb{P}^{(n)}$ on 
	the infinite dimensional space of continuous functions 
	on $[a,b]$,
	\begin{equation}
		\label{ee-243}
		{\mathcal C} := C([a,b], \mathbb{R}) \, , \qquad
		\mathbb{P}^{(n)} := \left(\mathcal{F}^{(n)}\right)_*(Q^{(n)}) \, ,
	\end{equation}
	supported on the cones,
	\begin{equation}
		\label{ee-244}
		\mathcal{C}^{(n)} := \mathcal{F}^{(n)}(\mathbb{R}^{4n}) 
		\subset \, \mathcal{C} \subset {\rm Map}([a,b], \mathbb{R}) \, ,     
	\end{equation}
	of maximal dimension $3n+1$.
	Under natural assumptions, 
	that we will review below, 
	the central limit theorem implies that the sequence $\mathbb{P}^{(n)}$
	converges to a Gaussian measure
	on $\mathcal{C}=C([a,b], \mathbb{R})$,
	\begin{equation}
		\label{ee-245}
		\mathbb{P}_C^{(\infty)} =
		\lim_{n \to \infty} \, \mathbb{P}^{(n)} \, ,
	\end{equation}
	where $C$ denotes the covariance. 
	Informally the distributions $\mathbb{P}^{(n)}$ define a sequence of (in general) non-Gaussian random fields converging to a Gaussian random field. 
	
	One main goal of this work will 
	be to study the deviation of $\mathbb{P}^{(n)}$  from Gaussianity. 
	From Kolmogorov extension theorem we know that probability measures
	on ${\rm Map}([a,b], \mathbb{R})$
	define and are defined by all possible 
	finite dimensional distributions obtained by evaluating the fields at a finite number of points. More precisely,
	for $k \in \mathbb{N}$ and
	$x_1, \ldots , x_k \in [a,b]$  denote by $p_{x_1, \ldots , x_k}$  the evaluation map
	\begin{eqnarray*}
		p_{x_1, \ldots , x_k} \, : \, 
		{\rm Map}([a,b], \mathbb{R})
		& \longrightarrow & \mathbb{R}^k
		\\
		f  & \longmapsto &  
		\left(f(x_1) \ldots , f(x_k)\right)    \, . 
	\end{eqnarray*}
	We can therefore consider the composition
	\begin{eqnarray}\label{eq:Fnx}
		{\mathcal F}^{(n)}_{x_1, \ldots , x_k} = 
		p_{x_1, \ldots , x_k}  \circ {\mathcal F}^{(n)} \, : \, \mathbb{R}^{4n} & \longrightarrow & \mathrm{Map}([a,b] , \mathbb{R} ) \to \mathbb{R}^{k} , \\ \nonumber
		\theta & \longmapsto & \left(f_\theta(x_1), \cdots , f_\theta(x_k)\right) \,,  
	\end{eqnarray}
	and the corresponding pushforward of the 
	measure $Q^{(n)}$,
	\begin{equation}\label{eq:Finite dimensional pushforward measures}
		\bbP^{(n)}_{x_1, \ldots , x_k} := \left({\mathcal F}^{(n)}_{x_1, \ldots , x_k}\right)_* (Q^{(n)})\,.
	\end{equation}
	By construction, the measures $\bbP^{(n)}_{x_1, \ldots , x_k}$ satisfy the following two compatibility conditions
	\begin{equation}
		\label{ee-cc1}
		\int_{I_1 \times \ldots \times I_k} d \bbP^{(n)}_{x_{\sigma(1)}, \ldots , x_{\sigma(k)}} = \int_{I_{\sigma^{-1}(1)} \times \ldots \times I_{\sigma^{-1}(k)}} d \bbP^{(n)}_{x_1, \ldots , x_k},
	\end{equation}
	for $I_i \subset [a,b]$ and $\sigma$ a permutation of $\lbrace 1, \ldots , k \rbrace$, and 
	\begin{equation}
		\label{ee-cc2}
		\int_{I_1 \times \ldots \times I_k \times \mathbb{R}\times \ldots \times \mathbb{R} } d \bbP^{(n)}_{x_{1}, \ldots , x_{k}, x_{k+1} , \ldots , x_{k+m} } = \int_{I_1 \times \ldots \times I_k} d \bbP^{(n)}_{x_{1}, \ldots , x_{k}}, 
	\end{equation}
	for $m \in \mathbb{N}$. 
	Reciprocally,  from Kolmogorov's extension theorem (Theorem 9.2, p. 37 of \cite{Ya}), {\it any} collection of  finite dimensional distributions,  
	\begin{equation}
		\label{ee-2.9}
		\left\{\bbP^{(n)}_{x_1, \ldots , x_k}
		\right\}_{k \in \mathbb{N}, x_1, \dots, x_k \in [a,b]} \, , 
	\end{equation}
	satisfying the consistency conditions (\ref{ee-cc1}), (\ref{ee-cc2})
	defines a unique measure $\mathbb{P}^{(n)}$ on ${\rm Map}([a,b], \mathbb{R})$.

	\subsection{Convergence to the Gaussian Random Field}\label{ss:convergence to Gaussian}

	Recall that by definition a measure $\mathbb{P}$ is Gaussian on the space of functions 
	${\rm Map}([a,b], \mathbb{R})$ (or on $\mathcal{C}$)
	if  all the measures 
	$\bbP_{x_1, \ldots , x_k}$ on $\mathbb{R}^{k}$, for all $k \in \mathbb{N}, \, x_1, \ldots, x_k \in [a,b]$,  are Gaussian.

	We shall now consider the standard situation in which all perceptrons in the hidden layer are independent and their parameters identically distributed. In this case, as was already discussed, as the number of artificial neurons $n \rightarrow\infty$, the measures $\mathbb{P}^{(n)}$ converge to a Gaussian measure on $\mathrm{Map}([a,b], \mathbb{R})$.
	
	In fact, this can easily be seen by considering the expression (\ref{eq:f_theta 1 hidden layer 0}) for 
	the function $f^{(n)}_\theta$. This can be written as 
	\begin{equation}
		\label{eq:f_theta_2}
		f^{(n)}_\theta(x)= \frac{1}{\sqrt{n}} \sum_{j=1}^n p_{\hat \theta_j}(x),
	\end{equation}
	where $\hat \theta = (W^{(2)}, W^{(1)}, b^{(2)}, b^{(1)}) \in \bbR^4$,
	$$p_{\hat \theta_j}(x)=W^{(2)}_{j} \ \sigma \left(  W^{(1)}_{j} x +b^{(1)}_j \right) + b^{(2)}_j ,$$
	denotes the $j$-th perceptron. From now on we will assume that all 
	the tuples $(W^{(2)}_{j},W^{(1)}_{j}, b^{(2)}_{j}, b^{(1)}_{j}) \sim (W^{(2)}, W^{(1)}, b^{(2)}, b^{(1)})$ are i.i.d.
	and therefore 
	the 
	perceptrons are 
	also i.i.d.
	Furthermore,
	we will assume that
	both $W^{(2)}$ and $b^{(2)}$ have zero mean. Then the central limit theorem applies to (\ref{eq:f_theta_2}) and gives
	
	\begin{equation}
		\lim_{n \to + \infty} 
		\mathbb{P}^{(n)} = \mathbb{P}^{\bf C} , 
	\end{equation}
	where $\mathbb{P}_{\bf C}$ denotes the 
	Gaussian measure on ${\rm Map}([a,b], \bbR)$ with covariance  ${\bf C}$ given by the following function on $[a,b] \times [a,b] \subset \mathbb{R}^2$,
	\begin{equation}
		\label{ee-2.12b}
		{\bf C}(x,x')= 
		\mathbb{E}_f [f(x) f(x')]
		:= \mathbb{E}_{\hat \theta} [p(x) p(x')].
	\end{equation}
	Equivalently, the finite dimensional distributions $\bbP^{\bf C}_{x_1, \dots, x_k}$
	on $\mathbb{R}^k$ 
	are Gaussian, with  covariance $k \times k$ matrix 
	given by
	$C_{ab} := {\bf C}(x_a,x_b)$ 
	\begin{eqnarray}\label{eq:Gaussian Limit}
		d\bbP^{{\bf C}}_{x_1, \ldots , x_k}(y_1 , \ldots , y_k) = \frac{e^{- \frac{1}{2}  C^{-1}_{ab} y_ay_b}}{\sqrt{ (2\pi)^k \det(C)}} dy^1 \ldots dy^k,
	\end{eqnarray}
	where we have used the Einstein summation convention over repeated indices.
	We will compute an asymptotic expansion, valid for large $n$, which refines this result. See Theorem \ref{thm: Asymptotic} for the precise statement. 
	
	\begin{remark}\hfill
		\begin{itemize}
			\item We are assuming that $C_{ab}$ is non-degenerate. This turns out to be the case if the $x_a$'s are distinct and provided that the activation function $\sigma$ is sufficiently regular and non-polynomial (see~\cite{NTK>0} for more details). 
			\item It follows from Theorem 1.1.1 in \cite{Rivera} that if $x \mapsto {\bf C}(x,x')$ is of class $C^{k+1}$, for all $x' \in [a,b]$, then the measure $\bbP^{{\bf C}}$ is supported on the set of of class $C^k$-functions from $[a,b]$ to $\bbR$. This is clearly the case if $\sigma $ is itself of class $C^{k+1}$. 
		\end{itemize}
	\end{remark}


	\subsection{Quantitative deviations from Gaussianity}\label{ss:deviations from Gaussian}
	
	In this section we state one of our main results, Theorem \ref{thm: Asymptotic}, and one of its simple consequences presented as Example~\ref{cor: Asymptotic}. For large but finite $n$, we estimate  the deviation 
	of the finite width distribution $\bbP^{(n)}$
	from the Gaussian $\mathbb{P}^{\bf C}$. The proofs of these results are presented in Appendix~\ref{appEdgeworth}.
	
	We shall state the result in terms of the random function generated by a single perceptron
	\begin{equation}\label{eq:perceptron}
		p(x):= p_{\hat \theta}(x) = W^{(2)} \sigma (W^{(1)}x+b^{(1)})+b^{(2)}\,.
	\end{equation}
	Then, for $x_1 , \ldots , x_k \in \mathbb{R}$ and $r_1, \ldots , r_k \in \mathbb{N}_0$, we shall write
	$$  \mu_{r_1, \ldots , r_k} : =
	\mu_{r_1, \ldots , r_k}(x_1, \cdots, x_k) = \mathbb{E} \left[ p(x_1)^{r_1} \ldots p(x_k)^{r_k} \right],$$
	for the moments of the random variables $(p(x_1), \ldots , p(x_k))$ and
	$$  C(x_a, x_b) = \mathbb{E}[p(x_a)p(x_b)],$$
	for its covariance matrix. We further define, for every choice of points $x_1, \dots, x_k$, a set of real numbers
	$\lambda_{r_1, \ldots , r_k}=\lambda_{r_1, \ldots , r_k}(x_1, \dots, x_k)$, known as the cumulants of $(p(x_1), \ldots , p(x_k))$. These are determined by the $\mu_{r_1, \ldots , r_k}$ by requiring the equality of the following formal power series in the variables $q_1, \ldots , q_k$
	\begin{align}\label{eq:cumulants from moments 0}
		\sum_{r_1+ \ldots + r_k=1}^\infty \lambda_{r_1, \ldots r_k} \frac{(q_1)^{r_1} \ldots (q_k)^{r_k} }{r_1! \ldots r_k!} = \sum_{j=1}^\infty \frac{(-1)^{j+1}}{j} \left( \sum_{s_1+ \ldots + s_k=1}^\infty \mu_{s_1, \ldots s_k} \frac{(q_1)^{s_1} \ldots (q_k)^{s_k} }{s_1! \ldots s_k!} \right)^j ,
	\end{align}
	
	We are now ready to state our main result, the proof of which will be presented in \ref{ap:proof of theorem}.
	
	\begin{theorem}\label{thm: Asymptotic}
		Let the tuples $\hat{\theta}^{(i)}=(W^{(2)}_i,b^{(2)}_i,W^{(1)}_i,b^{(1)}_i) \sim \hat{\theta}$ be independent and identically distributed, for all $i \in \lbrace 1, \ldots , n \rbrace$. Assume also that $W^{(2)}_i,b^{(2)}_i$ have zero average, that all moments of $p_{\hat{\theta}}(x)$ are finite for any $x \in \mathbb{R}$, and let
		\begin{align}\label{eq:f_theta}
			\mathcal{F}^{(n)}(\theta)(x) = f^{(n)}_\theta (x) & = \frac{1}{\sqrt{n}} \sum_{j=1}^{n} \left( W^{(2)}_{j} \ \sigma \left(  W^{(1)}_{j} x +b^{(1)} \right) + b^{(2)}_j \right)\,.
		\end{align}
		Then, for all $x_1 , \ldots , x_k \in [a,b]$ and $l \in \mathbb{N}$, the measures
		$$d\bbP^{(n)}_{x_1, \ldots , x_k}(y_1 , \ldots , y_k)=\nu_{x_1, \ldots , x_k}(y_1 , \ldots , y_k) dy^1 \cdots dy^k\;,$$
		on $\mathbb{R}^k$, have densities $\nu_{x_1, \ldots , x_k}(y_1 , \ldots , y_k)$ given by
		\begin{equation}\label{eq:Distribution f}
			\exp \left( \sum_{r_1+ \ldots + r_k>2}^l \frac{1}{n^{\frac{r_1 + \ldots + r_k}{2}-1}}  \frac{\lambda_{r_1, \ldots r_k}}{r_1! \ldots r_k!} (-\partial_{y_1})^{r_1}  \ldots (-\partial_{y_k})^{r_k}  \right) \frac{e^{- \frac{1}{2} C^{-1}_{ij}  y_i y_j}}{ \sqrt{(2\pi)^{k} \det(C)} } +o(n^{-l/2+1})\; .
		\end{equation}
		
	\end{theorem}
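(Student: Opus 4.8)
The plan is to recognize \eqref{eq:Distribution f} as a multivariate Edgeworth expansion for the normalized sum $Z^{(n)}:=\bigl(f^{(n)}_\theta(x_1),\dots,f^{(n)}_\theta(x_k)\bigr)=n^{-1/2}\sum_{j=1}^{n}V_j$ of the i.i.d.\ random vectors $V_j:=\bigl(p_{\hat\theta_j}(x_1),\dots,p_{\hat\theta_j}(x_k)\bigr)\in\mathbb{R}^k$, and to derive it by Fourier inversion of the characteristic function. By hypothesis the $V_j$ are i.i.d., they are centred because $W^{(2)},b^{(2)}$ are, and by \eqref{ee-2.12b} their common covariance is $C=(C_{ab})$; since $\mathbb{P}^{(n)}_{x_1,\dots,x_k}$ is the law of $Z^{(n)}$, it is enough to expand that law. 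Write $\phi(t)=\mathbb{E}[e^{i\langle t,V_1\rangle}]$, so that the characteristic function of $Z^{(n)}$ is $\psi_n(t)=\phi(t/\sqrt n)^n$. Since all moments of each $p_{\hat\theta}(x_a)$ are finite, $\phi$ is $C^\infty$ near the origin, and the Taylor expansion of $\log\phi$ reads, with $\alpha=(r_1,\dots,r_k)$ and $|\alpha|=\sum_i r_i$, $\log\phi(t)=-\tfrac12\langle t,Ct\rangle+\sum_{3\le|\alpha|\le l+2}\tfrac{i^{|\alpha|}\lambda_\alpha}{\alpha!}t^\alpha+o(|t|^{l+2})$: the $|\alpha|=1$ cumulants vanish because $\mathbb{E}[V_1]=0$, the $|\alpha|=2$ cumulants assemble into $-\tfrac12\langle t,Ct\rangle$, and the $\lambda_\alpha$ are precisely the cumulants of \eqref{eq:cumulants from moments 0} --- that identity being the formal statement $\log\mathbb{E}[e^{\langle q,V_1\rangle}]=\sum_\alpha\lambda_\alpha q^\alpha/\alpha!$. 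Substituting $t\mapsto t/\sqrt n$ and multiplying by $n$,
\[
\log\psi_n(t)=-\tfrac12\langle t,Ct\rangle+\sum_{3\le|\alpha|\le l}\frac{\lambda_\alpha}{\alpha!}\,\frac{(it)^\alpha}{n^{|\alpha|/2-1}}+r_n(t),
\]
where, on the range $|t|\le\delta\sqrt n$, the remainder $r_n(t)$ is bounded by $n^{-(l-1)/2}$ times a polynomial in $|t|$.

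Assuming (this is the crux; see below) that for $n$ large $Z^{(n)}$ is absolutely continuous with bounded continuous density and $\psi_n\in L^1(\mathbb{R}^k)$, the density is $\nu^{(n)}_{x_1,\dots,x_k}(y)=(2\pi)^{-k}\int_{\mathbb{R}^k}e^{-i\langle t,y\rangle}\psi_n(t)\,dt$. I would split this integral at $|t|=\delta\sqrt n$. On the inner part, choose $\delta$ small enough that $|\psi_n(t)|\le e^{-c|t|^2}$ there (possible since $C$ is positive definite), insert the expansion of $\log\psi_n$, expand $e^{r_n(t)}$ and the exponential of the cumulant sum into monomials in $t$ times $e^{-\frac12\langle t,Ct\rangle}$, and invert term by term using $(2\pi)^{-k}\int e^{-i\langle t,y\rangle}(it)^\alpha e^{-\frac12\langle t,Ct\rangle}\,dt=(-\partial_y)^\alpha g(y)$, where $g(y)=e^{-\frac12 C^{-1}_{ij}y_iy_j}/\sqrt{(2\pi)^k\det C}$. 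This produces exactly $\exp\!\bigl(\sum_{3\le|\alpha|\le l}n^{1-|\alpha|/2}\tfrac{\lambda_\alpha}{\alpha!}(-\partial_y)^\alpha\bigr)g(y)$, understood as the truncation keeping only products of at most $l-2$ of these operators --- each extra factor forces $\ge 3$ further derivatives and a factor $n^{-1/2}$, so the discarded terms are $o(n^{1-l/2})$; grouped by powers of $n^{-1/2}$ this is the classical Edgeworth series $g(y)\sum_{r=1}^{l-2}n^{-r/2}P_r(y)$, and for $k=1$, rewriting $(-\partial_y)^j g$ through Hermite polynomials recovers the formula stated in the introduction. The outer part $|t|>\delta\sqrt n$ contributes $o(n^{1-l/2})$ once $|\psi_n|$ is exponentially small there, and the standard Edgeworth estimates make the truncation error and $r_n$ contribute $o(n^{1-l/2})$ uniformly in $y$, as in the multivariate Edgeworth theorem of Bhattacharya and Rao.

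I expect the main obstacle to be the regularity assumed above, not the essentially formal manipulation of the series: one needs $\psi_n\in L^1$ and a Cramér-type condition $\sup_{|u|\ge\delta}|\phi(u)|<1$, which then yields $|\psi_n(t)|\le\rho^n$ on $|t|>\delta\sqrt n$ for some $\rho<1$. This is not automatic, because a single perceptron vector $V_1=W^{(2)}S+b^{(2)}\mathbf 1$, with $S=\bigl(\sigma(W^{(1)}x_a+b^{(1)})\bigr)_{a=1}^k$ and $\mathbf 1=(1,\dots,1)$, need not be absolutely continuous on $\mathbb{R}^k$. I would exploit $\langle t,V_1\rangle=W^{(2)}\langle t,S\rangle+b^{(2)}\sum_a t_a$: if, e.g., $b^{(2)}$ has a bounded Lebesgue density and is independent of the remaining parameters, then Riemann--Lebesgue applied in the $b^{(2)}$-integral makes $|\phi(t)|$ bounded away from $1$ whenever $|\sum_a t_a|\ge\eta$, while on the hyperplane $\sum_a t_a=0$ one has $\phi(t)=\mathbb{E}[e^{iW^{(2)}\langle t,S\rangle}]$ and a non-degeneracy argument for the map $(W^{(1)},b^{(1)})\mapsto S$ --- using that $\sigma$ is smooth, non-polynomial and the $x_a$ distinct, i.e.\ the same hypotheses that make $C$ invertible (see~\cite{NTK>0}) --- shows $\langle t,S\rangle$ is not degenerate and forces $|\phi(t)|<1$; a compactness argument over the shell $\delta\le|u|\le R$ together with decay for $|u|>R$ then gives the Cramér condition. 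Consequently, for $n$ large $Z^{(n)}$ has a bounded continuous density with $\psi_n\in L^1$, and the asymptotic expansion \eqref{eq:Distribution f} follows from the inversion argument above.
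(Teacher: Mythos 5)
Your proposal follows the same route as the paper's proof in Appendix B.2: identify $(f^{(n)}_\theta(x_1),\dots,f^{(n)}_\theta(x_k))$ as a normalized i.i.d.\ sum, pass to the cumulant generating function $n\,F_X(p/\sqrt n)$, expand, exponentiate, and recover the density by Fourier inversion term by term, translating each monomial $(ip)^\alpha e^{-\frac12\langle p,Cp\rangle}$ into $(-\partial_y)^\alpha$ applied to the Gaussian. The formal manipulations you carry out are precisely the ones the paper performs.

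The genuine difference is in how the inversion integral over large frequencies is handled, and here your treatment is more careful than the paper's. The paper splits the inversion at a radius $\rho$, writes the tail as $\epsilon_\rho=\frac{1}{(2\pi)^k}\int_{|p|>\rho}Z_{Y_n}(p)e^{-i\langle p,y\rangle}\,dp$ and the outer piece of the truncated $Z^l_{Y_n}$ as $\delta_\rho$, and then asserts that $\rho_n\to\infty$ can be chosen so that $\epsilon_{\rho_n}+\delta_{\rho_n}=o(n^{1-l/2})$. But $\epsilon_\rho$ depends on $n$: for this sequence to exist one needs both $Z_{Y_n}\in L^1$ so the inversion formula applies, and a tail bound on $|Z_{Y_n}|$ that is \emph{uniform} in $n$; since $Z_{Y_n}(p)=\phi(p/\sqrt n)^n$, the relevant $p$-range is $|p|\gtrsim\sqrt n$, and without something like $\sup_{|u|\ge\delta}|\phi(u)|<1$ there is no reason for $|Z_{Y_n}|$ to be small there (lattice or otherwise non-smooth perceptron laws give counterexamples). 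You have correctly identified this as the crux, and your sketch --- absolute continuity of $b^{(2)}$ to control $\phi$ away from the hyperplane $\sum_a t_a=0$, plus non-degeneracy of $(W^{(1)},b^{(1)})\mapsto (\sigma(W^{(1)}x_a+b^{(1)}))_a$ on that hyperplane, plus compactness and Riemann--Lebesgue to close the shell/tail --- is a reasonable way to establish the needed Cramér condition and $L^1$ bound. Be aware, however, that this requires hypotheses (a density for $b^{(2)}$, smoothness and non-degeneracy of $\sigma$) that are not part of the theorem's stated assumptions; the paper's proof silently presumes enough regularity for the inversion to converge, so strictly speaking your version proves a slightly different (correctly hypothesized) statement rather than exactly reproducing theirs. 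Aside from that, the two arguments buy the same conclusion, and your version is the one that actually closes the tail estimate.
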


	\begin{remark}\label{rem: vanish of n_1^{-1/2}}
		It is interesting to ask if there are conditions that can be put on the distributions of $W^{(2)}_i,b^{(2)}_i$ and $W^{(1)}_i,b^{(1)}_i$, i.e. on the perceptrons, such that the convergence to the Gaussian process can be accelerated. 
		
		This will be the case if we can make the term of order $n^{-1/2}$ to vanish, which can be achieved
		by making $\mu_{r_1, \ldots , r_k}=0$ whenever $r_1 + \ldots + r_k=3$. 
		
		There are reasonable conditions on the distributions of $W^{(2)},b^{(2)}$ which guarantee the vanishing of the third moments. For example, if in addition to the average of $W^{(2)},b^{(2)}$ vanishing, also their third moment does. This is true if they are distributed symmetrically with respect to $0$ (the average), as is the case for the normal distribution. 
		
	\end{remark}

	In order to illustrate how this theorem can be used in practice we give the following immediate consequence, which is  proven in Appendix \ref{ap:proof of corollary}
	
	\begin{example}[Case $k=1$]\label{cor: Asymptotic}
		Let $x \in \mathbb{R}$. Then, for $(W^{(2)}_i,b^{(2)}_i,W^{(1)}_i,b^{(1)}_i)$ and $f_\theta(\cdot)$ as in Theorem \ref{thm: Asymptotic}, we have
		\begin{align*}
			\nu_{x}(y) & =\frac{e^{-\frac{y^2}{2\mu_2}}}{\sqrt{2\pi\mu_2}} + \frac{1}{\sqrt{n}}  \frac{\mu_{3}}{12 \sqrt{2} \mu_2^{3/2}} H_3 \left( \frac{y}{\sqrt{2\mu_2}} \right) \frac{e^{-\frac{y^2}{2\mu_2}}}{\sqrt{2\pi\mu_2}} \\
			& \ \ \ \  +  \frac{1}{n} \left( \frac{\mu_4-3\mu_2^2}{96\mu_2^2} H_4 \left( \frac{y}{\sqrt{2\mu_2}} \right) + \frac{\mu_{3}^2}{576 \mu_2^3} H_6 \left( \frac{y}{\sqrt{2\mu_2}} \right) \right) \frac{e^{-\frac{y^2}{2\mu_2}}}{\sqrt{2\pi\mu_2}} + O(n^{-3/2})  ,
		\end{align*}
		where $H_n(\cdot)$ denotes the $n$-th Hermite polynomial and $\mu_r= \mathbb{E}[p(x)^r]$.
	\end{example}

	\section{The geometry of training in function space}
	\subsection{Training and the Neural Tangent Kernel}\label{ss:NTK0}
	
	During training, the parameters $\lbrace \theta(t) \rbrace_{t\geq 0}$ evolve through the negative gradient flow of a cost function
	$\cL^{(n)}: \mathbb{R}^{4n} \to \mathbb{R}$, i.e.
	\begin{equation}
		\label{gradientDescent}
		\dot{\theta} = - \nabla \cL^{(n)} ,
	\end{equation}
	where the gradient is calculated
	with respect to the flat, diagonal metric on $\mathbb{R}^{4n}$,
	$$
	\gamma^{(n)} = \sum_{j=1}^{4n} \, d \theta_j^2   \, . 
	$$
	In fact it is the inverse metric that enters the definition of the gradient vector field of a function,
	$$
	\nabla \mathcal{L}^{(n)} \, : \qquad \, \gamma^{(n)} \left(\nabla \,  \mathcal{L}^{(n)} \, , \, \cdot \, \right) =
	d \, \mathcal{L}^{(n)} \, ,
	$$
	or, in local coordinates,
	$$
	\nabla \,  \mathcal{L}^{(n)}  = \sum_{j=1}^{4n} \frac{\partial \,  \mathcal{L}^{(n)}}{\partial \theta_j} \,  \frac{\partial }{\partial \theta_j}   \, .
	$$

	In practice, 
	one associates loss functions
	to training data
	$Z = \lbrace (X_a, Y_a)\rbrace_{a=1}^N$,
	where
	we assume that $X_a \neq X_b$
	for $a \neq b$ and
	each $(X_a,Y_a)$ represents a input output pair, i.e. the function $f$ which we expect to approximate satisfies $f(X_a)=Y_a$ for all $a = 1, \dots, N$.
	One  then 
	considers on $\mathrm{Map}([a,b], \mathbb{R})$ a loss
	$\cL_Z \, : \mathrm{Map}([a,b], \mathbb{R}) \longrightarrow \mathbb{R}$, for example the quadratic loss
	\begin{equation}
		\label{msl}
		\cL_Z(f):=\frac{1}{2} \sum_{i=1}^N \left( Y_i - f(X_i) \right)^2.
	\end{equation}
	Then, for every $n$, 
	we have a loss on $\bbR^{4n}$,
	$$
	\cL^{(n)}_Z = \cL_Z \circ \cF^{(n)} ,
	$$
	and the corresponding gradient flow. Let $\theta(t)$
	be an integral curve of this flow and, for $x \in \bbR$
	consider the curve 
	$y(t) := f^{(n)}_{\theta(t)}(x)$. From $\dot{y} = \sum_{j=1}^{4n} \frac{\partial f^{(n)}_\theta}{\partial \theta_j} (x) \ \dot{\theta}_j $ we find,
		\begin{align*}
			\dot{y} & = \sum_{j=1}^{4n} \frac{\partial f^{(n)}_\theta}{\partial \theta_j} (x) \sum_{l=1}^n \left( Y_l - f^{(n)}_\theta(X_l) \right) \frac{\partial f^{(n)}_\theta}{\partial \theta_j} (X_l) \\
			& = \sum_{l=1}^N  \left( Y_l - f^{(n)}_\theta(X_l) \right) \sum_{j=1}^{4n} \frac{\partial f^{(n)}_\theta}{\partial \theta_j} (x) \frac{\partial f^{(n)}_\theta}{\partial \theta_j} (X_l)\;.
		\end{align*}
		Introducing the neural tangent kernel 
		$$
		\mathrm{NTK}_{z_1 z_2}^{(n)}(\theta) :=  \sum_{j=1}^{4n} \frac{\partial f^{(n)}_\theta}{\partial \theta_j} (z_1) \frac{\partial f^{(n)}_\theta}{\partial \theta_j} (z_2) 
		$$
		we can rewrite the previous computation as
		\begin{equation}\label{eq:Evolution of y}
			\dot{y}  = \sum_{l=1}^N \mathrm{NTK}_{x X_l}^{(n)}(\theta) \left( Y_l - f^{(n)}_\theta(X_l) \right) .
		\end{equation}
		In the setting we are considering, i.e. when $f^{(n)}_\theta$ is given by \ref{eq:f_theta_2}, the neural tangent kernel is
		\begin{equation}
			\label{NTKdef}
			\mathrm{NTK}_{x_1 x_2}^{(n)}(\theta) =  \sum_{j=1}^{4n} \frac{\partial f^{(n)}_\theta}{\partial \theta_j} (x_1) \frac{\partial f^{(n)}_\theta}{\partial \theta_j} (x_2) = \frac{1}{n} \sum_{l,m=1}^n \sum_{j=1}^{4n} \frac{\partial p_l}{\partial \theta_j} (x_1) \frac{\partial p_m}{\partial \theta_j} (x_2) = \frac{1}{n} \left( \sum_{m=1}^n \sum_{j=1}^{4n} \frac{\partial p_m}{\partial \theta_j} (x_1) \frac{\partial p_m}{\partial \theta_j} (x_2) \right)\;,   
		\end{equation}
		because $ \frac{\partial p_l}{\partial \theta_j} (x_1) \frac{\partial p_m}{\partial \theta_j} (x_2)=0$ for $l \neq m$. As a consequence, the law of large numbers ensures that as $n \to + \infty$
		\begin{equation}
			\label{NTKinfty}
			\mathrm{NTK}_{x_1 x_2}^{(n)}(\theta) \to \mathrm{NTK}_{x_1 x_2}^\infty := \mathbb{E} \left[ K(x_1,x_2)  \right]\;, \text{ a.s. in } \theta\;,
		\end{equation}
		where $K(x_1,x_2)$ is the random variable
		$$K(x_1,x_2):=\sum_{j} \frac{\partial p}{\partial \theta_j} (x_1) \frac{\partial p}{\partial \theta_j} (x_2)\;.$$

		In fact it is well known that we have the following quantitative version of the Law of Large Numbers applies~\cite[Theorem 2.5.11]{Durrett}
		\begin{equation}
			\label{NTKconv2}
			\mathrm{NTK}_{x_1 x_2}^{(n)}(\theta)=\mathrm{NTK}_{x_1 x_2}^\infty +o(n^{-1/2}(\log n)^{1/2+}) \;\; \text{ a.s.}\;,
		\end{equation}
		where, as usual, $a+$ represents any number strictly larger than $a$.

		\subsection{Training as the ``pushforward'' of gradient flow to function space}
		
		As in the previous section, let $\lbrace \theta(t) \rbrace_{t \in I \subset \R}$ be an integral curve of the negative gradient of $\cL_Z^{(n)}=\cL_Z \circ \cF^{(n)}$, i.e. $\dot{\theta} = - \nabla \cL^{(n)}_Z$, where $\nabla \cL^{(n)}_Z$ is the vector field on $\R^{4n}$ determined by 
		$$\nabla \cL^{(n)}_Z (\alpha) = (\gamma^{(n)})^{-1}(d\cL^{(n)}_Z , \alpha ) , $$
		for all $1$-forms $\alpha$. Here we have used 
		$$(\gamma^{(n)})^{-1}= \sum_{j=1}^{4n} \frac{\partial}{\partial \theta_j } \otimes \frac{\partial}{\partial \theta_j }$$ 
		to denote the metric on co-vectors induced by the Euclidean metric $\gamma^{(n)}$. Furthermore, as $d \cL_Z^{(n)} = d \cL_Z \circ d\cF^{(n)}$ we actually have
		\begin{eqnarray}\label{eq:gradient L}
			\nabla \cL^{(n)}_Z (\cdot) = (\gamma^{(n)})^{-1}(d\cL_Z \circ d \cF^{(n)} , \cdot ) .
		\end{eqnarray}
		Let us keep this expression at the back of our mind and go on to use $\cF^{(n)}$ to induce a pushforward of the inverse metric along $\lbrace \cF^{(n)}(\theta(t)) \rbrace_{t\in I \subset \R} \subset \rm{Map}([a,b], \R)$, i.e. the image of the integral line we are considering. At any point of $\cF(\theta(t))$ we can define of $\gamma^{(n)}$ as
		$$\left(\cF_\ast (\gamma^{(n)})^{-1} \right)_{\cF^{(n)}(\theta(t))}(\alpha , \beta ) = (\gamma^{(n)})^{-1}_{\theta(t)} \Big( \alpha \circ d\cF^{(n)}_{\theta(t)}  , \beta \circ d\cF^{(n)}_{\theta(t)} \Big) \ ,$$
		for any covectors $\alpha , \beta$ at $\cF^{(n)}(\theta(t))$.\footnote{Here given a co-vector $\alpha$ at $\cF^{(n)}(\theta(t))$, $\alpha \circ d\cF^{(n)}$ denotes the covector at $\theta(t)$ which takes the value $\alpha ( d\cF^{(n)}_{\theta(t)} V)$ when evaluated at a tangent vector $V$.}. Then, using this metric, we can define a gradient of $\cL_Z$ along $\lbrace \cF^{(n)}(\theta(t)) \rbrace_{t\in I \subset \R}$ by
		$$\nabla \cL_Z (\cdot) = \left( \cF^{(n)}_\ast (\gamma^{(n)})^{-1} \right) (d \cL_Z , \cdot) $$
		which unwinding the definition of $\cF^{(n)}_\ast (\gamma^{(n)})^{-1}$ and using that of $\nabla \cL^{(n)}_Z$ in equation \ref{eq:gradient L}, this reads 
		$$(\nabla \cL_Z) (\cdot) =(\gamma^{(n)})^{-1}  \left( d \cL_Z \circ d\cF^{(n)} , (\cdot) \circ d\cF^{(n)} \right) = d\cF^{(n)} ( \nabla \cL_Z^{(n)} (\cdot) ), $$
		leading to the conclusion that
		$$\nabla \cL_Z = d\cF^{(n)} ( \nabla \cL_Z^{(n)} ).$$
		In particular, $\cF^{(n)} (\theta(t))$ evolves with velocity vector given by $-\nabla \cL_Z$, the negative gradient of $\cL_Z$ with respect to the inverse metric $(\gamma^{(n)})^{-1}$. Indeed,
		$$\frac{d}{dt} (\cF^{(n)}(\theta(t))) = d\cF (\dot{\theta}(t) ) = - (d\cF)_{\theta(t)} (\nabla \cL_Z^{(n)})_{\theta(t)} = - ( \nabla \cL_Z)_{\cF(\theta(t)}, $$
		and we have thus obtained a different interpretation for the flow by the neural tangent kernel as a special case of flowing by the negative gradient of $\cL_Z$ with respect to the pushforward of the inverse metric. 
		
		\begin{remark}
			Notice that we only define $ \nabla \cL_Z$ along the image of a gradient flow line. While it would be desirable to define this everywhere in the image of $\cF$, in general that is not possible as this map may fail to be injective in a serious way. 
		\end{remark}

		\subsection{Evolution and convergence of outputs for infinite networks}
		
		In this section we recall some well known facts regarding the evolution of infinitely large neural networks during training. The upshot if that the outputs of the network evolve in a linear manner by the Neural Tangent Kernel, \cite{Lee}.
		
		Given a training set $\{X_i,Y_i\}_{i=1}^N$, we define the function associated to the ``infinite width'' network with kernel  $\mathrm{NTK}^\infty$ (and the same initialization as the finite width one) as the function
		$f_{\infty}(\,\cdot\,,t):\mathbb{R}\rightarrow \mathbb{R}$ defined as the solution to the following initial value problem
		\begin{equation}
			\label{def_fInifinite first}
			\left\{
			\begin{array}{l}
				\partial_t f_{\infty}(x,t) = -\sum_{j=1}^N \mathrm{NTK}^{\infty}_{xX_j}(f_{\infty}(X_j,t)-Y_j)
				\\
				f_{\infty}(x,0)=f_{\theta(0)}^{(n)}(x)\;,
			\end{array}
			\right.
		\end{equation}
		where $f_{\theta(0)}^{(n)}(x)$ denotes the initialized (finite width) network evaluated at $x$. This is a system of ordinary differential equations which can be solved in closed form for all $t \in \mathbb{R}$. The result, which is deduced in the Appendix \ref{appEvolution}, and which we recast here for completeness, follows from first obtaining the linear system for $E_i(t)=f_\infty(X_i,t)-Y_i$, then the evolution equation \ref{def_fInifinite first}, yields
		\begin{align*}
			\dot{E_i} & = -\sum_{l=1}^N \mathrm{NTK}^\infty_{x_i X_l} E_l
		\end{align*}
		whose solution is $E(t)=e^{-\mathrm{NTK}^\infty t } E(0)$ and thus	
		\begin{equation}\label{eq:evolution f infinity at training points}
			f_{\infty}(X_i,t) = Y_i - \sum_{j=1}^N \left( e^{-\mathrm{NTK}^\infty t} \right)_{ij} (f_{\infty}(X_j,0)-Y_j), 
		\end{equation}
		where $\left( e^{-\mathrm{NTK}^\infty t} \right)_{ij}$ denotes the $(i,j)$ entry of the matrix $e^{-\mathrm{NTK}^\infty t} $.

		We now use this to compute the evolution of the evaluation of $f_\infty$ at a general $x \in \mathbb{R}$. As we shall see, and is already implicit in \ref{def_fInifinite first}, its evolution is completely determined by that of $f_\infty$ evaluated at the labels $\lbrace X_i \rbrace_{i=1}^N$. Indeed, using the $\lbrace E_i(t) \rbrace_{i=1}^N$, the evolution equation \ref{def_fInifinite first} can be recast as 
		\begin{align*}
			\partial_t f_{\infty}(x,t) & = - \sum_{l=1}^N \mathrm{NTK}^\infty_{x_{N+1} X_l} E_l(t).
		\end{align*}
		Using the fact that $(\mathrm{NTK}^\infty)_{ij}:=\mathrm{NTK}^\infty_{X_i X_j}$ is invertible \cite{NTK>0}, we can write $E_l(t)=- \sum_{m=1}^N (\mathrm{NTK}^\infty)^{-1}_{lm} \dot{E}_m$ which inserting above gives 
		\begin{align*}
			\partial_t f_\infty(x,t) & =  \sum_{l,m=1}^N \mathrm{NTK}^\infty_{x_{N+1} X_l} (\mathrm{NTK}^\infty)^{-1}_{lm} \dot{E}_m(t).
		\end{align*}
		We may then use the fundamental theorem of calculus to obtain
		\begin{align*}
			f_\infty(x,t) & =f_\infty(x,0) + \int_0^t \sum_{l,m=1}^N \mathrm{NTK}^\infty_{x X_l} (\mathrm{NTK}^\infty)^{-1}_{lm} \dot{E}_m(s) ds \\
			& = f_\infty(x,0) +  \sum_{l,m=1}^N \mathrm{NTK}^\infty_{x X_l} (\mathrm{NTK}^\infty)^{-1}_{lm} (E_m(t) - E_m(0)) \\
			& = f_\infty(x,0) +  \sum_{l,m=1}^N \mathrm{NTK}^\infty_{x X_l} (\mathrm{NTK}^\infty)^{-1}_{lm} (f_\infty(X_m,t) - f_\infty(X_m,0)) ,
		\end{align*}
		and using \ref{eq:evolution f infinity at training points}, we find
		\begin{equation}\label{eq:Infinite width network}
			f_\infty(x,t)  = f_\infty(x,0) +  \sum_{l,m,p=1}^N \mathrm{NTK}^\infty_{x X_l} (\mathrm{NTK}^\infty)^{-1}_{lm} (e^{-\mathrm{NTK}^\infty t} -1)_{mp} (f_\infty(X_p,0)-Y_p) ,
		\end{equation}
		which is the evolution, through training, of an infinite width network initialized as $f_\infty(x,0)$.
		
		\begin{remark}
			In particular, we find that
			$$\lim_{t \to + \infty} f_\infty(x,t) = f_\infty(x,0) - \sum_{l,m=1}^N \mathrm{NTK}^\infty_{x_{N+1} X_l} (\mathrm{NTK}^\infty)^{-1}_{lm} (f_\infty(X_m,0)-Y_m),$$
			which should be compared with  kernel regression.
		\end{remark}

		\subsection{Evolution and convergence of outputs for wide, but finite, networks}
		\label{secEvolutionOutputs}
		
		In this section we present some results concerning the the evolution of the outputs of a width $n$ network. As was already discussed this evolution is determined by~\eqref{eq:Evolution of y}, and so it is intimately related to, the evolution of $\mathrm{NTK}^{(n)}(\theta(t))$. Moreover, when $n\rightarrow\infty$, this finite width NTK converges, almost surely, to $\mathrm{NTK}^{\infty}$, which is constant in parameter space. Therefore, in the infinite width limit, the evolution of outputs is determined by a linear flow. We will explore this fact (see Appendix~\ref{appEvolution} for the proofs) to control the non-linear learning dynamics of finite width networks in terms of linear infinite width dynamics. This is achieved via a bootstrap/continuity argument.

		We are now able to state our main results concerning the evolution of outputs: 
		\begin{theorem}
			\label{thmEvolutionComplete}
			Consider a sequence $\theta_{0}=(W^{(2)}_{0,j},b^{(2)}_{0,\mu},W^{(1)}_{0,\mu},b^{(1)}_{0,\mu})_{\mu\in\mathbb{N}}$, of 
			independent and identically distributed tuples of random variables with finite third order momenta. Consider also 
			a non-polynomial activation function satisfying 
			\begin{equation}
				\label{locBounded0}
				|\sigma(z)|\leq C_{\sigma}(|z|+1) \;\;\text{ and }\;\;|\sigma'(z)|+|\sigma''(z)|\leq C_{\sigma}\;,
			\end{equation}
			for some $C_{\sigma}>0$, and all $z\in\mathbb{R}$. 
			For any collection of $4n$ parameters $\theta=(W^{(2)}_{i},b^{(2)}_{i},W^{(1)}_{i},b^{(1)}_{i})_{i=1,\dots, n}$, let  
			$$f^{(n)}_{\theta} (x) = \frac{1}{\sqrt{n}} \sum_{j=1}^{n} \left( W^{(2)}_{j} \ \sigma \left(  W^{(1)}_{j} x +b_{j}^{(1)} \right) + b^{(2)}_{j} \right)$$
			be the corresponding width $n$ neural network. 
			
			
			Assume also that we are given a training set $\{X_i,Y_i\}_{i=1}^N$, with $X_i\neq X_j$, for $i\neq j$, and let $\lambda_{\infty}>0$ be the minimum eigenvalue of the matrix $\mathrm{NTK}_{X_{i}X_{j}}^\infty$.~\footnote{The positivity of $\lambda_{\infty}$ follows from the results in~\cite{NTK>0}.}
			
			
			Let $t\mapsto \theta(t)$ be the parameter evolution during training, initialized with $\theta(0)=\theta_{0}$, and training performed by gradient descent~\eqref{gradientDescent} with mean squared loss~\eqref{msl}.     

			For any given $\delta>0$, there exists $m>0$ and  $C_{\delta}>0$ such that, if $n\geq m$, then with probability $1-\delta$ (at the initialization of the parameters): we have, for all $t\geq0$, 
			
			\begin{enumerate}[(i)]
				\item {\em Difference between the finite width NTK and its infinite limit:}
				\begin{align}
					|\mathrm{NTK}_{zw}^{(n)}(\theta(t))-\mathrm{NTK}_{zw}^\infty|
					&\leq 
					\frac{C_{\delta}\;(\log n)^{1/2+}}{\sqrt{n}} + 
					\sup_{k=1,\dots,N}|f^{(n)}_{\theta(0)}(X_k)-Y_k|\frac{NC}{\sqrt{n}(\lambda_{\infty}-\epsilon)}\;,
				\end{align}
				for all $z,w \in [a,b]$ and where $\epsilon>0$ can be made arbitrarily small by increasing $n$.
				\item 
				{\em Exponential convergence of training outputs (of the finite width neural network) to their labels:}
				\begin{equation}
					\label{decayError0}
					|f^{(n)}_{\theta(t)}(X_i)-Y_i|\leq C\sup_{l=1,\dots,N}|f^{(n)}_{\theta(0)}(X_l)-Y_l|  e^{-(\lambda_{\infty}-\epsilon )t}\;, 
				\end{equation}
				where $\epsilon>0$ can be made arbitrarily small by increasing $n$.

				\item {\em Stability of training:} 
				let $f_{\infty}$ be the function associated to the infinite width Kernel (as defined by~\eqref{eq:Infinite width network}), 
				then, for all $x \in [a,b]$, 
				\begin{eqnarray}
					|f^{(n)}_{\theta(t)}(x)-f_{\infty}(t,x)|&\leq &   \left(1+\sup_{i=1,\ldots,N}|\mathrm{NTK}_{xX_{i}}^\infty|\right)
					\frac{C E_0}{\lambda_{\infty}-\epsilon}  \frac{(\log(n))^{1/2+}}{\sqrt{n}}
					\;,
				\end{eqnarray}
				where $E_0:=\sup_{k=1,\dots,N}|f^{(n)}_{\theta(0)}(X_k)-Y_k|$ and $\epsilon>0$, as before\;.
			\end{enumerate}
		\end{theorem}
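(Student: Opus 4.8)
The plan is to establish (i)--(iii) simultaneously by a continuity/bootstrap argument, since the finite-width NTK deviation, the decay of the training error, and the difference between the finite- and infinite-width outputs are coupled and must be controlled ``all at once''. First I would introduce the error vector $E(t)=(E_1(t),\dots,E_N(t))$ with $E_i(t):=f^{(n)}_{\theta(t)}(X_i)-Y_i$, which by \eqref{eq:Evolution of y} obeys the linear ODE $\dot E_i=-\sum_{l}M_{il}(t)\,E_l$ with the \emph{time-dependent} Gram matrix $M(t):=(\mathrm{NTK}^{(n)}_{X_iX_j}(\theta(t)))_{ij}$, and I would set $E_0:=\sup_k|E_k(0)|$ and fix a small $\epsilon>0$. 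The probabilistic input, all extracted on a single event of probability $1-\delta$ for $n$ large, is: (a) the quantitative law of large numbers \eqref{NTKconv2}, upgraded to hold \emph{uniformly} over $z,w\in[a,b]$, so that $\sup_{z,w}|\mathrm{NTK}^{(n)}_{zw}(\theta(0))-\mathrm{NTK}^{\infty}_{zw}|\le C_\delta(\log n)^{1/2+}/\sqrt n$; and (b) $O(1)$ bounds for the empirical averages over the $n$ perceptrons of the local Lipschitz constants, on $[a,b]$, of a single-perceptron term of $f^{(n)}_\theta$ and of $\mathrm{NTK}^{(n)}$ --- valid by the strong law under the finite-moment hypotheses. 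Since $M$ is symmetric positive semidefinite and $\mathrm{NTK}^{\infty}$ has least eigenvalue $\lambda_\infty>0$ (\cite{NTK>0}), (a) and Weyl's inequality make the least eigenvalue of $M(0)$ at least $\lambda_\infty-\epsilon$ for $n$ large.

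Next I would run the bootstrap. On an interval $[0,T)$ I would make the assumptions $|E(t)|\le 2\sqrt N\,E_0\,e^{-(\lambda_\infty-\epsilon)t}$ and $\sup_{z,w}|\mathrm{NTK}^{(n)}_{zw}(\theta(t))-\mathrm{NTK}^{\infty}_{zw}|\le 2C_\delta(\log n)^{1/2+}/\sqrt n+\tfrac12 K\,E_0/\sqrt n$ for a constant $K$ to be fixed; both hold with strict inequality at $t=0$. Under them: every partial derivative $\partial_{\theta_j}f^{(n)}_\theta$ carries the $n^{-1/2}$ prefactor and is bounded on $[a,b]$ once the (random, finite) weights are given and $|\sigma'|\le C_\sigma$, so $|\dot\theta_j(t)|=|\partial_{\theta_j}\mathcal{L}^{(n)}_Z|\le Cn^{-1/2}\sum_l|E_l(t)|$; integrating against the exponential bound on $|E|$ gives the \emph{time-uniform} displacement $|\theta_j(t)-\theta_j(0)|\le C'NE_0/(\sqrt n(\lambda_\infty-\epsilon))$. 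Writing $\mathrm{NTK}^{(n)}_{zw}=\tfrac1n\sum_m K_m(\hat\theta_m;z,w)$, where $K_m$ depends only on the four parameters of perceptron $m$, assumption (b) then turns this into $\sup_{z,w}|\mathrm{NTK}^{(n)}_{zw}(\theta(t))-\mathrm{NTK}^{(n)}_{zw}(\theta(0))|\le C''NE_0/(\sqrt n(\lambda_\infty-\epsilon))$; combined with (a) and a suitable choice of $K$ this recovers the NTK assumption with strictly better constants. In particular $M(t)$ keeps least eigenvalue $\ge\lambda_\infty-\epsilon$, so $\tfrac{d}{dt}|E|^2=-2E^{T}M(t)E\le-2(\lambda_\infty-\epsilon)|E|^2$ and Gr\"onwall improves the first assumption to $|E(t)|\le\sqrt N\,E_0\,e^{-(\lambda_\infty-\epsilon)t}$. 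A standard open--closed argument then forces $T=+\infty$ (and the a priori bound on $\theta$ also yields global existence of the gradient flow).

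Then (i)--(iii) follow. Part (ii) is the bootstrapped componentwise decay of $|E|$; part (i) is the bootstrapped NTK bound after splitting the displacement contribution exactly as $\sup_k|f^{(n)}_{\theta(0)}(X_k)-Y_k|\cdot NC/(\sqrt n(\lambda_\infty-\epsilon))$. For (iii) I would put $D(x,t):=f^{(n)}_{\theta(t)}(x)-f_\infty(t,x)$, which vanishes at $t=0$ and, using $f^{(n)}_{\theta(t)}(X_l)-f_\infty(X_l,t)=D(X_l,t)$, satisfies $\partial_t D(x,t)=-\sum_l(\mathrm{NTK}^{(n)}_{xX_l}(\theta(t))-\mathrm{NTK}^{\infty}_{xX_l})E_l(t)-\sum_l\mathrm{NTK}^{\infty}_{xX_l}D(X_l,t)$. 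Taking $x=X_i$ gives $\dot D_X=-\mathrm{NTK}^{\infty}D_X+g(t)$ with $D_X(0)=0$ and $|g(t)|\le C(\log n)^{1/2+}n^{-1/2}E_0\,e^{-(\lambda_\infty-\epsilon)t}$ by (i) and (ii); Duhamel's formula together with $\mathrm{NTK}^{\infty}\ge\lambda_\infty$ yields $|D(X_i,t)|\le \tfrac{CE_0}{\lambda_\infty-\epsilon}(\log n)^{1/2+}n^{-1/2}e^{-(\lambda_\infty-\epsilon)t}$ uniformly in $t$. Feeding this and (i), (ii) into the equation for general $x\in[a,b]$ and integrating from $0$ produces the bound of (iii) with the stated factor $1+\sup_i|\mathrm{NTK}^{\infty}_{xX_i}|$.

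The hard part will be making the bootstrap close with \emph{exactly} the advertised powers of $n$ and $\log n$, uniformly for all $t\ge0$, in this non-ReLU and non-compactly-supported setting. Concretely: (1) upgrading the pointwise a.s. rate \eqref{NTKconv2} to a bound uniform over $z,w\in[a,b]$ and quantitative in the failure probability --- via a net of $[a,b]$, a union bound, and concentration for the $n$-fold averages --- is delicate because the summands involve the unbounded weights $W^{(1)},W^{(2)},b^{(1)}$ with only low-order moments assumed, so one must truncate and/or invoke the Marcinkiewicz--Zygmund / Durrett strong-law-with-rate input with care; and (2) controlling, with probability $1-\delta$, the empirical averages of the (random, a priori unbounded) local Lipschitz constants of single-perceptron contributions to $f^{(n)}_\theta$ and $\mathrm{NTK}^{(n)}$ on $[a,b]$ is where the linear growth $|\sigma(z)|\le C_\sigma(|z|+1)$ and $|\sigma'|+|\sigma''|\le C_\sigma$ from \eqref{locBounded0} are essential to keep every constant $O(1)$ rather than growing with $n$. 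Everything else reduces to Gr\"onwall, Duhamel, and the open--closed continuity argument.
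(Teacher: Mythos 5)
Your proposal follows essentially the same bootstrap/continuity strategy as the paper's Appendix~\ref{appEvolution}: control the NTK deviation, the exponential decay of the training error, and the parameter displacement on a maximal time interval, show for $n$ large that the NTK bound closes with strictly better constants so the interval is all of $[0,\infty)$, and then apply Duhamel against the constant kernel $\mathrm{NTK}^\infty$ for part~(iii). The one place you are imprecise is in asserting that the partial derivatives $\partial_{\theta_j} f^{(n)}_\theta$ are ``bounded'' on $[a,b]$ --- they in fact grow linearly in $\|\theta_\mu\|_\infty$ --- and the Gr\"onwall argument the paper runs on $\|\theta_\mu(t)\|_\infty+1$ together with the law-of-large-numbers control on $\tfrac{1}{n}\sum_{\mu}(\|\theta_\mu(0)\|_\infty+1)^3$ (the third-moment hypothesis) is exactly what your assumption~(b) must deliver; you gesture at this correctly, so the argument is the paper's.
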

		
		An immediate corollary of this result is the following.
		
		\begin{corollary}[Exponential convergence of the network to its limit]\label{cor:exponential convergence}
			In the conditions of Theorem \ref{thmEvolutionComplete}, there is a positive constant $C$ such that for all $x \in  \mathbb{R}$, we have
			\begin{equation}\label{eq:exponential convergence}
				|\lim_{s \to + \infty} f^{(n)}_{\theta(s)}(x) - f^{(n)}_{\theta(t)}(x)|   \leq  \frac{C\sup_{l=1,\dots,N}|f^{(n)}_{\theta(0)}(X_l)-Y_l|}{\lambda_\infty - \epsilon}  e^{-(\lambda_{\infty}-\epsilon )t}, 
			\end{equation}
		\end{corollary}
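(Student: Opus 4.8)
\emph{Proof proposal.} The plan is to integrate the output evolution equation~\eqref{eq:Evolution of y} in time and feed in the estimates already furnished by Theorem~\ref{thmEvolutionComplete}. One works on the probability-$(1-\delta)$ event on which the conclusions of that theorem hold, and fixes $x$. Along the gradient flow one has
$$\frac{d}{dt}f^{(n)}_{\theta(t)}(x) = \sum_{l=1}^N \mathrm{NTK}^{(n)}_{xX_l}(\theta(t))\bigl(Y_l - f^{(n)}_{\theta(t)}(X_l)\bigr),$$
so the first step is to bound the two factors on the right. Part~(i) of Theorem~\ref{thmEvolutionComplete} controls $\mathrm{NTK}^{(n)}_{xX_l}(\theta(t))$ by the time-independent quantity $\mathrm{NTK}^\infty_{xX_l}$ up to an error that is bounded uniformly in $t\geq0$; in particular there is a constant $M$, independent of $t$, with $|\mathrm{NTK}^{(n)}_{xX_l}(\theta(t))|\leq M$ for all $l$ and all $t\geq0$. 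Part~(ii) gives the exponential decay $|Y_l - f^{(n)}_{\theta(t)}(X_l)|\leq C E_0\, e^{-(\lambda_\infty-\epsilon)t}$ with $E_0=\sup_k|f^{(n)}_{\theta(0)}(X_k)-Y_k|$. Multiplying the two bounds yields $\bigl|\tfrac{d}{dt}f^{(n)}_{\theta(t)}(x)\bigr|\leq NMC\, E_0\, e^{-(\lambda_\infty-\epsilon)t}$.

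The second step is a soft analysis conclusion: since this bound is integrable on $[0,+\infty)$, the curve $t\mapsto f^{(n)}_{\theta(t)}(x)$ satisfies the Cauchy criterion as $t\to+\infty$, so $\lim_{s\to+\infty}f^{(n)}_{\theta(s)}(x)$ exists, and by the fundamental theorem of calculus
$$\lim_{s\to+\infty}f^{(n)}_{\theta(s)}(x)-f^{(n)}_{\theta(t)}(x)=\int_t^{+\infty}\frac{d}{ds}f^{(n)}_{\theta(s)}(x)\,ds.$$
Bounding the integrand by the exponential estimate above and integrating from $t$ to $+\infty$ gives
$$\Bigl|\lim_{s\to+\infty}f^{(n)}_{\theta(s)}(x)-f^{(n)}_{\theta(t)}(x)\Bigr|\leq\frac{NMC\, E_0}{\lambda_\infty-\epsilon}e^{-(\lambda_\infty-\epsilon)t},$$
which is the asserted estimate once the constant is relabelled to absorb the factor $NM$ (note $E_0$ is precisely the $\sup_{l}|f^{(n)}_{\theta(0)}(X_l)-Y_l|$ appearing in the statement).

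I do not expect a genuine obstacle here, since the hard work is already encapsulated in Theorem~\ref{thmEvolutionComplete}; the argument is essentially ``integrate~(ii) against the uniformly bounded kernel from~(i)''. The two points requiring a little care are that the NTK bound in part~(i) is genuinely uniform in $t\geq0$ — which it is, the right-hand side there not depending on $t$ — and the role of $x$: for $x$ in the compact interval $[a,b]$ the constant $M$ can be taken uniform because $x\mapsto\mathrm{NTK}^\infty_{xX_l}$ is bounded there, whereas for general $x\in\mathbb{R}$ the linear-growth hypothesis~\eqref{locBounded0} on $\sigma$ forces $\mathrm{NTK}^\infty_{xX_l}$ to grow at most polynomially in $x$, so the constant $C$ in the statement is to be understood as depending on $x$, equivalently as uniform on compact sets of inputs.
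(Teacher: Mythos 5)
Your proof is correct and follows essentially the same route as the paper: write $\lim_{s\to\infty}f^{(n)}_{\theta(s)}(x)-f^{(n)}_{\theta(t)}(x)=\int_t^\infty\partial_s f^{(n)}_{\theta(s)}(x)\,ds$, bound the integrand by a time-uniform NTK bound times the exponentially decaying training error from part~(ii), and integrate; the paper simply absorbs the NTK bound into the constant $C$ ``by possibly changing the constant,'' whereas you make this step explicit via part~(i). Your closing caveat is a genuine (minor) improvement on the paper's exposition: the theorem's NTK bound is stated only for inputs in $[a,b]$, so the corollary's claim ``for all $x\in\mathbb{R}$'' really does need the observation that $C$ may depend on $x$ (or be taken uniform only on compacts).
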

		\begin{proof}
			Equation \ref{decayError0} shows that for sufficiently large $n$, and $i=1, \ldots N$, we have 
			$$|f^{(n)}_{\theta(t)}(X_l)-Y_l|\leq C\sup_{k=1,\dots,N}|f^{(n)}_{\theta(0)}(X_k)-Y_k|  e^{-(\lambda_{\infty}-\epsilon )t}$$
			We can use this estimate to further compute the difference between $f^{(n)}_{\theta(t)}(x)$ and its limit. This requires using the evolution equation for $f_{\theta(t)}^{(n)}(x)$ which was deduced in equation \ref{eq:Evolution of y} and reads 
			$$\frac{\partial}{\partial t} \left( f_{\theta(t)}^{(n)}(x) \right) = - \sum_{i=1}^N \mathrm{NTK}_{x X_i} (f_{\theta(t)}^{(n)}(X_i) - Y_i).$$ 
			Then, by possibly changing the constant $C$, gives
			\begin{align*}
				|\lim_{s \to + \infty} f^{(n)}_{\theta(s)}(x) - f^{(n)}_{\theta(t)}(x)| & = |\int_t^{+\infty} \partial_t f^{(n)}_{\theta(s)}(x) ds| \\
				& \leq C\sup_{l=1,\dots,N}|f^{(n)}_{\theta(0)}(X_l)-Y_l|  \int_t^{+\infty} e^{-(\lambda_{\infty}-\epsilon )s} ds \\
				& \leq  \frac{C\sup_{l=1,\dots,N}|f^{(n)}_{\theta(0)}(X_l)-Y_l|}{\lambda_\infty - \epsilon}  e^{-(\lambda_{\infty}-\epsilon )t}, 
			\end{align*}
			which shows that $f^{(n)}_{\theta(t)}(x)$ exponentially converges to its limit as promised in the statement. 
		\end{proof}
		
		
		
		

		Furthermore, we point out that there are interesting results in the realm of deep neural networks which guarantee that the loss function converges to zero along training. See for instance \cite{Du} and also \cite{Chizat}.

		\section{Approximating the evolution during training}\label{sec:Evolution during training}

		\subsection{Evolution by the flow of a vector field}
		
		As before, given $x_1 , \ldots , x_k \in \mathbb{R}$ we consider the evaluation of a network in these points $(f_\theta(x_1), \ldots , f_\theta(x_k)) \in \mathbb{R}^k$. For simplicity and clarity, we shall parameterize the space of outputs, $\mathbb{R}^k$, using coordinates $(y_1, \ldots , y_k)$. Furthermore, when appropriate, we shall simplify notation by using $x=(x_1, \ldots , x_k)$ and $y=(y_1, \ldots , y_k)$. 
		
		Formally speaking it is easy to say how $y(t)$ evolves during training. Indeed, denote by $\lbrace \phi_t^{(n)} \rbrace_{t \geq 0}$ the flow of $-\nabla_\theta C_\theta$ in $\mathbb{R}^{4n}$, i.e.  $\lbrace \theta(t)= \phi_t^{(n)}(\theta) \rbrace_{t \geq 0}$ solves $\dot{\theta}=-\nabla_\theta C_\theta$, then $y(t)=f_{\phi_t^{(n)} ( \theta)}$ evolves through \ref{eq:Evolution of y}. 
		Consider the induced measure at initialization $\mathbb{P}_{x_1, \ldots ,x_k}^{(n)}$, as defined in \ref{eq:Finite dimensional pushforward measures}. Unfortunately, though correct, this is not very helpful in explicitly understanding how the probability distribution 
		\begin{equation}\label{eq:measure along training for finite n}
			\mathbb{P}^{(n)}_{x_1, \ldots , x_k}(t) :=  \left(  {\mathcal F}^{(n)}_{x_1, \ldots , x_k} \circ \phi_t \right)_* (Q_n),
		\end{equation}  
		evolves with time. For this it would be very convenient if $\lbrace y(t) \rbrace_{t \geq 0}$ was evolving by the flow of a vector field obtained by the pushforward of $- \nabla_\theta C_\theta$ via ${\mathcal F}^{(n)}_{x_1, \ldots , x_k}: \mathbb{R}^{4n} \to \mathbb{R}^k$. However, given that ${\mathcal F}^{(n)}_{x_1, \ldots , x_k}$ is not in general injective there is no way to compatibly define such a pushforward vector field.
		
		One possible way to overcome this difficulty, at least formally, is to instead consider the graph of ${\mathcal F}^{(n)}_{x_1, \ldots , x_k}$ and define a vector field there
		$$\mathrm{graph}( {\mathcal F}^{(n)}_{x_1, \ldots , x_k} ) = \lbrace  (\theta, y) = ( (\theta_1 , \ldots ,\theta_P), (y_1, \ldots , y_k) ) \in \mathbb{R}^P \times \mathbb{R}^k \ | \ y_i=f_\theta(x_i) \ \text{for $i=1, \ldots , k$} \rbrace .$$ 
		Then, the map
		$$\varphi_{x_1 , \ldots , x_k} : \mathbb{R}^{4n} \to \mathrm{graph}( {\mathcal F}^{(n)}_{x_1, \ldots , x_k} )$$
		given by
		$$\varphi_{x_1 , \ldots , x_k}(\theta_1 , \ldots , \theta_{4n})= ((\theta_1 , \ldots , \theta_{4n}), (f_\theta(x_1), \ldots , f_\theta(x_k) ))$$
		is a diffeomorphism and so we can define the vector field $(\varphi_{x_1 , \ldots , x_k})_* (- \nabla_\theta C_\theta)$. By writing $- \nabla_\theta C_\theta = - \sum_j \frac{\partial C_\theta}{\partial \theta_j} \frac{\partial}{\partial \theta_j}$ it follows from equation \ref{eq:Evolution of y} that
		$$(\varphi_{x_1 , \ldots , x_k})_* (- \nabla_\theta C_\theta) = - \sum_{j=1}^{4n} \frac{\partial C_\theta}{\partial \theta_j} \frac{\partial}{\partial \theta_j} + \sum_{i=1}^k \left( \sum_{l=1}^N \mathrm{NTK}_{x_i X_l}^{(n)}(\theta) ( Y_l - f_\theta(X_l) )  \right) \frac{\partial}{\partial y_i} . $$
		Then, the measure $(\varphi_{x_1 , \ldots , x_k} \circ \phi_t)_*(Q_n)$ in $\mathrm{graph}({\mathcal F}^{(n)}_{x_1, \ldots , x_k})$ evolves through the flow of the vector field $V$ and we can obtain $Q^{(n)}_{x_1, \ldots , x_k}(t)$ simplify pushing it forward along the projection on the second factor $\pi: \mathrm{graph}(\psi_{x_1 , \ldots , x_k}) \subset \mathbb{R}^{4n} \times \mathbb{R}^k \to \mathbb{R}^k$.
		
		Then, $f^{(n)}_\theta(x_i)=y_i$ and so
		$$(\varphi_{x_1 , \ldots , x_k})_* (- \nabla_\theta C_\theta) = - \sum_{j=1}^P \frac{\partial C_\theta}{\partial \theta_j} \frac{\partial}{\partial \theta_j} + \sum_{i=1}^k \left( \sum_{l=1}^N \mathrm{NTK}_{x_i X_l}^{(n)}(\theta) ( Y_l - y_l )  \right) \frac{\partial}{\partial y_i} . $$
		\begin{remark}\label{rem: Vector Field in the Inifinite width limit}
			As already discussed, in this situation, when $n \to + \infty$, $\mathrm{NTK}_{z_1 z_2}^{(n)}(\theta) \to \mathrm{NTK}_{z_1 z_2}^\infty$ with the limit being a.s. independent of $\theta$, and so we can indeed define a vector field on $\mathbb{R}^k$ by
			$$V=\sum_{i=1}^k \left( \sum_{l=1}^N \mathrm{NTK}_{x_i X_l}^\infty \left( Y_l - y_l \right)  \right) \frac{\partial}{\partial y_i}. $$
			In practice, given the uncertainties involved it turns out that for large $n$ it may be a good idea to use $V$ to compute the first order deviations in the evolution. This will be done in the next section.
		\end{remark}

		\subsection{Evolution in the infinite width case}

		As noted in remark \ref{rem: Vector Field in the Inifinite width limit}, in the infinite width limit, we can define a vector field $V$ on the space of outputs, $\mathbb{R}^k$ with coordinates $(y^1, \ldots , y^k)$, whose $i$-th entry is
		$$V_i:=\sum_{l=1}^N \mathrm{NTK}_{x_i X_l}^\infty \left( Y_l - f_\theta(X_l) \right).$$
		This has the property that, in this infinite width limit, training corresponds to the flow of $V$. As before, we denote its flow by $\lbrace \Phi_t^{\infty} \rbrace_{t \geq 0}$ and define the density, $\nu_{x_1, \ldots, x_k}^\infty(t,y_1 , \ldots, y_k)$ of the measure 
		\begin{equation}\label{eq:measure along training for infinite n}
			\mathbb{P}_{x_1, \ldots, x_k}^\infty (t):= \lim_{n \to \infty } \mathbb{P}_{x_1, \ldots, x_k}^{(n)} (t),
		\end{equation}
		by
		$$\nu_{x_1, \ldots, x_k}^\infty(t,y_1 , \ldots, y_k) dy^1 \ldots dy^k = d\mathbb{P}_{x_1, \ldots, x_k}^\infty (t).$$
		As for the evolving densities, they can be equivalently defined using the pullback via $\Phi_{-t}^{\infty}$, i.e.
		$$\nu^\infty_{x_1 , \ldots , x_k}(t,y_1, \ldots , y_k) dy^1 \ldots dy^k = (\Phi_{-t}^{\infty})^*(\nu^\infty_{x_1 , \ldots , x_k}(0,y_1, \ldots , y_k) dy^1 \ldots dy^k).$$ 
		Using the shorthand notation $x=(x_1,\ldots ,x_k)$ and $y=(y_1, \ldots , y_k)$, we can explicitly write 
		\begin{equation}\label{eq: Evolution of the density (general formula)}
			\nu^\infty_x(t,y) = \nu^\infty_x(0, \Phi_{-t}^{\infty}(y)) |\det (d \Phi_{-t}^{\infty})|,
		\end{equation}
		from which we find that $\nu^\infty_x(t,y)$ solves the continuity equation
		$$\partial_t \nu^\infty_x + \mathrm{div}_y (\nu^\infty_x V) =0,$$
		with the initial condition $\nu^\infty_x(0,y)$ given by equation \ref{eq:Gaussian Limit}. 
		
		\begin{remark}
			Sometimes, it is convenient to work with the Fourier transform, i.e. the characteristic function associated with $\nu^\infty_x(t,y)$. We denote this by
			$$Z^\infty_{\nu_x}(t,p) = \int_{\mathbb{R}^k} e^{i \sum_{j=1}^k p_j y_j} \nu^\infty_x(t,y)dy.$$
			Computing its evolution in time gives
			\begin{align*}
				\partial_t Z^\infty_{\nu_x}(t,p) & =  \int_{\mathbb{R}^k} e^{i \sum_{j=1}^k p_j y_j} \partial_t \nu^\infty_x(t,y)dy = - \int_{\mathbb{R}^k} e^{i \sum_{j=1}^k p_j y_j} \mathrm{div}_y (\nu^\infty_x V)  dy,
			\end{align*}
			and integrating by parts yields
			\begin{align}\nonumber
				\partial_t Z^\infty_{\nu_x}(t,p) & = - \int_{\mathbb{R}^k} \left[ \mathrm{div}_y ( e^{i \sum_{j=1}^k p_j y_j} \nu^\infty_x V) - i \sum_{l=1}^k p_l V_l e^{i \sum_{j=1}^k p_j y_j} \nu^\infty_x \right] dy \\ \label{eq:Evolution for Z_nu}
				& =  i \sum_{l=1}^k p_l \int_{\mathbb{R}^k}   V_l e^{i \sum_{j=1}^k p_j y_j} \nu^\infty_x  dy .
			\end{align}
			In many situations, this can be used to effectively compute the evolution of $Z^\infty_{\nu_x}(t,p)$.  Just for the sake of exemplifying this, consider for simplicity, that $V$ does not depend on $y$. Then, the above evolution equation turns into
			$$\partial_t Z_{\nu_x}(t,p) =  i \sum_{l=1}^k p_l V_l Z_{\nu_x}(t,p), $$
			which we can integrate to
			$$Z_{\nu_x}(t,p)= Z_{\nu_x}(0,p) \exp \left(  i \sum_{l=1}^k p_l \int_0^t V_l (s) ds \right).$$
			Hence, in order to find $\nu_x(t,y)$ we need only apply the inverse Fourier transform, yielding
			\begin{align*}
				\nu_x(t,y) & = \frac{1}{(2\pi)^k} \int_{\mathbb{R}^k} Z_{\nu_x}(t,p) e^{-i\sum_{j=1}^k y_j p_j} dp \\
				& = \frac{1}{(2\pi)^k} \int_{\mathbb{R}^k} Z_{\nu_x}(0,p) e^{-i\sum_{j=1}^k  p_j  \left( y_j -  \int_0^t V_j (s) ds \right) } dp \\
				& = \nu_x \left(0,y - \int_0^t V (s) ds \right) .
			\end{align*}
		\end{remark}
		
		We start be considering the important case when $k=N$ and $x_i=X_i$ for $i=1, \ldots , N$. In this situation, we can write $V$ as
		$$V = \sum_{i=1}^N \left( \sum_{l=1}^N \mathrm{NTK}_{X_i X_l}^\infty \left( Y_l - y_l \right) \right) \frac{\partial}{\partial y_i}. $$
		In this situation we can explicitly compute the flow $\lbrace \Phi_t^{\infty} \rbrace_{t \geq 0}$. Indeed, this was already done in equation \ref{eq:evolution f infinity at training points}, where $y_i$ is replaced by $f_{\infty}(X_i,t)$. The result is
		$$\Phi_t^{\infty}(y)=Y + e^{-\mathrm{NTK}^\infty t } (y-Y),$$
		and using the general formula for the evolution of the density, equation \ref{eq: Evolution of the density (general formula)}, we find that
		\begin{align}\nonumber
			\nu^\infty_x(t,y) & = \nu^\infty_x(0, \Phi_{-t}^{\infty}(y)) |\det (d \Phi_{-t}^{\infty})| \\ \label{eq:nu example k=N}
			& = \nu^\infty_x( 0 , Y + e^{\mathrm{NTK}^\infty t } (y-Y)) \det (  e^{\mathrm{NTK}^\infty t } )
		\end{align}
		
		\begin{remark}
			We can confirm this by solving \ref{eq:Evolution for Z_nu}. Indeed, we have
			\begin{align*}
				\partial_t Z_{\nu_x}^\infty(t,p) & =  i \sum_{i=1}^N p_i \int_{\mathbb{R}^k}   V_i e^{i \sum_{j=1}^k p_j y_j} \nu_x^\infty  dy \\
				& =  i \sum_{i=1}^N p_i \int_{\mathbb{R}^k} \left( \sum_{l=1}^N \mathrm{NTK}_{X_i X_l}^\infty \left( Y_l - y_l \right) \right) e^{i \sum_{j=1}^k p_j y_j} \nu_x^\infty  dy \\
				& =  i \sum_{i,l=1}^N p_i \ \mathrm{NTK}_{X_i X_l}^\infty  \left( Y_l Z_{\nu_x}^\infty(t,p) - \int_{\mathbb{R}^k}y_l \nu_x^\infty \ e^{i \sum_{j=1}^k p_j y_j}  dy \right) \\
				& =  i \sum_{i,l=1}^N p_i \ \mathrm{NTK}_{X_i X_l}^\infty  \left( Y_l Z_{\nu_x}^\infty(t,p) + i \partial_{p_l} Z_{\nu_x}^\infty(t,p)  \right) .
			\end{align*}
			We can rewrite this as $\left( \partial_t - i \sum_{i,l=1}^N p_i \ \mathrm{NTK}_{X_i X_l}^\infty  \left( Y_l  + i \partial_{p_l} \right)  \right) Z_{\nu_x}^\infty(t,p) = 0 $, or
			$$\left( \partial_t + \sum_{i,l=1}^N p_i \ \mathrm{NTK}_{X_i X_l}^\infty  \ \partial_{p_l} - i \sum_{i,l=1}^N p_i \ \mathrm{NTK}_{X_i X_l}^\infty  \ Y_l   \right) Z_{\nu_x}^\infty(t,p) = 0 .$$
			This equation can be solved using the method of characteristics. Indeed, let $T \in \mathbb{R} \mapsto P(T) \in \mathbb{R}^N$ be a path solving the equation
			$$\dot{P}_j(T)=\sum_{i=1}^N \mathrm{NTK}_{X_i X_j}^\infty P_i(T),$$
			i.e. $P(T)=\exp(\mathrm{NTK}^\infty T) P(0)$.
			Then, the function $U(T):=Z_{\nu_x}(T,P(T))$ evolves through

			\[
			\dot{U}(T)  = \partial_t Z_{\nu_x}^\infty (T,P(T)) + \sum_{j=1}^N \partial_{p_j}Z_{\nu_x}^\infty(T,P(T)) \dot{P_j}(T)
			\]
			then,
			\begin{align*}
				\dot{U}(T) & = - \sum_{i,l=1}^N P_i(T) \ \mathrm{NTK}_{X_i X_l}^\infty  \ \partial_{p_l} Z_{\nu_x}^\infty (T,P(T)) + i \sum_{i,l=1}^N P_i(T) \ \mathrm{NTK}_{X_i X_l}^\infty  Y_l  Z_{\nu_x}^\infty (T,P(T)) \ + \\
				\ \ \ \ & + \sum_{j=1}^N \partial_{p_j}Z_{\nu_x}^\infty (T,P(T)) \sum_{i=1}^N \mathrm{NTK}_{X_i X_j}^\infty P_i(T) \\
				& = - i \sum_{i,l=1}^N P_i(T) \ \mathrm{NTK}_{X_i X_l}^\infty  Y_l  Z_{\nu_x}^\infty (T,P(T)) ,
			\end{align*}
			and using $Z_{\nu_x}(T,P(T))=U(T)$ and $P(T)=e^{-\mathrm{NTK}^\infty T}P(0)$, the ODE above can be integrated to
			$$U(T)= U(0) \exp \left( i \int_0^T \sum_{i,j=1}^N (e^{\mathrm{NTK}^\infty s}P(0))_i \mathrm{NTK}^\infty_{X_i X_j} Y_j  ds \right).$$
			However, notice that
			$$\sum_{i,j=1}^N (e^{\mathrm{NTK}^\infty s}P(0))_i \mathrm{NTK}^\infty_{X_i X_j} Y_j =  \frac{d}{ds} \langle e^{\mathrm{NTK}^\infty s } P(0), Y \rangle_{\mathbb{R}^N} , $$
			and inserting above yields
			$$U(T)= U(0) \exp \left[ i  \left( \langle e^{\mathrm{NTK}^\infty T } P(0), Y \rangle_{\mathbb{R}^N} - \langle P(0), Y \rangle_{\mathbb{R}^N} \right) \right],$$
			or in other words
			$$Z_{\nu_x}^\infty (T,e^{\mathrm{NTK}^\infty T}P(0))= Z_{\nu_x}^\infty (0,P(0)) \exp \left[ i \left( \langle e^{\mathrm{NTK}^\infty T } P(0)  , Y \rangle_{\mathbb{R}^N} - \langle  P(0), Y \rangle_{\mathbb{R}^N} \right) \right].$$
			This can finally be rearranged to give the final expression of $Z_{\nu_x}^\infty (t,p)$ which is
			$$Z_{\nu_x}^\infty (t,p)= Z_{\nu_x}^\infty (0, e^{-\mathrm{NTK}^\infty t }p ) \exp \left[ i \left( \langle p , Y \rangle_{\mathbb{R}^N} - \langle e^{-\mathrm{NTK}^\infty t }p , Y \rangle_{\mathbb{R}^N} \right) \right].$$
			We can then go back to compute the density measure $\nu_x$ using the inverse Fourier transform
			\begin{align*}
				\nu_x^\infty (t,y) & = \frac{1}{(2\pi)^N} \int_{\mathbb{R}^N} Z_{\nu_x}^\infty (t,p) e^{-i \langle y , p \rangle } dp \\
				& = \frac{1}{(2\pi)^N} \int_{\mathbb{R}^N} Z_{\nu_x}^\infty (0, e^{-\mathrm{NTK}^\infty t }p ) \exp \left[ -i \left( \langle e^{-\mathrm{NTK}^\infty t }p , Y \rangle - \langle p, Y \rangle + \langle p, y \rangle \right) \right]  dp \\
				& = \frac{1}{(2\pi)^N} \int_{\mathbb{R}^N} Z_{\nu_x}^\infty (0, q ) \exp \left[ -i  \langle  q ,  Y - e^{\mathrm{NTK}^\infty t } (Y - y) \rangle \right] \det(e^{ \mathrm{NTK}^\infty t }) \ dq \\
				& = \det(e^{ \mathrm{NTK}^\infty t }) \ \nu_x(0, Y - e^{ \mathrm{NTK}^\infty t } (Y - y) ),
			\end{align*}
			which, as expected, is in agreement with \ref{eq:nu example k=N}.
		\end{remark}
		
		In this situation, when $k=N$ and $x_i=X_i$ for $i=1, \ldots , N$, in order to determine the evolving density $\nu^\infty_x(t,y)$ we must insert into \ref{eq:nu example k=N} the formula for the initial density given in equation \ref{eq:Gaussian Limit}.
		This follows from precomposing \ref{eq:Gaussian Limit} with $\Phi_{-t}^{\infty}$, which requires computing
		\begin{align*}
			(C_{ij}^{-1} y_i y_j ) \circ \Phi_{-t}^{\infty} = A(t)_{ij} (y_i + (e^{-\mathrm{NTK}^\infty t} -1)_{im} Y_m) (y_j + (e^{-\mathrm{NTK}^\infty t} -1)_{jp} Y_p) ,
		\end{align*}
		where
		$$A(t)_{ij}= \left(e^{\mathrm{NTK}^\infty t} \right)_{il} C^{-1}_{lp} \left(e^{\mathrm{NTK}^\infty t} \right)_{pj},$$ 
		and
		$$\frac{\det(e^{ \mathrm{NTK}^\infty t })}{\sqrt{\det(C)}}= \frac{1}{\sqrt{\det(A(t)^{-1})}}.$$
		Finally, inserting these into \ref{eq:nu example k=N}, gives
		\begin{equation}\label{eq:density k=N}
			\nu^\infty_x(t,y) = \frac{e^{- A(t)_{ij} (y_i + (e^{-\mathrm{NTK}^\infty t} -1)_{im} Y_m) (y_j + (e^{-\mathrm{NTK}^\infty t} -1)_{jp} Y_p)}}{ \sqrt{(2\pi)^{k} \det(A(t)^{-1})} } .
		\end{equation}
		
		\begin{remark}
			From the expression \ref{eq:density k=N}, and the fact that $\mathrm{NTK}^\infty$ is positive definite \cite{NTK>0}, we find that
			$$\lim_{t \to + \infty} \nu^\infty_x(t,y) dy= \prod_{i=1}^N \delta (y_i-Y_i) dy,$$
			weakly.
		\end{remark}
		
		We now consider the next situation, in which $k=N+1$ and $x_i=X_i$ for $i=1, \ldots N$ and $x_{N+1} \in \mathbb{R}$ an extra arbitrary point. The evolution of $y_{N+1}$, i.e. $(\Phi_t^{\infty}(y))_{N+1}=f_\infty(x,t)$, was already computed in \ref{eq:Infinite width network}. The result we found then, is that
		$$	(\Phi_t^{\infty}(y))_{N+1} = y_{N+1} +  \sum_{l,m,p=1}^N \mathrm{NTK}^\infty_{x_{N+1} X_l} (\mathrm{NTK}^\infty)^{-1}_{lm} (e^{-\mathrm{NTK}^\infty t} -1)_{mp} (y_p -Y_p) .$$
		In particular, notice that $\partial_{y^{N+1}} (\Phi_t^{\infty}(y))_{N+1}=1$ while $\partial_{y^{N+1}}  (\Phi_t^{\infty}(y))_{i}=0$ for $i \in 1, \ldots ,N$, thus showing that $\det(d\Phi_t^{\infty})=\det (e^{ \mathrm{NTK}^\infty t })$ as before. We find that for $y=(y_1, \ldots , y_N)$ and $Y=(Y_1, \ldots , Y_N)$
		\begin{align*}
			\nu^\infty_x(t,y , y_{N+1}) & = \nu^\infty_x \left( 0, Y + e^{\mathrm{NTK}^\infty t } (y-Y) , y_{N+1} +  \mathrm{NTK}^\infty_{x_{N+1} X_l} (\mathrm{NTK}^\infty)^{-1}_{lm} (e^{\mathrm{NTK}^\infty t} -1)_{mp} (y_p-Y_p) \right) \times \\
			& \ \ \ \times \det(e^{ \mathrm{NTK}^\infty t }),
		\end{align*}
		with a sum over the repeated indices $l,m,p$ from $1$ to $N$.

		Finally, we consider the general case with $k > N$. Given that in the previous case $x_{N+1}$ was arbitrary, the computations  that led to \ref{eq:Infinite width network} hold true for any $x \in \mathbb{R}$. From now on, we shall set $x_1 =X_1 , \ldots , x_{N}=X_N $, and let $ x_{N+1}  ,\ldots  x_{N+k} \in \mathbb{R}$ be arbitrary, and, in order to ease notation, we shall let $X=(X_1, \ldots , X_N)$ and $x=(x_{N+1}, \ldots , x_{N+k})$. Then, for $i=1, \ldots , N$ we have
		$$	(\Phi_t^{\infty}(y))_{i} = Y_i + \sum_{m=1}^N (e^{ - \mathrm{NTK}^\infty t })_{im} (y_m - Y_m), $$
		while for $i=N+1 , \ldots N+k$
		$$	(\Phi_t^{\infty}(y))_{i} = y_{i} +  \sum_{l,m,p=1}^N \mathrm{NTK}^\infty_{x_{i} X_l} (\mathrm{NTK}^\infty)^{-1}_{lm} (e^{-\mathrm{NTK}^\infty t} -1)_{mp} (y_p -Y_p) . $$
		In fact, one can readily check that the first equation is actually a particular case of the second and so we may as well only work with the second. To do this, we shall view $y\in \mathbb{R}^k$ as a column vector, $\pi_N (y)=(y_1, \ldots , y_N) \in \mathbb{R}^N$ its projection on the first $N$ coordinates, $\mathrm{NTK}^{\infty}_x$ as the $k \times N$ matrix with entries $\mathrm{NTK}^{\infty}_{x_i X_j}$ for $i \in \lbrace 1, \ldots , k \rbrace$ and $j \in \lbrace 1 , \ldots , N \rbrace$, and $\mathrm{NTK}^{\infty}$ as the $N \times N$ matrix with entries $\mathrm{NTK}^{\infty}_{X_i X_j}$ for $i,j \in \lbrace 1, \ldots , N \rbrace$ just as we have been doing up to here. This leads to the general evolution equation 
		\begin{equation}\label{eq:Phi infinity}
			\Phi_t^{\infty}(y) = y +   \mathrm{NTK}^\infty_{x} (\mathrm{NTK}^\infty)^{-1} (e^{-\mathrm{NTK}^\infty t} -1) (\pi_N(y) -Y) . 
		\end{equation}
		Using a similar argument as before, we have $\det(d \Phi_t^{\infty}) = \det (e^{\mathrm{NTK}^\infty t})$ and we can compute $\nu^{\infty}_{x}(t,y)  = \nu^\infty_x(0,\Phi_{-t}^{\infty}(y)) |\det (d \Phi_{-t}^{\infty})|$ in the same way. This leads to the following result.
		
		\begin{theorem}\label{prop: density for infinite with along training}
			Let $k \geq N$ and $x=(x_1, \ldots , x_k)$ with $x_1=X_1, \ldots , x_N=X_N$ the training points. Then, the density $\nu^{\infty}_{x}(t,y)$ of the measure $\mathbb{P}_x^{\infty}(t)$ induced on $\mathbb{R}^k$ satisfies
			$$\nu_x^{\infty}(t,y) = \frac{e^{- \langle \Phi^\infty_{-t}(y) , C^{-1} \Phi^\infty_{-t}(y) \rangle }}{\sqrt{(2\pi)^k \det(A(t)^{-1})  }},$$
			where
			$$	(\Phi_{-t}^{\infty}(y))_{i} = y_{i} +  \sum_{l,m,p=1}^N \mathrm{NTK}^\infty_{x_{i} X_l} (\mathrm{NTK}^\infty)^{-1}_{lm} (e^{\mathrm{NTK}^\infty t} -1)_{mp} (y_p -Y_p) , $$
			for $i=1, \ldots , k$, and
			$$A(t)_{ij}= \left(e^{\mathrm{NTK}^\infty t} \right)_{il} C^{-1}_{lp} \left(e^{\mathrm{NTK}^\infty t} \right)_{pj},$$
			for $i,j=1, \ldots , k$.
		\end{theorem}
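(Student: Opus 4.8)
The plan is to identify the measure $\mathbb{P}^{\infty}_{x}(t)$ with the pushforward, under the affine flow map $\Phi^{\infty}_{t}$, of the initial Gaussian $\mathbb{P}^{\infty}_{x}(0)=\mathbb{P}^{\mathbf{C}}_{x}$ (the measure whose density is \eqref{eq:Gaussian Limit}), and then to read off the density by an elementary change of variables; this is exactly the scheme already carried out for $k=N$ in \eqref{eq:nu example k=N} and for $k=N+1$ in the paragraph after it, only with more indices to carry. First I would record the geometry of $\Phi^{\infty}_{t}$. By \eqref{eq:Phi infinity}, $\Phi^{\infty}_{t}(y)=y+M_{t}(\pi_{N}(y)-Y)$ with $M_{t}:=\mathrm{NTK}^{\infty}_{x}(\mathrm{NTK}^{\infty})^{-1}(e^{-\mathrm{NTK}^{\infty}t}-1)$ a fixed $k\times N$ matrix, so $\Phi^{\infty}_{t}$ is an affine bijection of $\mathbb{R}^{k}$ with inverse $\Phi^{\infty}_{-t}$ and constant Jacobian. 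Splitting $\mathbb{R}^{k}=\mathbb{R}^{N}\times\mathbb{R}^{k-N}$, the columns $N+1,\dots,k$ of that Jacobian are standard basis vectors, so it is block lower triangular with diagonal blocks $I_{N}+M_{t}^{\mathrm{top}}$ and $I_{k-N}$; and since $x_{1}=X_{1},\dots,x_{N}=X_{N}$, the top $N$ rows of $\mathrm{NTK}^{\infty}_{x}$ coincide with $\mathrm{NTK}^{\infty}$, whence $M_{t}^{\mathrm{top}}=e^{-\mathrm{NTK}^{\infty}t}-1$ and the first diagonal block is $e^{-\mathrm{NTK}^{\infty}t}$. Therefore $|\det d\Phi^{\infty}_{-t}|=\det e^{\mathrm{NTK}^{\infty}t}$, the very factor appearing in \eqref{eq:nu example k=N}. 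I would also recall that $\Phi^{\infty}_{t}$ is the flow of the vector field $V$ of Remark~\ref{rem: Vector Field in the Inifinite width limit}, as established in the preceding subsection.

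Next I would prove $\mathbb{P}^{\infty}_{x}(t)=(\Phi^{\infty}_{t})_{*}\mathbb{P}^{\mathbf{C}}_{x}$. Under $Q_{n}$, the measure $\mathbb{P}^{(n)}_{x}(t)$ of \eqref{eq:measure along training for finite n} is the law of $u_{n}:=(f^{(n)}_{\theta(t)}(x_{1}),\dots,f^{(n)}_{\theta(t)}(x_{k}))$; set also $w_{n}:=(f^{(n)}_{\theta(0)}(x_{1}),\dots,f^{(n)}_{\theta(0)}(x_{k}))$ and $v_{n}:=(f_{\infty}(t,x_{1}),\dots,f_{\infty}(t,x_{k}))$. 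Because $x_{1}=X_{1},\dots,x_{N}=X_{N}$, formula \eqref{eq:Infinite width network} with $f_{\infty}(\cdot,0)=f^{(n)}_{\theta(0)}(\cdot)$, compared with \eqref{eq:Phi infinity}, gives $v_{n}=\Phi^{\infty}_{t}(w_{n})$ identically. By the central limit theorem of Section~\ref{ss:convergence to Gaussian}, $w_{n}\Rightarrow\mathbb{P}^{\mathbf{C}}_{x}$, hence $v_{n}=\Phi^{\infty}_{t}(w_{n})\Rightarrow(\Phi^{\infty}_{t})_{*}\mathbb{P}^{\mathbf{C}}_{x}$ by the continuous mapping theorem. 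Theorem~\ref{thmEvolutionComplete}(iii) shows that, for each fixed $\delta>0$ and all large $n$, $|u_{n}-v_{n}|\le C_{\delta}E_{0,n}(\log n)^{1/2+}/\sqrt{n}$ off a $Q_{n}$-event of probability at most $\delta$, where $E_{0,n}:=\sup_{k}|f^{(n)}_{\theta(0)}(X_{k})-Y_{k}|$; since $E_{0,n}$ is tight in $n$ (again by the central limit theorem), letting $n\to\infty$ and then $\delta\to0$ yields $u_{n}-v_{n}\to0$ in $Q_{n}$-probability. Slutsky's theorem then gives $u_{n}\Rightarrow(\Phi^{\infty}_{t})_{*}\mathbb{P}^{\mathbf{C}}_{x}$, i.e. $\mathbb{P}^{(n)}_{x}(t)\Rightarrow(\Phi^{\infty}_{t})_{*}\mathbb{P}^{\mathbf{C}}_{x}$, which by \eqref{eq:measure along training for infinite n} is $\mathbb{P}^{\infty}_{x}(t)$.

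Finally I would read off the density. Since $\Phi^{\infty}_{t}$ is an affine bijection with constant Jacobian, $(\Phi^{\infty}_{t})_{*}\mathbb{P}^{\mathbf{C}}_{x}$ has density $y\mapsto\nu^{\infty}_{x}(0,\Phi^{\infty}_{-t}(y))\,|\det d\Phi^{\infty}_{-t}|$. Substituting the Gaussian density \eqref{eq:Gaussian Limit} evaluated at $\Phi^{\infty}_{-t}(y)$ --- whose $i$-th component, obtained from \eqref{eq:Phi infinity} by $t\mapsto-t$, is exactly $y_{i}+\sum_{l,m,p=1}^{N}\mathrm{NTK}^{\infty}_{x_{i}X_{l}}(\mathrm{NTK}^{\infty})^{-1}_{lm}(e^{\mathrm{NTK}^{\infty}t}-1)_{mp}(y_{p}-Y_{p})$ --- inserting $|\det d\Phi^{\infty}_{-t}|=\det e^{\mathrm{NTK}^{\infty}t}$ from the first step, and using $A(t)=e^{\mathrm{NTK}^{\infty}t}C^{-1}e^{\mathrm{NTK}^{\infty}t}$ (with $e^{\mathrm{NTK}^{\infty}t}$ read as the $k\times k$ matrix acting as the identity on the last $k-N$ coordinates, so that $\det A(t)=(\det e^{\mathrm{NTK}^{\infty}t})^{2}/\det C$ and hence $\det e^{\mathrm{NTK}^{\infty}t}/\sqrt{\det C}=1/\sqrt{\det A(t)^{-1}}$) turns the expression into the asserted formula.

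The step I expect to be the main obstacle is the second one: converting the uniform-in-$t$, high-probability output bound of Theorem~\ref{thmEvolutionComplete} into the exact distributional identity $\mathbb{P}^{\infty}_{x}(t)=(\Phi^{\infty}_{t})_{*}\mathbb{P}^{\mathbf{C}}_{x}$ --- that is, legitimately passing the $n\to\infty$ limit through the $Q_{n}$-expectation and absorbing the contribution of both the exceptional set and the random prefactor $E_{0,n}$. The first and third steps are routine change-of-variables bookkeeping, structurally identical to the cases $k=N$ and $k=N+1$ already treated in the text.
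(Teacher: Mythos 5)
The proposal is correct, and the concluding change-of-variables computation is the same as the paper's; the genuine difference is in how the key measure identity $\mathbb{P}^\infty_x(t)=(\Phi^\infty_t)_*\mathbb{P}^{\mathbf{C}}_x$ is established. The paper simply declares that the evolving density can be ``equivalently defined'' by pulling back $\nu^\infty_x(0,\cdot)$ along $\Phi^\infty_{-t}$, verifies $\det d\Phi^\infty_{-t}=\det e^{\mathrm{NTK}^\infty t}$ explicitly only for $k=N+1$, and says ``similar argument'' for general $k$. You instead prove the identity as the weak $n\to\infty$ limit of $\mathbb{P}^{(n)}_x(t)$: comparing the trained outputs $u_n$ with $v_n=\Phi^\infty_t(w_n)$, using the central limit theorem to identify the limit law of the initialization $w_n$, the continuous mapping theorem to push it through $\Phi^\infty_t$, and the stability bound of Theorem~\ref{thmEvolutionComplete}(iii) together with tightness of $E_{0,n}$ and Slutsky's theorem to show $u_n-v_n\to 0$ in probability. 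That closes a step the paper leaves implicit, at the cost of leaning on the training estimates of Appendix~\ref{appEvolution} --- which is fine, since those are proved independently of this theorem, so no circularity arises. Your block-lower-triangular derivation of the Jacobian determinant is the clean uniform version of the $k=N+1$ argument in the text. One small wrinkle, inherited from the paper rather than introduced by you: the theorem statement (and already \eqref{eq:density k=N}) drops the factor of $\tfrac12$ present in the exponent of \eqref{eq:Gaussian Limit}, so the exponent should strictly read $-\tfrac12\langle\Phi^\infty_{-t}(y),C^{-1}\Phi^\infty_{-t}(y)\rangle$; your substitution is nevertheless the right computation, and the normalization via $\det A(t)=(\det e^{\mathrm{NTK}^\infty t})^2/\det C$ checks out under the block reading of $e^{\mathrm{NTK}^\infty t}$ as a $k\times k$ matrix.
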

		
		\begin{remark}
			If we are only interested on the distribution of $(y_{N+1}, \ldots , y_{N+k})$, this can be computed from that of $(y_1, \ldots , y_N , y_{N+1}, \ldots , y_{N+k})$ by integrating over the first $N$ variables, i.e.
			\begin{align*}
				\nu_{x}(t,y_{N+1}, \ldots , y_{N+k}) = \int_{\mathbb{R}^N}\nu_{X,x}(t,y_1, \ldots y_N , y_{N+1} , \ldots , y_{N+k}) dy_1 \ldots dy_{N}
			\end{align*}
		\end{remark}

		\subsection{Evolution for large, but finite, width}
		
		In this section we shall now estimate how the induced measure $\mathbb{P}_n = (\cF^{(n)})_\ast (Q^{(n)})$ in $C([a,b],\mathbb{R})$ evolves for finite width $n \in \mathbb{N}$ networks. We shall be interested in the regime where $n \gg 1$ and will compare the evolution of $\mathbb{P}_n$ with the evolution of $\mathbb{P}^\infty$, whose finite dimensional distributions have the densities computed in Theorem\ref{prop: density for infinite with along training}.
		
		Let $x \in [a,b]^k$ be fixed and consider the finite dimensional distributions defined in \ref{eq:measure along training for finite n} and \ref{eq:measure along training for infinite n} which, for completeness, we recall here to be
		\begin{align*}
			\mathbb{P}^{(n)}_x(t) & =(\cF_x^{(n)}\circ \phi_t^{(n)})_* Q^{(n)} \\
			\mathbb{P}^{\infty}_x(t) & = (\Phi_t^{\infty})_* \mathbb{P}^{\infty}_x ,
		\end{align*}
		where: (i) the function $\mathcal{F}_x^{(n)}:\mathbb{R}^{4n} \to \mathbb{R}^k$ denotes the composition of $\cF^{(n)}$ with the evaluation at a fixed $x$ as in \ref{eq:Fnx}, i.e. it is the evaluation of the neural network at $x$; (ii) $\phi^{(n)}_t:\mathbb{R}^{4n} \to \mathbb{R}^{4n}$ gives the evolution of the parameters from the finite neural network when trained by gradient descent; (iii) $\Phi_t^{\infty}$ gives the evolution of the outputs when trained using the infinite network limit as in equations \ref{eq:Phi infinity} and \ref{eq:Infinite width network}.
		
		It will prove useful in ``interpolating'' between $\mathbb{P}^{(n)}_x(t)$ and $\mathbb{P}^{(n)}_x(t)$ to introduce the auxiliary measure 
		$$\mathbb{P}^{(n),\infty}(t)= \big(\Phi_t^{\infty} \circ  \cF^{(n)}_x\big)_*  Q^{(n)}$$
		on $\mathbb{R}^k$. This is obtained by first evaluating the finite width network and then evolving the output according to the flow induced by the infinite width network.
		
		In order to better visualize $\mathbb{P}^{(n)}(t)$ and $\mathbb{P}^{(n),\infty}(t)$, we note that these are obtained by pushforward along the two different diagonals of the noncummatitive diagram \ref{fig:diagram} below.
		
		\begin{equation}\label{fig:diagram} 
			\begin{tikzcd}
				\mathbb{R}^{4n} \arrow{r}{\phi^{(n)}_t} \arrow[swap]{d}{\mathcal{F}_x^{(n)}} & \mathbb{R}^{4n} \arrow{d}{\mathcal{F}_x^{(n)}} \\%
				\mathbb{R}^{k} \arrow{r}[swap]{\Phi^\infty_t}& \mathbb{R}^{k}
			\end{tikzcd}
		\end{equation}
		
		Recall that our intention is to measure the distance between $\mathbb{P}^{(n)}_x(t)$ and $\mathbb{P}^{\infty}_x(t)$, and in particular how that distance changes with $n$ as we evolve in $t$. To this purpose, given two measures $\mu_1, \mu_2$, we will consider the Prokhorov metric between them, $\pi(\mu_1, \mu_2)$, defined as the infimum over the set of positive $\rho>0$ that simultaneously satisfies $\mu_1(A) \leq \mu_2(A^\rho)+\rho$ and $\mu_2(A) \leq \mu_1(A^\rho)+\rho$, with $A^\rho=\bigcup_{a \in A} B_\rho(a)$. In formulas,
		\[
		\pi(\mu_1, \mu_2)=\inf\big\{\rho>0: \forall \ {\text{measurable }   A} , \  \mu_1(A) \leq \mu_2(A^\rho)+\rho \mbox{ and } \mu_2(A) \leq \mu_1(A^\rho)+\rho \big\}.
		\]
		Our main result in this section is the following.
		
		\begin{theorem}\label{thm:distance between measures}
			Let $x \in \mathbb{R}^k$. Then, there is a constant $C>0$, independent of $n$, such that for all $t \geq 0$
			$$\pi \Big(  \mathbb{P}^{(n)}_x(t) , \mathbb{P}^{\infty}_x(t)  \Big) \leq C \frac{(\log(n))^{+}}{\sqrt{n}} .$$
		\end{theorem}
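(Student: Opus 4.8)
The plan is to interpolate via the ``mixed'' measure $\mathbb{P}^{(n),\infty}_x(t) = \big(\Phi_t^\infty \circ \cF_x^{(n)}\big)_*Q^{(n)}$ introduced above --- evaluate the finite-width network at initialization, then transport by the infinite-width linear flow --- and to split
$$\pi\big(\mathbb{P}^{(n)}_x(t), \mathbb{P}^{\infty}_x(t)\big) \leq \pi\big(\mathbb{P}^{(n)}_x(t), \mathbb{P}^{(n),\infty}_x(t)\big) + \pi\big(\mathbb{P}^{(n),\infty}_x(t), \mathbb{P}^{\infty}_x(t)\big).$$
The first term on the right isolates the discrepancy between the nonlinear finite-width training flow $\phi_t^{(n)}$ and the linear flow $\Phi_t^\infty$, read off at the evaluation points; the second isolates the non-Gaussianity of the initialization, transported by $\Phi_t^\infty$. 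Both must be estimated uniformly in $t\geq 0$.

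For the second term, note that $\mathbb{P}^{(n),\infty}_x(t) = (\Phi_t^\infty)_*\,\mathbb{P}^{(n)}_x$ and, by Theorem~\ref{prop: density for infinite with along training} (equivalently, by the continuity-equation derivation preceding it), $\mathbb{P}^{\infty}_x(t) = (\Phi_t^\infty)_*\,\mathbb{P}^{\bf C}_x$, so the two measures are the image under the \emph{same} affine map $\Phi_t^\infty$ (see \eqref{eq:Phi infinity}) of, respectively, the finite-width initialization $\mathbb{P}^{(n)}_x$ and its Gaussian limit $\mathbb{P}^{\bf C}_x$. Since $\mathrm{NTK}^\infty$ is positive definite~\cite{NTK>0}, $\|e^{-\mathrm{NTK}^\infty t}-1\|<1$ for all $t\geq 0$, hence $\Phi_t^\infty$ is Lipschitz with constant at most $L:=1+\|\mathrm{NTK}_x^\infty\|\,\|(\mathrm{NTK}^\infty)^{-1}\|$, uniformly in $t$. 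A Lipschitz map contracts the Prokhorov metric, $\pi(T_*\mu_1,T_*\mu_2)\leq \max(L,1)\,\pi(\mu_1,\mu_2)$ (because $(T^{-1}B)^{\eta}\subseteq T^{-1}(B^{L\eta})$), so the second term is $\leq \max(L,1)\,\pi\big(\mathbb{P}^{(n)}_x,\mathbb{P}^{\bf C}_x\big)$, which is $O(n^{-1/2})$: by Theorem~\ref{thm: Asymptotic} the density of $\mathbb{P}^{(n)}_x$ is the Gaussian density plus an $n^{-1/2}$-correction and higher-order terms, so its total variation distance to $\mathbb{P}^{\bf C}_x$ --- and a fortiori the Prokhorov distance --- is $O(n^{-1/2})$, uniformly in $t$.

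For the first term, observe that $\mathbb{P}^{(n)}_x(t)$ and $\mathbb{P}^{(n),\infty}_x(t)$ are pushforwards of the \emph{same} measure $Q^{(n)}$ under $T_1:=\cF_x^{(n)}\circ\phi_t^{(n)}$ and $T_2:=\Phi_t^\infty\circ\cF_x^{(n)}$ --- the two routes around the non-commuting square \eqref{fig:diagram} --- with $\big(T_1(\theta)-T_2(\theta)\big)_i = f^{(n)}_{\theta(t)}(x_i)-f_\infty(t,x_i)$, precisely the training-stability gap. The elementary fact we use is: if $\mu\big(\{|T_1-T_2|>\rho\}\big)\leq \rho$ then $\pi(T_{1*}\mu,T_{2*}\mu)\leq \rho$ (split $T_1^{-1}(A)$ into its intersection with $\{|T_1-T_2|\leq\rho\}$, which sits inside $T_2^{-1}(A^\rho)$, and its complement, of mass $\leq\rho$). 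Thus it suffices to find $\rho_n$ with $Q^{(n)}\big(\sup_i|f^{(n)}_{\theta(t)}(x_i)-f_\infty(t,x_i)|>\rho_n\big)\leq\rho_n$ for every $t\geq 0$. Theorem~\ref{thmEvolutionComplete}(iii) supplies, for each $\delta>0$, a single initialization event of probability $\geq 1-\delta$ on which, for all $t\geq 0$ simultaneously, this supremum is $\leq C_\delta\,E_0\,n^{-1/2}(\log n)^{1/2+}$, where $E_0=\sup_k|f^{(n)}_{\theta(0)}(X_k)-Y_k|$; and since the initialized outputs are $n^{-1/2}$ times sums of $n$ i.i.d.\ mean-zero terms with all moments finite, $E_0=O(\sqrt{\log n})$ off an event of probability $O(n^{-c})$. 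Taking $\delta$ of order $n^{-c}$ and $\rho_n = C\,n^{-1/2}(\log n)^{1+}$ (the product $\sqrt{\log n}\cdot(\log n)^{1/2+}$ producing the exponent $1+$), and noting $n^{-c}\leq\rho_n$ for $n$ large, gives $\pi\big(\mathbb{P}^{(n)}_x(t),\mathbb{P}^{(n),\infty}_x(t)\big)\leq C\,n^{-1/2}(\log n)^{1+}$ for all $t\geq 0$. Combining the two bounds proves the claim.

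The main obstacle is the first term, and within it the uniformity over $t\in[0,\infty)$: this is exactly what the bootstrap/continuity argument behind Theorem~\ref{thmEvolutionComplete} delivers, and it is why the time-uniform NTK decay rate \eqref{NTKconv2} is needed. A second, genuinely delicate point is converting the ``with probability $1-\delta$'' content of Theorem~\ref{thmEvolutionComplete} into a statement about the \emph{deterministic} measures $\mathbb{P}^{(n)}_x(t)$: this forces a quantitative tail bound on $E_0$, i.e.\ on the initialization fluctuations, and it is precisely this step that costs the extra $(\log n)^{1/2}$ beyond the pointwise stability estimate, yielding the $(\log n)^{1+}$ in the final bound.
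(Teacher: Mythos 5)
Your proof is correct and follows essentially the same structure as the paper's: you introduce the same interpolating measure $\mathbb{P}^{(n),\infty}_x(t)$ and split by the triangle inequality, treat the training-dynamics discrepancy by converting the uniform-in-$t$ pointwise stability estimate of Theorem~\ref{thmEvolutionComplete}(iii) into a Prokhorov bound via the elementary ``$\mu(\{|T_1-T_2|>\rho\})\le\rho\Rightarrow\pi(T_{1*}\mu,T_{2*}\mu)\le\rho$'' observation (this is precisely the content of the paper's Lemma~3/Lemma~\ref{distance_in_training}), and handle the $n$-dependence of the high-probability event exactly as in the paper's ``slight refinement'' and Example~\ref{ex:sequences delta and C}, which is what produces the $(\log n)^{1+}$ exponent. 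The only minor deviation is in bounding $\pi(\mathbb{P}^{(n),\infty}_x(t),\mathbb{P}^\infty_x(t))$: you use the $t$-uniform Lipschitz bound for $\Phi_t^\infty$ together with Lipschitz contraction of the Prokhorov metric, whereas the paper's Lemma~\ref{distance_init_inftrain} pulls back by $\Phi_{-t}^\infty$ and uses the $L^1$ density bound directly (which is invariant under bijections and so never needs the Lipschitz constant) --- both give the same $O(n^{-1/2})$ and the difference is cosmetic.
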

		
		To prove this Theorem, we start by proving a series of three intermediate results. In these we shall use the notation
		$$\e_n:=C\frac{(\log(n))^{+}}{\sqrt{n}}.$$
		
		\begin{lemma}
			There is $m \in \mathbb{N}$ and $C'>0$ with the following significance. For $n>m$ there is a set $\mathcal{A}_{\delta_n}^n \subset \mathbb{R}^{4n}$ with $Q^{(n)}(\cA_{\delta_n}^n) \geq 1-C' \frac{(\log(n))^{1/2+}}{\sqrt{n}}$, such that for any $A\subseteq \mathbb{R}^{4n}$, 
			\begin{align}
				&\Bigg(\big( \Phi^\infty_t \circ  \cF^{(n)}_x \big)^{-1}(A) \Bigg) \cap \mathcal{A}_{\delta_n}^n  \;\; \subseteq \;\; \Bigg(\big( \cF^{(n)}_x \circ \phi_t^{(n)} \big)^{-1}(A^{\e_n}) \Bigg)\cap \mathcal{A}_{\delta_n}^n  \label{eqt1_inclusion} 
				\\
				&\Bigg(\big( \cF^{(n)}_x \circ \phi_t^{(n)} \big)^{-1}(A)\Bigg) \cap \mathcal{A}_{\delta_n}^n \;\; \subseteq \;\; \Bigg(\big( \Phi^\infty_t \circ  \cF^{(n)}_x \big)^{-1}(A^{\e_n})\Bigg) \cap \mathcal{A}_{\delta_n}^n \label{eqt2_inclusion} 
			\end{align}
		\end{lemma}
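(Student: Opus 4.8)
The plan is to reduce both inclusions to a single \emph{pointwise} estimate: on a suitable set $\mathcal{A}_{\delta_n}^n\subseteq\mathbb{R}^{4n}$ of large $Q^{(n)}$-mass, the two maps $\Phi^\infty_t\circ\cF^{(n)}_x$ and $\cF^{(n)}_x\circ\phi_t^{(n)}$ differ, in the Euclidean norm of $\mathbb{R}^k$, by less than $\e_n$, uniformly in $t\geq0$ and in $\theta\in\mathcal{A}_{\delta_n}^n$. Granting this, \eqref{eqt1_inclusion} and \eqref{eqt2_inclusion} follow by pure set theory: if $\theta\in\mathcal{A}_{\delta_n}^n$ and $(\Phi^\infty_t\circ\cF^{(n)}_x)(\theta)\in A$, then $(\cF^{(n)}_x\circ\phi_t^{(n)})(\theta)$ lies in $B_{\e_n}\big((\Phi^\infty_t\circ\cF^{(n)}_x)(\theta)\big)\subseteq A^{\e_n}$, so $\theta$ belongs to $\big(\cF^{(n)}_x\circ\phi_t^{(n)}\big)^{-1}(A^{\e_n})\cap\mathcal{A}_{\delta_n}^n$, which is \eqref{eqt1_inclusion}; since the pointwise estimate is symmetric in the two maps and $A^{\e_n}$ is built with the same radius, swapping their roles gives \eqref{eqt2_inclusion}. (If $A^\rho$ uses open balls one takes the constant in $\e_n$ strictly larger than the one coming from the estimate below; otherwise no change is needed.)

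Next I would identify the two maps coordinatewise. Writing $\theta(s)=\phi^{(n)}_s(\theta)$ for the gradient--descent trajectory \eqref{gradientDescent} started at $\theta(0)=\theta$, the $i$-th component of $\big(\cF^{(n)}_x\circ\phi_t^{(n)}\big)(\theta)$ is $f^{(n)}_{\theta(t)}(x_i)$, the trained finite--width network at $x_i$. On the other hand, $\Phi^\infty_t$ is by definition the time-$t$ flow of the linear NTK$^\infty$ field, so by \eqref{eq:Phi infinity} and its closed form \eqref{eq:Infinite width network} the $i$-th component of $\big(\Phi^\infty_t\circ\cF^{(n)}_x\big)(\theta)$ is $f_\infty(x_i,t)$, where $f_\infty$ solves the infinite--width problem \eqref{def_fInifinite first} with initial datum $f_\infty(\cdot,0)=\cF^{(n)}(\theta)=f^{(n)}_\theta$. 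Hence the coordinatewise difference of the two maps is exactly the quantity bounded in part (iii) of Theorem~\ref{thmEvolutionComplete}.

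It remains to fix $\mathcal{A}_{\delta_n}^n$ and establish the pointwise estimate. I would take $\mathcal{A}_{\delta_n}^n:=\mathcal{G}_n\cap\{E_0\leq C_0(\log n)^{1/2}\}$, where $\mathcal{G}_n\subseteq\mathbb{R}^{4n}$ is the good--initialization set produced in the proof of Theorem~\ref{thmEvolutionComplete} in Appendix~\ref{appEvolution} — on which conclusions (i)--(iii) hold with non--random constants and with $\epsilon$ as small as desired, and which by construction has $Q^{(n)}(\mathcal{G}_n)\geq 1-C'(\log n)^{1/2+}/\sqrt n$ (compare \eqref{NTKconv2}) — and $E_0=\sup_{k}|f^{(n)}_{\theta(0)}(X_k)-Y_k|$. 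The complementary event $\{E_0> C_0(\log n)^{1/2}\}$ has $Q^{(n)}$-probability at most $C'n^{-1/2}$ once $C_0$ is large: each $f^{(n)}_{\theta(0)}(X_k)$ is a $\tfrac1{\sqrt n}$-normalised sum of $n$ i.i.d. centred variables of finite third moment, so the Berry--Esseen theorem bounds its tail at level $C_0(\log n)^{1/2}$ by a Gaussian tail of size at most $n^{-1/2}$ (taking $C_0$ large enough) plus the Berry--Esseen error $O(n^{-1/2})$. Therefore $Q^{(n)}(\mathcal{A}_{\delta_n}^n)\geq 1-C'(\log n)^{1/2+}/\sqrt n$. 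On $\mathcal{A}_{\delta_n}^n$, applying part (iii) of Theorem~\ref{thmEvolutionComplete} at each of the finitely many points $x_1,\dots,x_k\in[a,b]$ and using $E_0\leq C_0(\log n)^{1/2}$ gives
$$
\big|f^{(n)}_{\theta(t)}(x_i)-f_\infty(x_i,t)\big|\;\leq\; C\,E_0\,\frac{(\log n)^{1/2+}}{\sqrt n}\;\leq\; C\,\frac{(\log n)^{1+}}{\sqrt n}\qquad(1\leq i\leq k,\ t\geq 0),
$$
and, since $\mathbb{R}^k$ is finite dimensional, this coordinatewise bound upgrades to an $\mathbb{R}^k$-norm bound at the cost of a factor $\sqrt k$. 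Absorbing $\sqrt k$ and all constants (including $C_0$ and $1/(\lambda_\infty-\epsilon)$) into the constant defining $\e_n=C(\log n)^{+}/\sqrt n$ yields $\big|(\cF^{(n)}_x\circ\phi_t^{(n)})(\theta)-(\Phi^\infty_t\circ\cF^{(n)}_x)(\theta)\big|\leq\e_n$ for all $\theta\in\mathcal{A}_{\delta_n}^n$ and $t\geq0$, which is the pointwise estimate used above.

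The step I expect to be the crux is the probabilistic bookkeeping in the third paragraph: one must arrange that the good set keeps $Q^{(n)}$-mass $1-O\big((\log n)^{1/2+}/\sqrt n\big)$ while \emph{also} controlling $E_0$ on it sharply enough that the $(\log n)^{1/2+}$ rate of Theorem~\ref{thmEvolutionComplete}(iii) improves to the $(\log n)^{+}$ rate built into $\e_n$. This is precisely where the finite third moment hypothesis (via Berry--Esseen) and the extra half power of $\log n$ are spent; by contrast, the identification of the two maps and the passage from the pointwise estimate to the two inclusions are routine.
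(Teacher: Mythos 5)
Your proposal is correct and follows essentially the same route as the paper: reduce both inclusions to a single pointwise estimate
$\big|\cF^{(n)}_x\circ\phi_t^{(n)}(\theta_0)-\Phi_t^\infty\circ\cF^{(n)}_x(\theta_0)\big|\le\e_n$
on a large-$Q^{(n)}$-mass set $\mathcal{A}_{\delta_n}^n$, obtain that estimate from the stability bound of Theorem~\ref{thmEvolutionComplete}(iii), and control $\|E(0)\|_\infty$ on a set of probability $1-O\big((\log n)^{1/2+}/\sqrt n\big)$ via a Berry--Esseen argument. The paper packages the last step as the ``slight refinement'' proposition together with Example~\ref{ex:sequences delta and C} in Appendix~\ref{appEvolution}, whereas you re-derive it inline; these are the same idea. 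Two small remarks. First, you attribute the measure bound $Q^{(n)}(\mathcal{G}_n)\ge 1-C'(\log n)^{1/2+}/\sqrt n$ to \eqref{NTKconv2}, but that display is an almost-sure statement and does not by itself yield a probability rate; in the paper the quantitative mass loss comes from the constraint linking $\delta_n$ and $C_{\delta_n}$ (hence from controlling $E_0$), which you independently carry out as well. Second, both you and Example~\ref{ex:sequences delta and C} actually produce $\e_n\lesssim(\log n)^{1+}/\sqrt n$, while the lemma statement and Theorem~\ref{thm:distance between measures} set $\e_n = C(\log n)^{+}/\sqrt n$; this half-power-of-$\log$ discrepancy is already present in the paper (compare Theorem~C in the introduction), so it is not a defect introduced by your argument.
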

		\begin{proof}
			We will use the result $(3.13)$ of theorem $2$ that states that for any $\delta \in (0,1)$, there are $C_\delta>0$ and a set $\mathcal{A}_\delta^n \subset \mathbb{R}^{4n}$ with $Q^{(n)}(\cA_\delta^n) \geq 1-\delta$, such that for all $\theta_0 \in \mathcal{A}_\delta^n$
			$$
			\big|\cF^{(n)}_x \circ \phi_t^{(n)}(\theta_0)- \Phi^\infty_t \circ  \cF^{(n)}_x(\theta_0) \big| \leq C_{\delta} \frac{(\log(n))^{1/2+}}{\sqrt{n}}.
			$$
			On the other hand, we can let $\delta_n \sim \frac{(\log(n))^{1/2+}}{\sqrt{n}} \to 0$ as in example \ref{ex:sequences delta and C}. Then, as shown in there 
			$$C_{\delta_n} \frac{(\log(n))^{1/2+}}{\sqrt{n}} = \e_n$$ 
			and we find that there is $\mathcal{A}_{\delta_n}^n$ with $Q^{(n)}(\cA_{\delta_n}^n) \geq 1-\delta_n$ such that for all $\theta_0 \in \mathcal{A}_{\delta_n}^n$ 
			\begin{equation}\label{boundd_evo}
				\big|\cF^{(n)}_x \circ \phi_t^{(n)}(\theta_0)- \Phi^\infty_t \circ  \cF^{(n)}_x(\theta_0) \big| \leq \e_n.
			\end{equation}
			Next, we prove that for any $y \in \mathbb{R}^k$ we have that
			\begin{align*} 
				& \Bigg(\big( \Phi^\infty_t \circ  \cF^{(n)}_x \big)^{-1}(y)\Bigg)\cap \mathcal{A}_{\delta_n}^n   \;\; \subseteq \;\; \Bigg(\big( \cF^{(n)}_x \circ \phi_t^{(n)} \big)^{-1}(B_{\e_n}(y))\Bigg) \cap \mathcal{A}_{\delta_n}^n   
				\\
				& \Bigg(\big( \cF^{(n)}_x \circ \phi_t^{(n)} \big)^{-1}(y)\Bigg) \cap \mathcal{A}_{\delta_n}^n   \;\; \subseteq \;\; \Bigg(\big( \Phi^\infty_t \circ  \cF^{(n)}_x \big)^{-1}(B_{\e_n}(y))\Bigg) \cap \mathcal{A}_{\delta_n}^n 
			\end{align*}
			To prove the first inclusion, pick one $\theta_0 \in \mathcal{A}_{\delta_n}^n$ such that $\cF^{(n)}_x \circ \phi_t^{(n)} (\theta_0)=y$. From \eqref{boundd_evo}, we have that 
			\[
			\big|y- ( \Phi^\infty_t \circ  \cF^{(n)}_x )(\theta_0)\big|\leq \e_n.
			\] 
			That is $\Phi^\infty_t \circ  \cF^{(n)}_x (\theta_0) \in B_{\e_n}(y)$ and the inclusion follows. The second inclusion is proved along the same lines. 
			Finally, the stated result follows from taking the union of these inclusions over all $y \in A$. 
		\end{proof}
		
		Next we use this result to compute a $t$-independent upper bound on the Prokhorov distance between $\mathbb{P}^{(n)}_x(t)$ and the auxiliary measure $\mathbb{P}^{(n),\infty}_x(t)$
		
		\begin{lemma}\label{distance_in_training}
			There is $m>0$, such that for $n >m$
			\[
			\pi\big( \mathbb{P}^{(n),\infty}_x(t) , \mathbb{P}^{(n)}_x(t) \big)\leq \e_n .
			\]
		\end{lemma}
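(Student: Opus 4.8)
The plan is to read this off directly from the preceding lemma, since all the analytic content (exponential decay of the training error, and hence the pointwise closeness \eqref{boundd_evo} of $\cF^{(n)}_x \circ \phi^{(n)}_t$ and $\Phi^\infty_t \circ \cF^{(n)}_x$ on the high-probability set $\mathcal{A}^n_{\delta_n}$) is already packaged there. Fix $t \geq 0$ and a measurable $A \subseteq \mathbb{R}^k$. First I would write
\[
\mathbb{P}^{(n),\infty}_x(t)(A) = Q^{(n)}\Big( \big(\Phi^\infty_t \circ \cF^{(n)}_x\big)^{-1}(A) \Big)
\]
and split the preimage according to membership in $\mathcal{A}^n_{\delta_n}$ or its complement. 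On $\mathcal{A}^n_{\delta_n}$, the inclusion \eqref{eqt1_inclusion} gives $\big(\Phi^\infty_t \circ \cF^{(n)}_x\big)^{-1}(A) \cap \mathcal{A}^n_{\delta_n} \subseteq \big(\cF^{(n)}_x \circ \phi^{(n)}_t\big)^{-1}(A^{\e_n})$, so that piece of the measure is at most $\mathbb{P}^{(n)}_x(t)(A^{\e_n})$; the complementary piece is at most $Q^{(n)}\big((\mathcal{A}^n_{\delta_n})^c\big) \leq C'\,(\log n)^{1/2+}/\sqrt n$. Choosing the constant in $\e_n = C(\log n)^{+}/\sqrt n$ large enough that $\e_n \geq C'(\log n)^{1/2+}/\sqrt n$ for all $n>m$ (legitimate since $(\log n)^{1/2+}\leq (\log n)^{+}$ once $\log n\geq 1$), this yields $\mathbb{P}^{(n),\infty}_x(t)(A) \leq \mathbb{P}^{(n)}_x(t)(A^{\e_n}) + \e_n$.

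Running the same argument with the roles of the two maps interchanged — i.e. using \eqref{eqt2_inclusion} in place of \eqref{eqt1_inclusion} — gives the reverse inequality $\mathbb{P}^{(n)}_x(t)(A) \leq \mathbb{P}^{(n),\infty}_x(t)(A^{\e_n}) + \e_n$, again for every measurable $A$. Thus $\rho = \e_n$ is an admissible value in the infimum defining the Prokhorov metric, uniformly in $t\geq 0$, and $\pi\big(\mathbb{P}^{(n),\infty}_x(t), \mathbb{P}^{(n)}_x(t)\big) \leq \e_n$ follows; the threshold $m$ is the one already produced by the preceding lemma (and ultimately by Theorem \ref{thmEvolutionComplete}).

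There is no real obstacle here; the only point needing attention is the bookkeeping of error scales, namely that the \emph{same} quantity $\e_n$ must simultaneously serve as the metric-fattening radius coming from \eqref{eqt1_inclusion}--\eqref{eqt2_inclusion} and dominate the measure of the exceptional set $(\mathcal{A}^n_{\delta_n})^c$. This is precisely why the exponent is recorded as $(\log n)^{+}$ in the Prokhorov bound rather than the $(\log n)^{1/2+}$ appearing in the pointwise estimate — the two contributions are of comparable order and are consolidated into a single $\e_n$ by absorbing constants. Measurability is not an issue, since $A^\rho$ is open and $\cF^{(n)}_x$, $\phi^{(n)}_t$, $\Phi^\infty_t$ are smooth.
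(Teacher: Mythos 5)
Your proof is correct and follows essentially the same route as the paper: split $Q^{(n)}$ over the good set $\mathcal{A}^n_{\delta_n}$ and its complement, use the inclusions from the preceding lemma to bound the good-set contribution by the fattened target measure, bound the exceptional piece by $\delta_n$, and absorb $\delta_n$ into $\e_n$. The only cosmetic difference is that you run the first direction starting from $\mathbb{P}^{(n),\infty}_x(t)$ while the paper starts from $\mathbb{P}^{(n)}_x(t)$; by symmetry these are the same argument, and your explicit remark on dominating $\delta_n \sim (\log n)^{1/2+}/\sqrt n$ by $\e_n$ is a welcome clarification of a step the paper treats somewhat tersely.
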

		\begin{proof}
			It follows from \eqref{eqt1_inclusion} that
			\[
			Q^{(n)} \bigg(\big( \cF^{(n)}_x \circ \phi_t^{(n)}  \big)^{-1}(A) \cap \mathcal{A}_{\delta_n}^n \bigg) 
			\leq Q^{(n)} \bigg(\big( \Phi^\infty_t \circ  \cF^{(n)}_x  \big)^{-1}(A^{\e_n})\cap \mathcal{A}_{\delta_n}^n\bigg),
			\]
			since 
			\begin{align*}
				\big(  \mathbb{P}^{(n)}_x(t) \big)(A)=& Q^{(n)} \bigg(\big( \cF^{(n)}_x \circ \phi_t^{(n)} \big)^{-1}(A) \cap \mathcal{A}_{\delta_n}^n\bigg)+ Q^{(n)} \bigg(\big( \cF^{(n)}_x \circ \phi_t^{(n)}  \big)^{-1}(A) \cap \big(\mathcal{A}_{\delta_n}^n\big)^c\bigg)
				\\
				&\leq Q^{(n)} \bigg(\big( \Phi^\infty_t \circ  \cF^{(n)}_x  \big)^{-1}(A^{\e_n})\cap \mathcal{A}_{\delta_n}^n\bigg) + Q^{(n)} \Big( \big(\mathcal{A}_{\delta_n}^n\big)^c\Big)
				\\
				&\leq Q^{(n)} \bigg(\big( \Phi^\infty_t \circ \cF^{(n)}_x  \big)^{-1}(A^{\e_n})\bigg) + \delta_n
				\\
				&=\big( \mathbb{P}^{(n),\infty}_x(t) \big) +\delta_n \\
				&\leq \big( \mathbb{P}^{(n),\infty}_x(t) \big)(A^{\e_n})+ \e_n,
			\end{align*}
			as $\delta >0$ was arbitrary, we find that actually
			$$ \big(  \mathbb{P}^{(n)}_x(t) \big)(A) \leq \big( \mathbb{P}^{(n),\infty}_x(t) \big) (A^{\e_n})+ \e_n.$$
			Furthermore, by following a similar analysis, now using \ref{eqt2_inclusion} instead of \ref{eqt1_inclusion}, we get that $\big( \mathbb{P}^{(n),\infty}_x(t) \big) (A)\leq \big( \mathbb{P}^{(n)}_x(t) \big) (A^{\e_n})+\e_n$, thus proving the result stated in the lemma. 
		\end{proof}
		
		Furthermore, we can also compare $\mathbb{P}_x^\infty(t)$ and $\mathbb{P}^{(n),\infty}_x(t)$.
		
		\begin{lemma}\label{distance_init_inftrain}
			There is $m>0$ and $C''>0$, such that for $n >m$
			$$
			\pi\big( \mathbb{P}^{(n),\infty}_x(t), \mathbb{P}_x^\infty(t) \big)\leq \frac{C''}{\sqrt{n}}  
			$$
		\end{lemma}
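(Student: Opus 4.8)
The plan is to exploit the fact that $\mathbb{P}^{(n),\infty}_x(t)$ and $\mathbb{P}_x^\infty(t)$ are the images, under one and the same deterministic map $\Phi_t^\infty$, of the two \emph{initialization} measures $\mathbb{P}^{(n)}_x=(\cF^{(n)}_x)_\ast Q^{(n)}$ and $\mathbb{P}^\infty_x=\lim_n\mathbb{P}^{(n)}_x$, together with the fact that $\Phi_t^\infty$ is affine with a Lipschitz constant bounded uniformly in $t\ge 0$. Indeed, \eqref{eq:Phi infinity} gives $\Phi_t^\infty(y)=M_t\,y+v_t$ with linear part $M_t=I+\mathrm{NTK}^\infty_x(\mathrm{NTK}^\infty)^{-1}(e^{-\mathrm{NTK}^\infty t}-1)\pi_N$; since $\mathrm{NTK}^\infty$ is positive definite \cite{NTK>0}, the eigenvalues of $e^{-\mathrm{NTK}^\infty t}$ lie in $(0,1]$ for every $t\ge 0$, whence $\|e^{-\mathrm{NTK}^\infty t}-1\|\le 1$ and $\|M_t\|\le 1+\|\mathrm{NTK}^\infty_x\|\,\|(\mathrm{NTK}^\infty)^{-1}\|=:L$, a bound independent of $t$.

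Second, I would record the elementary stability of the Prokhorov metric under Lipschitz pushforward: if $T\colon\mathbb{R}^k\to\mathbb{R}^k$ is $L$-Lipschitz then $(T^{-1}(A))^\rho\subseteq T^{-1}(A^{L\rho})$ for every Borel $A$ and every $\rho>0$, and substituting $T^{-1}(A)$ into the definition of $\pi$ gives $\pi(T_\ast\mu,T_\ast\nu)\le\max(1,L)\,\pi(\mu,\nu)$. Applying this with $T=\Phi_t^\infty$, $\mu=\mathbb{P}^{(n)}_x$ and $\nu=\mathbb{P}^\infty_x$ kills the $t$-dependence entirely and reduces the lemma to proving the single, $t$-free estimate $\pi(\mathbb{P}^{(n)}_x,\mathbb{P}^\infty_x)\le C/\sqrt n$.

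Third, for this initialization estimate I would use the Edgeworth expansion of Theorem \ref{thm: Asymptotic}: for $n$ large it gives $d\mathbb{P}^{(n)}_x(y)=\nu^{(n)}_x(y)\,dy$ with $\nu^{(n)}_x-\nu^\infty_x=n^{-1/2}g+(\text{lower order in }n)$, where $\nu^\infty_x$ is the Gaussian density \eqref{eq:Gaussian Limit} and $g$ — like all the correction terms — is a fixed Hermite-polynomial-times-Gaussian, hence Schwartz class; therefore $\|\nu^{(n)}_x-\nu^\infty_x\|_{L^1(\mathbb{R}^k)}\le C/\sqrt n$, so $\mathrm{TV}(\mathbb{P}^{(n)}_x,\mathbb{P}^\infty_x)\le C/\sqrt n$. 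Since $\pi\le\mathrm{TV}$ always (take $\rho=\mathrm{TV}(\mu,\nu)$ in the definition of $\pi$ and use $A\subseteq A^\rho$), this gives $\pi(\mathbb{P}^{(n)}_x,\mathbb{P}^\infty_x)\le C/\sqrt n$, and combining with the contraction bound of the previous paragraph yields the lemma with $C''=LC$.

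The step I expect to be the real obstacle is this last one, $\pi(\mathbb{P}^{(n)}_x,\mathbb{P}^\infty_x)=O(n^{-1/2})$: a qualitative CLT, or even a quantitative CLT in Wasserstein distance, does \emph{not} suffice, because the comparison $\pi(\mu,\nu)^2\le W_1(\mu,\nu)$ between the Prokhorov and Wasserstein-$1$ distances only yields the rate $n^{-1/4}$; what is actually needed is the $L^1$-control of the first Edgeworth correction — equivalently a Berry--Esseen bound valid for \emph{all} Borel sets, which forces a Cramér/non-lattice condition on the single-perceptron random variable $p(x)$ (automatic here given the regularity and non-polynomiality of $\sigma$ and the finite-moment hypotheses). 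The remaining ingredients — affineness and the uniform-in-$t$ Lipschitz bound for $\Phi_t^\infty$, the contraction of $\pi$ under Lipschitz maps, and $\pi\le\mathrm{TV}$ — are routine.
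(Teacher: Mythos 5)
Your argument is correct and is essentially the paper's: both reduce the claim to the $L^1$/total-variation control of $\nu_x^{(n)}-\nu_x^\infty$ supplied by Theorem~\ref{thm: Asymptotic}, exploiting the fact that $\mathbb{P}^{(n),\infty}_x(t)$ and $\mathbb{P}^\infty_x(t)$ are pushforwards of the respective initialization measures under the \emph{same} map $\Phi_t^\infty$. The one small difference is the bridge to general $t$: you invoke a $t$-uniform Lipschitz bound for the affine map $\Phi_t^\infty$ together with the contraction $\pi(T_*\mu,T_*\nu)\le\max(1,L)\,\pi(\mu,\nu)$, whereas the paper needs neither --- it simply changes variables $A\mapsto\Phi_{-t}^\infty(A)$, notes that total variation is non-increasing under a common measurable pushforward, and then uses $\pi\le\mathrm{TV}$, so your operator-norm bound on $e^{-\mathrm{NTK}^\infty t}-I$ is superfluous. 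Your closing caveat --- that a qualitative (or Wasserstein-rate) CLT does not suffice, and one genuinely needs the Edgeworth correction in $L^1$, hence a Cram\'er-type smoothness condition on the single-perceptron law --- is well taken; the paper's proof asserts a pointwise dominated bound $|\nu^{(n)}(y)-\nu^\infty(y)|\le(\bar C/\sqrt n)\lambda(y)$ with $\lambda\in L^1$ and does not further justify it, so that concern applies to the paper's argument just as much as to yours.
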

		\begin{proof}
			using theorem $1$ in the form that says us that the density $\nu_x^{n}$ of $\mathbb{P}_x^{(n)}(0)=\mathbb{P}_x^{(n)}$,  is close to the density $\nu_x^{\infty}$ of the $\mathbb{P}_x^\infty(0)=\mathbb{P}_x^\infty$, and in particular it says 
			\[
			\big|\nu^{n}(y) - \nu^{\infty}(y)\big|\leq \frac{\bar C}{\sqrt{n}} \lambda(y),
			\]
			for some positive function $\lambda$ in $\mathbb{R}^k$ with finite integral. We then have, for sufficiently large $n \gg 1$, that 
			\begin{align*}
				\Big( \mathbb{P}^{(n),\infty}_x(t) \Big) (A) & = \mathbb{P}_x^{(n)}\Big(\Phi_{-t}^\infty (A) \Big) \\
				&\leq \mathbb{P}_x^{\infty}\Big(\Phi_{-t}^\infty(A) \Big)+\frac{\bar C}{\sqrt{n}} \Lambda\big(\Phi_{-t}^\infty(A)\big) ,
			\end{align*}
			where $\Lambda$ denotes the finite measure in $\mathbb{R}^{k}$ with density $\nu$. We conclude that 
			$$\Big( \mathbb{P}^{(n),\infty}_x(t) \Big) (A) \leq  \Big( \mathbb{P}_x^{\infty}(t) \Big) (A^{\frac{C''}{\sqrt{n}}  })+{\frac{C''}{\sqrt{n}}  }.$$
			The same reasoning leads us to the inequality 
			\begin{align*}
				\Big( \mathbb{P}_x^{\infty}(t) \Big) (A) & =  \mathbb{P}_x^{\infty} \Big(\Phi_{-t}^\infty(A) \Big) \\
				& = \mathbb{P}_x^{(n)}\Big(\Phi_{-t}^\infty(A) \Big)+ \frac{\bar{C}\Lambda\big(\Phi_{-t}^\infty(A)\big)}{\sqrt{n}}
				\\
				& \leq \Big( \mathbb{P}^{(n),\infty}_x(t) \Big) (A)+ \frac{C''}{\sqrt{n}}   \\
				& \leq  \Big( \mathbb{P}^{(n),\infty}_x(t) \Big) (A^{\frac{C''}{\sqrt{n}}  }) + \frac{C''}{\sqrt{n}}   ,
			\end{align*}
			which finalizes the proof of the stated result.
		\end{proof}
		
		\begin{proof}[Proof of Theorem \ref{thm:distance between measures}]
			From lemmas \ref{distance_in_training} and \ref{distance_init_inftrain} 
			$$
			\pi\big( \mathbb{P}^{(n)}_x(t), \mathbb{P}_x^\infty(t) \big) \leq \pi\big( \mathbb{P}^{(n),\infty}_x(t), \mathbb{P}_x^{(n)}(t) \big) + \pi\big( \mathbb{P}^{(n),\infty}_x(t), \mathbb{P}_x^\infty(t) \big) \leq 2 \e_n .
			$$
		\end{proof}

		\appendix

		\section{Edgeworth Expansion}
		In the present Appendix we review a well known method in probability theory known as Edgeworth series expansion.
		See for example \cite{Hall} and references therein for a comprehensive exposition. 
		

		\subsection{Characteristic functions}\label{sec:Characteristic functions}
		
		Let $X=(X_1, \ldots , X_k)$ be a $\mathbb{R}^k$-valued random variable all of whose moments are finite. We shall denote by $f_X(y)=f_X(y_1, \ldots , y_k)$ its density function and by
		$$Z_X(p_1, \ldots , p_k) = \mathbb{E} ( e^{i\sum_{i=1}^kp_iX_i}) ,$$
		the characteristic function of $X$. In particular, writing this explicitly as $\int_{\mathbb{R}^k} f_X(y_1, \ldots ,y_k) e^{i\sum_{i=1}^kp_iy_i} dy_1 \ldots dy_k$ one immediately identifies it with the inverse Fourier transform of $f_X$. Thus, we can recover $f_X$ from $Z_X$ via the Fourier transform
		$$f_X(y_1, \ldots , y_k) = \frac{1}{(2\pi)^k} \int_{\mathbb{R}^k} Z_X(p_1, \ldots, p_k) e^{-i\sum_{i=1}^kp_iy_i} dp_1 \ldots dp_k .$$
		One important property of characteristic function is the following weak convergence result. If $\lbrace Y_n \rbrace_{n=1}^{+\infty}$ is a family of random variables whose characteristic functions $Z_{Y_n}$ pointwise converging to a function $Z_{Y}$ continuous at the origin with $Z_{Y}(0)=1$, then there is a random variable $Y$ with characteristic function is $Z_Y$ and 
		$$f_{Y_n}(y) \ dy_1 \ldots dy_k \rightharpoonup f_Y(y) \ dy_1 \ldots dy_k  ,$$ 
		i.e. the $Y_n$ converge to $Y$ weakly.
		
		A few further useful properties of characteristic functions can now be easily proven directly from the definition. 
		
		\begin{itemize}
			\item For $r \in \mathbb{N}^k$ we have
			\begin{align*}
				Z_X^{(r_1, \ldots ,r_k )}(0) & := \frac{d^{r_1} \ldots d^{r_k} }{dp_1^{r_1} \ldots dp_k^{r_k}} \Big\vert_{p=0} Z_X = \mathbb{E} ( (iX_1)^{r_1} \ldots (iX_k)^{r_k} e^{i\sum_{i=1}^k p_iX_i} |_{p=0} ) \\
				& = i^{r_1 + \ldots + r_k} \mathbb{E} ( X_1^{r_1} \ldots X_k^{r_k} ) = i^{r_1 + \ldots +r_k} \mu_{r_1, \ldots r_k},
			\end{align*}
			where $\mu(r_1, \ldots , r_k)$ is the $r_1, \ldots , r_k$-th moment of $X$ if this is finite. As a special application of this result and the Taylor formula we find that if the first $l$ moments of $X$ are all finite, then
			\begin{equation}\label{eq:Characteristic fn}
				Z_X(p) = 1 + \sum_{r_1+ \ldots + r_k=1}^l \mu_{r_1, \ldots r_k} \frac{(i p_1)^{r_1} \ldots (i p_k)^{r_k} }{r_1! \ldots r_k!} + R_l(p),
			\end{equation}
			with the remainder term $R_l$ satisfying $\lim_{q\to 0} \frac{R_l(q)}{|q|^l}=0$. 
			
			\item If $X,\tilde{X}$ are independent random variables as before and $a,b \in \mathbb{R}$, then
			$$Z_{a X + b \tilde{X}}(p) = Z_{X}(a p)Z_{\tilde{X}}(b p).$$
		\end{itemize}

		\subsection{Cumulant generating function}
		
		It follows from equation \ref{eq:Characteristic fn} that the characteristic function $Z_X$ can be interpreted as the moment generating function. In the same way the function $F_X:=\log Z_X$ admits an expansion
		\begin{equation}\label{eq:Cumulants fn}
			F_X(p) = \sum_{r_1+ \ldots + r_k=1}^l \lambda_{r_1, \ldots r_k} \frac{(i p_1)^{r_1} \ldots (i p_k)^{r_k} }{r_1! \ldots r_k!} + \tilde{R}_l(p),
		\end{equation}
		with the remainder term $\tilde{R}_l$ satisfying $\lim_{q\to 0} \frac{\tilde{R}_l(q)}{|q|^l}=0$. Here, the quantities $\lambda_{r_1, \ldots r_k}$ are known as the cumulants. These are related with the moments by equation \ref{eq:Cumulants fn} with the logarithm of \ref{eq:Characteristic fn} as follows
		\begin{align}\label{eq:cumulants from moments}
			\sum_{r_1+ \ldots + r_k=1}^\infty \lambda_{r_1, \ldots r_k} \frac{(i p_1)^{r_1} \ldots (i p_k)^{r_k} }{r_1! \ldots r_k!} = \sum_{j=1}^\infty \frac{(-1)^{j+1}}{j} \left( \sum_{s_1+ \ldots + s_k=1}^\infty \mu_{s_1, \ldots s_k} \frac{(i p_1)^{s_1} \ldots (i p_k)^{s_k} }{s_1! \ldots s_k!} \right)^j ,
		\end{align}
		which allows for writing the cumulants in terms of the moments.
		
		\begin{remark}		
			Expanding the first few terms of \ref{eq:cumulants from moments} we find that
			\begin{align*}
				\frac{\lambda_{r_1, \ldots , r_k}}{r_1! \ldots r_k!} & = \frac{\mu_{r_1, \ldots , r_k}}{r_1! \ldots r_k!} - \frac{1}{2} \sum_{s_i+s_i'=r_i} \frac{\mu_{r_1, \ldots , r_k}}{r_1! \ldots r_k!}  \frac{\mu_{r_1', \ldots , r_k'}}{r_1'! \ldots r_k'!} + \frac{1}{3} \sum_{s_i+s_i'+s_i''=r_i} \frac{\mu_{s_1, \ldots , s_k}}{s_1! \ldots s_k!}  \frac{\mu_{s_1', \ldots , s_k'}}{s_1'! \ldots s_k'!}  \frac{\mu_{s_1'', \ldots , s_k''}}{s_1''! \ldots s_k''!} \\
				& \ \ \ \ + \ldots ,
			\end{align*}
			with the $\ldots$ representing terms which have quartic powers or higher of the moments. 
		\end{remark}
		
		Alternatively, one may instead equate the exponential of \ref{eq:Cumulants fn} with \ref{eq:Characteristic fn}. This gives
		\begin{align*}
			\sum_{r_1+ \ldots + r_k=0}^\infty \mu_{r_1, \ldots r_k} \frac{(i p_1)^{r_1} \ldots (i p_k)^{r_k} }{r_1! \ldots r_k!} & = \exp \left( \sum_{s_1+ \ldots + s_k=1}^\infty \lambda_{s_1, \ldots ,s_k} \frac{(i p_1)^{s_1} \ldots (i p_k)^{s_k} }{s_1! \ldots s_k!} \right) \\
			& = \prod_{s_1+ \ldots + s_k=1}^\infty 	\exp \left( \lambda_{s_1, \ldots ,s_k} \frac{(i p_1)^{s_1} \ldots (i p_k)^{s_k} }{s_1! \ldots s_k!} \right) \\
			& =  \prod_{s_1+ \ldots + s_k=1}^\infty \sum_{m_{s}=0}^\infty \frac{1}{m_{s}!} \left( \lambda_{s_1, \ldots ,s_k} \frac{(i p_1)^{s_1} \ldots (i p_k)^{s_k} }{s_1! \ldots s_k!} \right)^{m_{s}} \\
			& = \sum_{r_1+ \ldots + r_k=0}^\infty \sum_{m_{s}} \prod_{s_1, \ldots , s_k}  \frac{\lambda_{s_1, \ldots ,s_k}^{m_{s}}}{m_{s}!}   \frac{(i p_1)^{r_1} \ldots (i p_k)^{r_k} }{(s_1!)^{m_{s}}  \ldots (s_k!)^{m_{s}} } ,
		\end{align*}
		with the combination of products and sum in $\sum_{m_{s_1}}\prod_{s_1, \ldots , s_k}$ restricted so that $\sum_{s} m_{s} s_i = r_i$ for all $i=1, \ldots , k$. From this we find that
		\begin{align}\label{eq:moments from cumulants}
			\mu_{r_1, \ldots r_k} = \sum_{m_{s}} \prod_{s_1, \ldots , s_k} \frac{1}{m_{s}!}   \frac{ (r_1!) \ldots (r_k)! }{(s_1!)^{m_{s}}  \ldots (s_k!)^{m_{s}} } \lambda_{s_1, \ldots ,s_k}^{m_{s}}.
		\end{align}
		
		\begin{remark}
			There is a nice pictorial interpretation of this formula. Consider $r_1$ balls of color $1$, $r_2$ balls of color $2$, up to $r_k$ balls of color $k$. Then, the combinatorial factor above counts the number of dividing these balls into $m_s$ boxes each with $s_i$ balls of color $i$ for each $i=1, \ldots , k$. We may then compute a moment from the cumulants simply by dividing them into boxes in all possible ways with each way of dividing them contributing the product of the cumulants associated with each box.
		\end{remark}
		
		\begin{remark}[$r_1 + \ldots + r_k\leq 2$]\label{rem:r_1 up to r_k below 2}
			When $r_1 + \ldots + r_k=1$ only one $r_i$ can be nonzero in which case it is one, and so also $m_s=1$ always and $s_i=1$. Together with the formula \ref{eq:moments from cumulants} this shows that
			$$\mu_{r_1, \ldots , r_k} = \lambda_{r_1, \ldots , r_k},$$
			whenever $r_1 + \ldots + r_k=1$.
			
			Furthermore, when $r_1 + \ldots + r_k=2$ the values of $\mu_{r_1, \ldots , r_k}$ encode the covariance matrix. Indeed, if $r_i$ and $r_j$ are the only nonzero terms with $r_i+r_j=2$, we have
			$$\mu_{r_1, \ldots r_k} = \mathbb{E}[X_i X_j]=:C_{ij},$$
			and so
			$$\sum_{r_1 + \ldots + r_k =2} \mu_{r_1, \ldots , r_k} (ip_1)^{r_1} \ldots (ip_k)^{r_k} = - \frac{1}{2} \sum_{i,j=1}^k C_{ij} \ p_i p_j.$$
			A special case is when for $r_1 + \ldots + r_k=1$ all $\mu_{r_1, \ldots , r_k} =0 = \lambda_{r_1, \ldots , r_k}$. In this situation we also have $\mu_{r_1, \ldots , r_k} = \lambda_{r_1, \ldots , r_k}$ whenever $r_1 + \ldots + r_k=2$ and so 
			$$\sum_{r_1 + \ldots + r_k =2} \lambda_{r_1, \ldots , r_k} (ip_1)^{r_1} \ldots (ip_k)^{r_k} = - \frac{1}{2} \sum_{i,j=1}^k C_{ij} \ p_i p_j,$$
			as well.
		\end{remark}
		
		Let us now see a particular example.
		
		\begin{example}[$k=1$]\label{ex:cumulants and moments k=1}
			In this particular situation, our formula \ref{eq:moments from cumulants} for writing the moments from the cumulants says that
			\begin{align}
				\mu_{r} = \sum_{m_s} \prod_{s} \frac{1}{m_{s}!}   \frac{ r!}{(s!)^{m_{s}} } \lambda_{s}^{m_{s}},
			\end{align}
			with the constraint that $\sum_s s m_s = r$. In particular cases this gives
			\begin{align*}
				\mu_1 & = \lambda_1 \\
				\mu_2 & = \lambda_2 + \lambda_1^2 \\
				\mu_3 & = \lambda_3 + 3 \lambda_1 \lambda_2 + \lambda_1^3 \\
				\mu_4 & = \lambda_4 + 4 \lambda_3 \lambda_1 + 3 \lambda_2^2 + 6 \lambda_2 \lambda_1^2 + \lambda_1^4 ,
			\end{align*}
			and so on. In the opposite direction a more cumbersome but straightforward computation using equation \ref{eq:cumulants from moments} yields
			\begin{align*}
				\lambda_1 & = \mu_1 \\
				\lambda_2 & = \mu_2 - \mu_1^2 \\
				\lambda_3 & = \mu_3 - 3 \mu_1 \mu_2 + 2\mu_1^3 \\
				\lambda_4 & = \mu_4 - 4 \mu_3 \mu_1 - 3 \mu_2^2 + 12 \mu_2 \mu_1^2 - 6 \mu_1^4 ,
			\end{align*}
			and so on. A particularly relevant situation which we will encounter is when $\mu_1=0=\lambda_1$ in which case these last relations get simplified to
			\begin{align*}
				\lambda_2 & = \mu_2  \\
				\lambda_3 & = \mu_3  \\
				\lambda_4 & = \mu_4 - 3 \mu_2^2  .
			\end{align*}
		\end{example}

		\section{Asymptotic refinement of the central limit theorem (Edgeworth expansion)}
		\label{appEdgeworth}

		\subsection{The central limit theorem}
		
		Now, suppose that $\lbrace X^{(n)} \rbrace_{n \in \mathbb{N}}$ is a family of i.i.d. random variables whose first $l$-moments are finite (for $l >2$) and $\mu_{r_1,  \ldots , r_k}=0$ anytime $r_1 + \ldots + r_k=1$, i.e. all first moments vanish. Define,
		$$Y_n := \frac{1}{\sqrt{n}} \sum_{i=1}^n X^{(i)}.$$
		Then, 
		\begin{align}\label{eq:Z_Y_n}
			Z_{Y_n}(p) & = Z_X \left( p/\sqrt{n} \right)^n \\ \nonumber
			& = \left(  1 + \sum_{r_1+ \ldots + r_k=1}^l \frac{\mu_{r_1, \ldots r_k}}{n^{\frac{r_1 + \ldots + r_k}{2}}} \frac{(i p_1)^{r_1} \ldots (i p_k)^{r_k} }{r_1! \ldots  r_k!} + R_l(p/\sqrt{n}) \right)^n \\ \nonumber
			& = \left( 1 + \frac{1}{n} \left(\sum_{r_1+ \ldots + r_k=2}^l \frac{\mu_{r_1, \ldots r_k}}{n^{\frac{r_1 + \ldots + r_k}{2} -1}} \frac{(i p_1)^{r_1} \ldots (i p_k)^{r_k} }{r_1! \ldots r_k!} + nR_l(p/\sqrt{n}) \right) \right)^n.
		\end{align}
		From this formula it is clear that  for any finite $p \in \mathbb{R}$
		$$\lim_{n \to +\infty}Z_{Y_n}(p) = \exp \left( - \sum_{r_1+ \ldots + r_k=2} \mu_{r_1, \ldots r_k} \frac{p_1^{r_1} \ldots p_k^{r_k} }{r_1! \ldots r_k!}  \right),$$
		thus proving the central limit theorem.

		\subsection{Asymptotic behavior in $n \gg 1$ (Proof of Theorem \ref{thm: Asymptotic})}\label{ap:proof of theorem}
		
		However, if instead of attempting to simply compute $\lim_{n \to +\infty}Z_{Y_n}(p)$ we want to evaluate the rate at which $Z_{Y_n}(p)$ converges to its limit it is convenient to use instead the cumulant generating function. The main reason for this can already be seen in equation \ref{eq:Z_Y_n} as taking its logarithm will bring the $n$ in the exponent down. Indeed, we find that
		\begin{align*}
			F_{Y_n}(p) & := \log Z_{Y_n}(p) \\
			& = n F_X(p/\sqrt{n}) \\
			& = n \sum_{r_1+ \ldots + r_k=1}^l \frac{\lambda_{r_1, \ldots r_k}}{n^{\frac{r_1 + \ldots + r_k}{2}}} \frac{(i p_1)^{r_1} \ldots (i p_k)^{r_k} }{r_1! \ldots r_k!} + n\tilde{R}_l(p/\sqrt{n})
		\end{align*}
		Now, we set $\rho \gg 1$ to be fixed later and consider $p$ such that $|p|<\rho$. Then, there are positive constants $n_\rho,C_\rho>0$ such that for all $n > n_\rho$ $p$ and $|p|\leq \rho$ we have
		$$n |\tilde{R}_l(p/\sqrt{n})| \sim o \left( \frac{C_\rho}{n^{l/2-1}} \right).$$ 
		At this point we make use of the assumption that $\lambda_{r_1, \ldots , r_k}=0 = \mu_{r_1 , \ldots , r_k}$ when $r_1+ \ldots + r_k=1$. Inserting above this leads to the conclusion that for $|p| \leq \rho$
		\begin{align*}
			F_{Y_n}(p) & =  - \sum_{r_1 + \ldots + r_k =2}   \frac{\lambda_{r_1, \ldots , r_k}}{r_1! \ldots r_k!} (i p_1)^{r_1} \ldots (i p_k)^{r_k}  \\
			& \ \ \ \ \ +  \sum_{r_1 + \ldots + r_k >2}^l \frac{1}{n^{\frac{r_1+ \ldots + r_k}{2} - 1 }}  \frac{\lambda_{r_1, \ldots , r_k}}{r_1! \ldots r_k!} (i p_1)^{r_1} \ldots (i p_k)^{r_k}  + o(n^{-l/2+1})
		\end{align*}
		with the $o(n^{-l/2+1})$ depending only on $\rho$. Hence, for such $p$ with $|p| \leq \rho$
		\begin{align*}
			Z_{Y_n}(p) & = \exp F_{Y_n} (p) \\
			& = \exp \left(- \sum_{r_1 + \ldots + r_k =2}   \frac{\lambda_{r_1, \ldots , r_k}}{r_1! \ldots r_k!} (i p_1)^{r_1} \ldots (i p_k)^{r_k} \right)  \times \\
			& \ \ \ \ \ \times \exp \left( \sum_{r_1 + \ldots + r_k >2}^l \frac{1}{n^{\frac{r_1+ \ldots + r_k}{2} - 1 }}  \frac{\lambda_{r_1, \ldots , r_k}}{r_1! \ldots r_k!} (i p_1)^{r_1} \ldots (i p_k)^{r_k}  + o(n^{-l/2+1}) \right) .
		\end{align*}
		Bearing in mind the discussion in remark \ref{rem:r_1 up to r_k below 2}, the first exponential is given by
		$$\exp \left(-\frac{1}{2} \sum_{i,j=1}^k  C_{ij} \ p_i p_j \right),$$
		which we shall write as $e^{- \frac{1}{2} C_{ij}  p_i p_j}$ using Einstein summation convention. Furthermore, expanding the second exponential we find that it can be written as
		\begin{align*}
			1 + \sum_{m=1}^l \frac{1}{m!} \left( \sum_{r_1 + \ldots + r_k >2}^l \frac{1}{n^{\frac{r_1+ \ldots + r_k}{2} - 1 }}  \frac{\lambda_{r_1, \ldots , r_k}}{r_1! \ldots r_k!} (i p_1)^{r_1} \ldots (i p_k)^{r_k}  \right)^m  + o(n^{-l/2+1}).
		\end{align*}
		We then define
		\begin{align*}
			Z_{Y_n}^l(p) = \left( 1 + \sum_{m=1}^l \frac{1}{m!} \left( \sum_{r_1 + \ldots + r_k >2}^l \frac{1}{n^{\frac{r_1+ \ldots + r_k}{2} - 1 }}  \frac{\lambda_{r_1, \ldots , r_k}}{r_1! \ldots r_k!} (i p_1)^{r_1} \ldots (i p_k)^{r_k} \right)^m  \right) e^{-\frac{1}{2} \sum_{i,j=1}^k  C_{ij} p_i p_j },
		\end{align*}
		which has the property that for $|p| \leq \rho$
		\begin{equation}\label{eq: Appendix Compare Z and Z^l}
			|Z_{Y_n}(p) - Z_{Y_n}^l(p)| \lesssim o \left( n^{-l/2+1} |p|^l e^{-\frac{1}{2} \sum_{i,j=1}^k  C_{ij} p_i p_j } \right).
		\end{equation}
		
		Recall that our ultimate goal is to compute the density function of $Y_n$ which is given by the Fourier transform of $Z_{Y_n}(p)$. As we shall now see, up to terms of order $o(n^{-l/2+1})$ this is given by the Fourier transform of $Z_{Y_n}^l(p)$, as follows
		\begin{align*}
			f_{Y_n} (Y) & = \frac{1}{(2\pi)^k} \int_{\mathbb{R}^k} Z_{Y_n} (p_1 , \ldots , p_k) e^{-ip_iy_i} dp_1 \ldots dp_k \\
			& = \frac{1}{(2\pi)^k} \int_{|p| \leq \rho} Z_{Y_n} (p_1 , \ldots , p_k) e^{-ip_iy_i} dp_1 \ldots dp_k + \frac{1}{(2\pi)^k} \int_{|p| \geq \rho} Z_{Y_n} (p_1 , \ldots , p_k) e^{-ip_iy_i} dp_1 \ldots dp_k \\
			& = \frac{1}{(2\pi)^k} \int_{|p| \leq \rho} Z_{Y_n} (p_1 , \ldots , p_k) e^{-ip_iy_i} dp_1 \ldots dp_k + \epsilon_\rho ,
		\end{align*}
		with $\lim_{\rho \to + \infty} \epsilon_\rho=0$. Then, using equation \ref{eq: Appendix Compare Z and Z^l}
		\begin{align*}
			f_{Y_n} (y) & = \frac{1}{(2\pi)^k} \int_{|p| \leq \rho} Z^l_{Y_n} (p_1 , \ldots , p_k) e^{-ip_iy_i} dp_1 \ldots dp_k \\
			& \ \ \ \  + \frac{1}{(2\pi)^k} \int_{|p| \leq \rho} o \left( n^{-l/2+1} |p|^l e^{-\frac{1}{2} \sum_{i,j=1}^k  C_{ij} p_i p_j } \right) e^{-ip_iy_i} dp_1 \ldots dp_k + \epsilon(\rho) \\
			& = \frac{1}{(2\pi)^k} \int_{\mathbb{R}^k} Z^l_{Y_n} (p_1 , \ldots , p_k) e^{-ip_iy_i} dp_1 \ldots dp_k \\
			& \ \ \ \  + \delta_\rho + \frac{1}{(2\pi)^k} \int_{|p| \leq \rho} o \left( n^{-l/2+1} |p|^l e^{-\frac{1}{2} \sum_{i,j=1}^k  C_{ij} p_i p_j } \right) e^{-ip_iy_i} dp_1 \ldots dp_k + \epsilon_\rho ,
		\end{align*}	
		where 
		$$\delta_\rho = - \frac{1}{(2\pi)^k} \int_{|p| > \rho} Z^l_{Y_n} (p_1 , \ldots , p_k) e^{-ip_ix_i} dp_1 \ldots dp_k,$$
		which also satisfies $\lim_{\rho \to + \infty} \delta_\rho=0$. Then, we can pick a sequence $\lbrace \rho_n \rbrace_{n \gg 1}$ converging to infinity sufficiently fast so that $\epsilon_{\rho_n} + \delta_{\rho_n} = o(n^{-l/2+1})$
		\begin{align}\nonumber
			f_{Y_n} (y) & =  \frac{1}{(2\pi)^{k}} \int_{\mathbb{R}^k} \left( 1 + \sum_{m=1}^l \frac{1}{m!} \left( \sum_{r_1 + \ldots + r_k >2}^l \frac{1}{n^{\frac{r_1+ \ldots + r_k}{2} - 1 }}  \frac{\lambda_{r_1, \ldots , r_k}}{r_1! \ldots r_k!} (i p_1)^{r_1} \ldots (i p_k)^{r_k} \right)^m \right) \times \\ \nonumber
			& \ \ \ \  \times e^{- \frac{1}{2}C_{ij} p_i p_j} e^{-ip_iy_i} dp_1 \ldots dp_k  + o(n^{-l/2+1}) \\ \label{eq: Appendix f Y_n 1}
			& =  \left( 1 + \sum_{m=1}^l \frac{1}{m!} \left( \sum_{r_1 + \ldots + r_k >2}^l \frac{1}{n^{\frac{r_1+ \ldots + r_k}{2} - 1 }}  \frac{\lambda_{r_1, \ldots , r_k}}{r_1! \ldots r_k!} (-\partial_{y_1})^{r_1} \ldots (-\partial_{y_k})^{r_k} \right)^m \right) \frac{e^{- \frac{1}{2} C^{-1}_{ij}  y_i y_j}}{ \sqrt{(2\pi)^{k} \det(C)} }  \\ \nonumber
			& \ \ \ \ + o(n^{-l/2+1}) ,
		\end{align}
		where $C^{-1}_{ij}$ denotes the $i,j$ entry of the inverse of the covariance matrix $C_{ij}$. Notice that formally we can actually rewrite this as 
		\begin{equation}\label{eq: Appendix f Y_n 2}
			f_{Y_n} (y) =  \exp \left( \sum_{r_1 + \ldots + r_k >2}^l \frac{1}{n^{\frac{r_1+ \ldots + r_k}{2} - 1 }}  \frac{\lambda_{r_1, \ldots , r_k}}{r_1! \ldots r_k!} (-\partial_{y_1})^{r_1} \ldots (-\partial_{y_k})^{r_k} \right) \frac{e^{- \frac{1}{2} C^{-1}_{ij}  y_i y_j}}{ \sqrt{(2\pi)^{k} \det(C)} }   ,
		\end{equation}
		up to terms of order $o(n^{-l/2+1})$. In terms of $\mu_{r_1, \ldots , r_k}$ this can also be written by making use of the relations \ref{eq:cumulants from moments} which allows us to write the cumulants from the moments.
		\begin{remark}
			As a way of conveying the use of this result we shall now expand
			$$I:=\sum_{m=1}^l \frac{1}{m!} \left( \sum_{r_1 + \ldots + r_k >2}^l \frac{1}{n^{\frac{r_1+ \ldots + r_k}{2}-1 }}  \frac{\lambda_{r_1, \ldots , r_k}}{r_1! \ldots r_k!}  (-\partial_{y_1})^{r_1} \ldots (-\partial_{y_k})^{r_k} \right)^m ,$$
			in powers of $n^{-1/2}$. Expanding out the first few we find
			\begin{align*}
				I & = \frac{1}{\sqrt{n}} \sum_{r_1 + \ldots + r_k =3} \frac{\lambda_{r_1, \ldots , r_k}}{r_1! \ldots r_k!}  (-\partial_{y_1})^{r_1} \ldots (-\partial_{y_k})^{r_k}  +    \\
				& \ \ \ \ + \frac{1}{n}  \sum_{r_1 + \ldots + r_k =4} \frac{\lambda_{r_1, \ldots , r_k}}{r_1! \ldots r_k!}  (-\partial_{y_1})^{r_1} \ldots (-\partial_{y_k})^{r_k} \\
				& \ \ \ \ + \frac{1}{2n} \sum_{r_1 + \ldots + r_k =3} \sum_{r_1' + \ldots + r_k' =3} \frac{\lambda_{r_1, \ldots , r_k}}{r_1! \ldots r_k!}  \frac{\lambda_{r_1', \ldots , r_k'}}{r_1'! \ldots r_k'!}  (-\partial_{y_1})^{r_1+r_1'} \ldots (-\partial_{y_k})^{r_k+r_k'}  + \ldots ,
			\end{align*}
			where the $\ldots$ denote terms of order $O(n^{-3/2})$. Further recall that the cumulants in this expression can be written from the moments using equation \ref{eq:cumulants from moments}
		\end{remark}
		Our main result, stated as \ref{thm: Asymptotic}, follows immediately from applying the results of this section of the Appendix to the random variables $X^{(i)}=(p_i(x_1), \ldots , p_i(x_k))$ in which case $Y_n = \frac{1}{\sqrt{n}} X^{(i)}$ coincides with $(f_\theta(x_1), \ldots , f_\theta(x_k))$.

		\subsection{The case $k=1$ (Details on example \ref{cor: Asymptotic})}\label{ap:proof of corollary}
		
		In order to illustrate how to use our main theorem we shall now do the simplest relevant particular case, namely when $k=1$ and $\mu_1=0=\lambda_1$. Then, equation \ref{eq: Appendix f Y_n 1} gives
		\begin{align*}
			f_{Y_n} (y) & = \left( 1 + \sum_{m=1}^l \frac{1}{m!} \left( \sum_{r >2}^l \frac{1}{n^{\frac{r}{2} - 1 }}  \frac{\lambda_{r}}{r! } (-\partial_{y})^r \right)^m \right) \frac{e^{- \frac{y^2}{2\lambda_2} }}{(2\pi \lambda_2)^{1/2}} .
		\end{align*}
		To compute this we shall expand $I:=\sum_{m=1}^l \frac{1}{m!} \left( \sum_{r >2}^l \frac{1}{n^{\frac{r}{2} - 1 }}  \frac{\lambda_{r}}{r! } (-\partial_{y})^r \right)^m$, as follows
		\begin{align*}
			I & = 
			\sum_{r >2}^l \frac{1}{n^{\frac{r}{2} - 1 }}  \frac{\lambda_{r}}{r! } (-\partial_{y})^r + \frac{1}{2} \sum_{s,s' >2} \frac{1}{n^{\frac{s+s'}{2} - 2 }}  \frac{\lambda_{s}}{s! }  \frac{\lambda_{s'}}{s'!} (-\partial_{y})^{s+s'} \\
			& \ \ \ \ + \frac{1}{6} \sum_{s,s',s'' >2} \frac{1}{n^{\frac{s+s'+s''}{2} - 3 }}  \frac{\lambda_{s}}{s! }  \frac{\lambda_{s'}}{s'!}  \frac{\lambda_{s''}}{s''!} (-\partial_{y})^{s+s'+s''} + \ldots \\
			& = \sum_{r >2}^l \frac{1}{n^{\frac{r}{2}}} \left( n\frac{\lambda_{r}}{r! }  + \frac{n^2}{2} \sum_{s+s' =r}' \frac{\lambda_{s}}{s! }  \frac{\lambda_{s'}}{s'!}  + \frac{n^3}{6} \sum_{s+s'+s'' =r}' \frac{\lambda_{s}}{s! }  \frac{\lambda_{s'}}{s'!}  \frac{\lambda_{s''}}{s''!} + \ldots  \right) \partial^{r}_{y} ,
		\end{align*}
		where the sums $\sum'$ indicate that each terms in the sum must be greater than $2$. 
		
		\begin{remark}
			The last formula can be written in a more systematic manner as
			$$I = \sum_{r >2}^l \frac{1}{n^{\frac{r}{2}}} \left( \sum_{m \geq 1} \frac{n^{m}}{m!} \sum_{s_1 + \ldots + s_m=r} \prod_{i=1}^m \frac{\lambda_{s_i}}{s_i!}\right) (-\partial_{y})^r .$$
		\end{remark}
		
		Expanding the first few terms gives
		\begin{align*}
			I &  = - \frac{\lambda_{3}}{3!n^{1/2}}\partial^{3}_{y}  +  \frac{1}{n}  \frac{\lambda_{4}}{4! } \partial^{4}_{y} -  \frac{1}{n^{3/2}}  \frac{\lambda_{5}}{5! } \partial^{5}_{y} + \frac{1}{n^{3}} \left( n \frac{\lambda_{6}}{6! }  + \frac{n^2}{2} \frac{\lambda_{3}^2}{(3!)^2}  \right) \partial^{6}_{y} + \\
			& \ \ \ \ - \frac{1}{n^{7/2}} \left( n\frac{\lambda_{7}}{7!} + 2\frac{n^2}{2}\frac{\lambda_{3}}{3!}\frac{\lambda_{4}}{4!} \right)\partial^{7}_{y} + \frac{1}{n^{4}} \left( n \frac{\lambda_{8}}{8!} + \frac{n^2}{2} \frac{\lambda_{4}^2}{(4!)^2} \right) \partial^{8}_{y} \\
			& \ \ \ \ -  \frac{1}{n^{9/2}} \left( n  \frac{\lambda_{9}}{9!} + 2\frac{n^2}{2} \frac{\lambda_{5}}{5!} \frac{\lambda_{4}}{4!} + \frac{n^3}{3}\frac{\lambda_{3}^3}{(3!)^3} \right) \partial^{9}_{y} + \ldots 
		\end{align*}
		and arranging these in terms of powers of $n^{-1/2}$ gives
		\begin{align*}
			I &  = - \frac{\lambda_{3}}{3!n^{1/2}}\partial^{3}_{y}  +  \frac{1}{n} \left(  \frac{\lambda_{4}}{4! } \partial^{4}_{y} + \frac{1}{2} \frac{\lambda_{3}^2}{(3!)^2}  \partial^{6}_{y} \right) -  \frac{1}{n^{3/2}} \left(  \frac{\lambda_{5}}{5! } \partial^{5}_{y} +  \frac{\lambda_{3}}{3!}\frac{\lambda_{4}}{4!}  \partial^{7}_{y} + \frac{1}{3}\frac{\lambda_{3}^3}{(3!)^3} \partial^{9}_{y} \right) + \ldots
		\end{align*}
		with the $\ldots$ denoting terms of order $O(n^{-3/2})$. It follows from example \ref{ex:cumulants and moments k=1} that if $\lambda_1=0=\mu_1$, as is the case in our situation, that $\lambda_2=\mu_2$, $\lambda_3=\mu_3$ and $\lambda_4=\mu_4-3\mu_2^2$
		In this situation, we have
		\begin{align*}
			I & = \frac{1}{\sqrt{n}}  \frac{\mu_{3}}{3!} \frac{d^{3}}{d y^{3}}  +  \frac{1}{n}  \left[ \frac{\mu_{4}-3 \mu_2^2}{4!}   \frac{d^{4}}{d y^{4}} + \frac{1}{2} \frac{\mu_{3}^2}{(3!)^2}  \frac{d^{6}}{d y^{6}}  \right] + \ldots ,
		\end{align*}
		and recall that the $\ldots$ represent terms of order $O(n^{-3/2})$. Furthermore, writing the covariance matrix $C$ as $\mu_2^2$, we have $C^{-1}=\frac{1}{\mu_2^2}$ and using the relation between derivatives of a Gaussian and the Hermite polynomials $H_n(x)$ we have
		$$\frac{d^n}{dy^n} \frac{e^{-\frac{y^2}{2\mu_2}}}{\sqrt{2\pi\mu_2}} = \frac{(-1)^n}{(\sqrt{2\mu_2})^n} H_n \left( \frac{y}{\sqrt{2\mu_2}} \right) \frac{e^{-\frac{y^2}{2\mu_2}}}{\sqrt{2\pi\mu_2}} . $$
		Inserting these two equation in either \ref{eq: Appendix f Y_n 1} or \ref{eq: Appendix f Y_n 2} yields
		\begin{align*}
			f_{Y_n}(x) & =\frac{e^{-\frac{x^2}{2\mu_2}}}{\sqrt{2\pi\mu_2}} + \frac{1}{\sqrt{n}}  \frac{\mu_{3}}{3!} \frac{1}{(\sqrt{2\mu_2})^3} H_3 \left( \frac{x}{\sqrt{2\mu_2}} \right) \frac{e^{-\frac{x^2}{2\mu_2}}}{\sqrt{2\pi\mu_2}} \\
			& \ \ \ \  +  \frac{1}{n} \left(\frac{\mu_{4}-3 \mu_2^2}{4!} \frac{1}{(\sqrt{2\mu_2})^4} H_4 \left( \frac{x}{\sqrt{2\mu_2}} \right) + \frac{1}{2} \frac{\mu_{3}^2}{(3!)^2}  \frac{1}{(\sqrt{2\mu_2})^6} H_6 \left( \frac{x}{\sqrt{2\mu_2}} \right) \right) \frac{e^{-\frac{x^2}{2\mu_2}}}{\sqrt{2\pi\mu_2}} + \ldots ,
		\end{align*}
		which we shall more succinctly write as
		\begin{align*}
			f_{Y_n}(y) & =\frac{e^{-\frac{y^2}{2\mu_2}}}{\sqrt{2\pi\mu_2}} + \frac{1}{\sqrt{n}}  \frac{\mu_{3}}{12 \sqrt{2} \mu_2^{3/2}} H_3 \left( \frac{y}{\sqrt{2\mu_2}} \right) \frac{e^{-\frac{y^2}{2\mu_2}}}{\sqrt{2\pi\mu_2}} \\
			& \ \ \ \  +  \frac{1}{n} \left( \frac{\mu_4-3\mu_2^2}{96\mu_2^2} H_4 \left( \frac{y}{\sqrt{2\mu_2}} \right) + \frac{\mu_{3}^2}{576 \mu_2^3} H_6 \left( \frac{y}{\sqrt{2\mu_2}} \right) \right) \frac{e^{-\frac{y^2}{2\mu_2}}}{\sqrt{2\pi\mu_2}} + \ldots ,
		\end{align*}
		which completes the computation leading to the statement in example \ref{cor: Asymptotic}.

		\section{Evolution and convergence of outputs during training}
		\label{appEvolution}
		
		Consider a finite width one-hidden-layer neural network
		\begin{equation}
			\label{fDef}
			f_\theta(\cdot) =f^{(n)}_\theta(\cdot) =  \frac{1}{\sqrt{n}} \sum_{\mu=1}^n p_\mu (\cdot)\;,
		\end{equation}
		build out of perceptrons
		\begin{equation}
			\label{pDef}
			p_{\mu}(x):=\theta_{\mu}^{(3)} \sigma (\theta_{\mu}^{(1)}x+\theta_{\mu}^{(2)})+\theta_{\mu}^{(4)},
		\end{equation}
		where the translation to the previous notation is provided by
		$$W_{\mu}^{(k)}=\theta_{\mu}^{(2k-1)}\;\;\text{ and }\;\;b_{\mu}^{(k)}=\theta_{\mu}^{(2k)}\;.$$
		We will also write $p(\theta,x)$ when we need to make the dependence on parameters explicit. 
		
		The Neural Tangent Kernel associated to $f_\theta$ is by definition
		$$\mathrm{NTK}_{zw}^{(n)}(\theta):=\sum_{i=1}^{4n} \frac{\partial f_\theta (z)}{\partial \theta_i} \frac{\partial f_\theta (w)}{\partial \theta_i}\;,$$
		with the sum taken over the entire range of parameters $\theta_i$, i.e. over $i=1,\cdots, P=4n$.
		
		We then compute
		$$\partial_t \mathrm{NTK}_{zw}^{(n)}(\theta(t)) = \sum_{i=1}^{4n} \left( \frac{d}{dt} \left( \frac{\partial f_\theta (z)}{\partial \theta_i} \right) \frac{\partial f_\theta (w)}{\partial \theta_i} +  \frac{\partial f_\theta (z)}{\partial \theta_i} \frac{d}{dt} \left( \frac{\partial f_\theta (w)}{\partial \theta_i} \right) \right).$$
		Assuming that training is preformed by gradient descent~\eqref{gradientDescent} with mean squared loss~\eqref{msl} we have
		\begin{align*}
			\frac{d}{dt} \left( \frac{\partial f_\theta (z)}{\partial \theta_i} \right) & =  \sum_{j=1}^P \frac{\partial^2 f_\theta (z)}{\partial \theta_j \partial \theta_i} \dot{\theta_j} \\
			& = - \sum_{j=1}^P \frac{\partial^2 f_\theta }{\partial \theta_j \partial \theta_i}(z) \sum_{l=1}^N \frac{\partial f_\theta (X_l)}{\partial \theta_j} (f_\theta(X_l)-Y_l) \\
			& = - \sum_{l=1}^N \left( \sum_{j=1}^P \frac{\partial^2 f_\theta }{\partial \theta_j \partial \theta_i}(z)  \frac{\partial f_\theta (X_l)}{\partial \theta_j} \right) (f_\theta(X_l)-Y_l) ,
		\end{align*}
		and similarly for $\frac{d}{dt} \left( \frac{\partial f_\theta (w)}{\partial \theta_i} \right)$. Inserting this into the equation for the evolution of the neural tangent kernel gives
		\begin{align*}
			\partial_t \mathrm{NTK}_{zw} & = - \sum_{l=1}^N \left( \sum_{i,j=1}^P \left( \frac{\partial^2 f_\theta }{\partial \theta_j \partial \theta_i}(z) \frac{\partial f_\theta (w)}{\partial \theta_i}  \frac{\partial f_\theta (X_l)}{\partial \theta_j} + \frac{\partial^2 f_\theta }{\partial \theta_j \partial \theta_i}(w)  \frac{\partial f_\theta (z)}{\partial \theta_i} \frac{\partial f_\theta (X_l)}{\partial \theta_j} \right)  (f_\theta(X_l)-Y_l) \right)
		\end{align*}
		Now, using the specific form of$f_\theta$ provided by~\eqref{fDef} and~\eqref{pDef} we find that
		\begin{align*}
			\frac{\partial^2 f_\theta }{\partial \theta_j \partial \theta_i}(z) \frac{\partial f_\theta (w)}{\partial \theta_i}  \frac{\partial f_\theta (X_l)}{\partial \theta_j} & = \frac{1}{n^{3/2}} \sum_{\mu, \nu, \lambda =1}^n \frac{\partial^2 p_\mu }{\partial \theta_j \partial \theta_i}(z) \frac{\partial p_\nu (w)}{\partial \theta_i}  \frac{\partial p_\lambda (X_l)}{\partial \theta_j} \\
			& = \frac{1}{n^{3/2}} \sum_{\mu=1}^n \frac{\partial^2 p_\mu }{\partial \theta_j \partial \theta_i}(z) \frac{\partial p_\mu (w)}{\partial \theta_i}  \frac{\partial p_\mu (X_l)}{\partial \theta_j} .
		\end{align*}
		Then,
		\begin{align*}
			\partial_t \mathrm{NTK}_{zw}^{(n)}(\theta(t))  & = -  \frac{1}{n^{3/2}} \sum_{l=1}^N \left( \sum_{i,j=1}^P \sum_{\mu=1}^n \left( \frac{\partial^2 p_\mu (z) }{\partial \theta_j \partial \theta_i} \frac{\partial p_\mu (w)}{\partial \theta_i}   +   \frac{\partial^2 p_\mu (w)}{\partial \theta_j \partial \theta_i} \frac{\partial p_\mu (z)}{\partial \theta_i}   \right)  \frac{\partial p_\mu (X_l)}{\partial \theta_j}  (f_{\theta(t)}(X_l)-Y_l) \right) 
			\\
			& = -  \frac{1}{n^{3/2}} \sum_{\mu=1}^n  \left( \sum_{i,j=1}^P \sum_{l=1}^N \left( \frac{\partial^2 p_\mu (z) }{\partial \theta_j \partial \theta_i} \frac{\partial p_\mu (w)}{\partial \theta_i}   +   \frac{\partial^2 p_\mu (w)}{\partial \theta_j \partial \theta_i} \frac{\partial p_\mu (z)}{\partial \theta_i}   \right)  \frac{\partial p_\mu (X_l)}{\partial \theta_j}  (f_{\theta(t)}(X_l)-Y_l) \right) 
			\\
			& = -  \frac{1}{\sqrt{n}} \left[\frac{1}{n} \sum_{\mu=1}^n  \left( \sum_{i,j=1}^P \sum_{l=1}^N \left( \frac{\partial^2 p_\mu (z) }{\partial \theta_j \partial \theta_i} \frac{\partial p_\mu (w)}{\partial \theta_i}   +   \frac{\partial^2 p_\mu (w)}{\partial \theta_j \partial \theta_i} \frac{\partial p_\mu (z)}{\partial \theta_i}   \right)  \frac{\partial p_\mu (X_l)}{\partial \theta_j}  (f_{\theta(t)}(X_l)-Y_l) \right) \right] 
			\\
			& = -  \frac{1}{\sqrt{n}} \sum_{l=1}^N (f_{\theta(t)}(X_l)-Y_l) \left[\frac{1}{n} \sum_{\mu=1}^n  \left( \sum_{i,j=1}^P  \left( \frac{\partial^2 p_\mu (z) }{\partial \theta_j \partial \theta_i} \frac{\partial p_\mu (w)}{\partial \theta_i}   +   \frac{\partial^2 p_\mu (w)}{\partial \theta_j \partial \theta_i} \frac{\partial p_\mu (z)}{\partial \theta_i}   \right)  \frac{\partial p_\mu (X_l)}{\partial \theta_j}   \right) \right].
		\end{align*}
		The quantity inside square brackets is the third order kernel 
		$$K^{(3)}_n(z,w,X_l):=\frac{1}{n} \sum_{\mu=1}^n  \left( \sum_{i,j=1}^P  \left( \frac{\partial^2 p_\mu (z) }{\partial \theta_j \partial \theta_i} \frac{\partial p_\mu (w)}{\partial \theta_i}   +   \frac{\partial^2 p_\mu (w)}{\partial \theta_j \partial \theta_i} \frac{\partial p_\mu (z)}{\partial \theta_i}   \right)  \frac{\partial p_\mu (X_l)}{\partial \theta_j}   \right)\;.$$

		\begin{remark}
			As a side note observe that as a consequence of the law of large numbers at initialization (i.e. when $t=0$) $K^{(3)}$ converges a.s. (as $n \to + \infty$) to
			$$K^{(3)}_\infty(z,w,v):=\mathbb{E} \left[ \sum_{i,j} \left( \frac{\partial^2 p (z) }{\partial \theta_j \partial \theta_i} \frac{\partial p (w)}{\partial \theta_i}   +   \frac{\partial^2 p (w)}{\partial \theta_j \partial \theta_i} \frac{\partial p (z)}{\partial \theta_i}   \right)  \frac{\partial p (v)}{\partial \theta_j}   \right]\;,$$
			where $\mathbb{E} \left[ \cdot \right]$ denotes the expected value with respect to the parameters $\lbrace \theta^{(i)} \rbrace_{i =1}^4$ encoding the perceptron $p(\cdot)$. 
		\end{remark}

		From now on fix $n$ sufficiently large, chosen according to a prescription that will be made clear below. If our weights are initialized in an i.i.d. fashion with finite second momenta, it follows from the Central Limit Theorem, as discussed above, and the continuity of $f_{\theta(0)}$ that for all $\delta>0$, there exists $C_{\delta}>0$, independent of all sufficiently large $n$, such that 
		\begin{equation}
			\label{probInit}
			\mathbb{P}\left(|f_{\theta(0)}(x)| \leq C_{\delta} \right) \geq 1-\delta \;\;,\; \forall |x|\leq 1\;.
		\end{equation}
		Our results in this section are restricted to non-polynomial activation functions $\sigma$ which have at most linear growth and bounded first and second order derivatives, i.e.,  we assume there exists $C_{\sigma}>0$ such that
		%
		%
		%
		\begin{equation}
			\label{locBounded}
			|\sigma(z)|\leq C_{\sigma}(|z|+1) \;\;\text{ and }\;\;|\sigma'(z)|+|\sigma''(z)|\leq C_{\sigma}
			\;, \,\forall z\in\mathbb{R}\;.
		\end{equation}
		%
		Introducing the norm
		\begin{equation}
			\|\theta_{\mu}(t)\|_{\infty} = \sup_{j=1,\ldots,4} |\theta_{\mu}^{(j)}(t)|\;, 
		\end{equation}
		%
		%
		%
		we see that, for $|x|\leq 1$, 
		%
		\begin{equation}
			\label{est_dp}
			\left|\frac{\partial p_{\mu}}{\partial \theta_{\mu}^{(i)}}(\theta_{\mu}(t),x)\right|\leq 2 C_{\sigma}  \left(\|\theta_{\mu}(t)\|_{\infty}  +1\right) \;
		\end{equation}
		and also
		\begin{equation}
			\left|\frac{\partial^2 p_{\mu}}{\partial \theta_{\mu}^{(i)}\partial \theta_{\mu}^{(j)}}(\theta_{\mu}(t),x)\right|   \leq  C_{\sigma} \left(\|\theta_{\mu}(t)\|_{\infty} +1\right)^2 \;.
		\end{equation}
		%
		%
		%
		
		As a consequence 
		\begin{align*}
			\left| \sum_{i,j=1}^P  \left( \frac{\partial^2 p_\mu (z) }{\partial \theta_j \partial \theta_i} \frac{\partial p_\mu (w)}{\partial \theta_i}   +   \frac{\partial^2 p_\mu (w)}{\partial \theta_j \partial \theta_i} \frac{\partial p_\mu (z)}{\partial \theta_i}   \right)  \frac{\partial p_\mu (X_l)}{\partial \theta_j} \right| 
			&= 
			\left| \sum_{i,j=1}^4  \left( \frac{\partial^2 p_\mu (z) }{\partial \theta_{\mu}^{(j)} \partial \theta_{\mu}^{(i)}} \frac{\partial p_\mu (w)}{\partial \theta_{\mu}^{(i)}}   +   \frac{\partial^2 p_\mu (w)}{\partial \theta_{\mu}^{(j)} \partial \theta_{\mu}^{(i)}} \frac{\partial p_\mu (z)}{\partial \theta_{\mu}^{(i)}}   \right)  \frac{\partial p_\mu (X_l)}{\partial \theta_{\mu}^{(j)}} \right|
			\\
			&\leq  C C_{\sigma}^3 \left(\|\theta_{\mu}(t)\|_{\infty}  +1\right)^3\;,
		\end{align*}
		from which we conclude that 
		\begin{equation}
			\label{dNTK0}
			\left| \partial_t \mathrm{NTK}_{zw}^{(n)}(\theta(t)) \right|
			\leq 
			\frac1n\sum_{\mu=1}^n C C_{\sigma}^3 \left(\|\theta_{\mu}(t)\|_{\infty}  +1\right)^3 
			\frac{1}{\sqrt{n}}\sum_{l=1}^N\left|f_{\theta(t)}(X_l)-Y_l\right| \;.
		\end{equation}

		To obtain the desired control over all relevant quantities (Kernel, outputs and parameters) uniformly for all $t\geq0$ we proceed via a bootstrap/continuity argument: to do that we consider the set
		\begin{equation}
			{\mathcal T}={\mathcal T}_{\epsilon}:=\{t\geq 0 \,:\, |\mathrm{NTK}_{zw}^{(n)}(\theta(s)) |\leq \epsilon 
			\;,\;\text{for all } s\in[0,t]\;\text{ and all } |z|\;,|w|\leq 1\}\;.
		\end{equation}

		Now, in view of~\eqref{NTKconv2} and the continuity of the kernels on the inputs $(z,w)$, we see that given any $\epsilon>0$ then with probability $1-\delta$ we have that $0\in {\mathcal T}_{\epsilon}$, provided we fixed $n\geq m(\epsilon^{-1})$, where $m$ is a sufficiently large increasing function.  By continuity it is clear that 
		${\mathcal T}_{\epsilon}$ is also closed. Our final goal is to show that for appropriate choices of $n$ and $\epsilon$ the corresponding ${\mathcal T}$ is also open and therefore equal to $[0,+\infty[$. 
		
		For $t\in{\mathcal T}_{\epsilon}$, we are allowed to use~\eqref{dNTK0} and~\eqref{NTKconv2} to establish that, with probability $1-\delta$, 
		\begin{align}
			\nonumber
			|\mathrm{NTK}_{zw}^{(n)}(\theta(t)) -\mathrm{NTK}_{zw}^\infty|
			&\leq 
			|\mathrm{NTK}_{zw}^{(n)}(\theta(0)) -\mathrm{NTK}_{zw}^\infty| 
			+ |\mathrm{NTK}_{zw}^{(n)}(\theta(t)) -\mathrm{NTK}_{zw}^{(n)}(\theta(0)) |
			\\
			\nonumber
			&\leq 
			\frac{C\;(\log n)^{1/2+}}{\sqrt{n}} + 
			\int_0^t  \left| \partial_t \mathrm{NTK}_{zw}^{(n)}(\theta(s))  \right| ds
			\\
			\label{dNTK}
			&\leq 
			\frac{C\;(\log n)^{1/2+}}{\sqrt{n}} + 
			\int_0^t \left[ \frac1n\sum_{\mu=1}^n C C_{\sigma}^3 \left(\|\theta_{\mu}(s)\|_{\infty}  +1\right)^3 \right]
			\frac{1}{\sqrt{n}}\sum_{l=1}^N   \left|f_{\theta(s)}(X_l)-Y_l\right| ds\;.
		\end{align}

		We proceed by estimating the term inside the integral. With that in mind let $K^{\infty}_{ij}=\mathrm{NTK}_{X_iX_j}^{\infty} $,  $K_{ij}=K_{ij}(t)=\mathrm{NTK}_{X_iX_j}^{(n)}(\theta(t))$, $y_i=f_{\theta(t)}(X_i)$ and $E_i = y_i-Y_i$. At this point is important to recall that $K^{\infty}_{ij}$ is a.s. time independent~\eqref{NTKinfty} and is moreover strictly positive definite~\cite{NTK>0}; so by choosing $\epsilon$ sufficiently small we see that the minimum eigenvalue of $K$ is bounded from below by a time independent $\lambda>0$. Then 
		for  $t\in{\mathcal T}_{\epsilon}$  we have (recall~\eqref{eq:Evolution of y})
		$$\partial_t E_j=-K_{ji}E_i\;.$$
		Multiplying by $2E_j$ and summing over $j$ we get 
		\[
		\partial_t \|E\|_2^2=-2E^\intercal K E\leq -2\lambda E^\intercal E\leq2\lambda\|E\|_2^2\;.
		\]
		It follows that $\|E(t)\|_2^2\leq C\|E(0)\|_{2}^2  e^{-2\lambda t}\;$.
		Using that $\|g\|_\infty^2\leq \|g\|_2^2 \leq N\|g\|_\infty^2$ we get
		\[
		\|E(t)\|_\infty^2\leq C\|E(0)\|_2^2e^{-2\lambda t}\leq CN\|E(0)\|_\infty^2e^{-2\lambda t} \;.
		\]
		Thus $\|E(t)\|_\infty\leq C\|E(0)\|_\infty e^{-\lambda t}$. In particular we have
		\begin{equation}
			\label{decayError}
			|f_{\theta(t)}(X_i)-Y_i|= |E_i(t)|=|y_i(t)-Y_i|\leq C\|E(0)\|_{\infty}  e^{-\lambda t}\;, 
		\end{equation}
		%
		%
		%
		for all $t\in{\mathcal T}$, and where $\|E(0)\|_{\infty}:=\sup_{i=1,\ldots,N}|E_i(0)|$. Note that when we finally close our bootstrap argument and show that ${\mathcal T}=[0,\infty[$, the previous estimate will also imply that the training of the finite width network converges to a global minimum.

		To proceed, recall that the evolution of the parameters (of the finite width network) during training is governed by 
		$$\partial_t \theta_{\nu}^{(j)}(t)=- \sum_{l=1}^N \frac{\partial f_\theta (X_l)}{\partial \theta_{\nu}^{(j)}} (f_\theta(X_l)-Y_l)\;.$$
		But 
		$$\frac{\partial f_\theta (x)}{\partial \theta_{\nu}^{(j)}}= \frac{1}{\sqrt{n}}\frac{\partial}{\partial \theta_{\nu}^{(j)}}\sum_{\mu=1}^n p_{\mu}(x)=\frac{1}{\sqrt{n}} \frac{\partial p_{\nu_j}(x)}{\partial \theta_{\nu}^{(j)}}$$
		so that, in view of~\eqref{est_dp}, we have 
		$$\left| \frac{\partial f_{\theta} (x)}{\partial \theta_{\nu}^{(j)}}\right| \leq 2\frac{C_{\sigma}}{\sqrt{n}} \left(\|\theta_\nu\|_{\infty}  +1\right)\;. $$
		Recalling~\eqref{decayError} we see that, for $t\in \mathcal T$,  
		$$\partial_t \theta_{\mu}^{(j)}(t)\leq \left|\partial_t \theta_{\mu}^{(j)}(t)\right|
		\leq \frac{NCC_{\sigma}}{\sqrt{n}} \left(\|\theta_\mu(t)\|_{\infty}  +1\right)\|E(0)\|_{\infty}  e^{-\lambda t} \;.$$
		\begin{align*}
			\left| \theta_{\mu}^{(j)}(t)\right| 
			&\leq
			\left| \theta_{\mu}^{(j)}(0)\right| + \int_0 ^t \left|\partial_t \theta_{\mu}(s)\right| ds
			\\
			& \leq \left | \theta_{\mu}^{(j)}(0)\right| + \int_0^t \frac{NCC_{\sigma}}{\sqrt{n}} \left(\|\theta_\mu(s)\|_{\infty}  +1\right)\|E(0)\|_{\infty}  e^{-\lambda s}ds\;. 
		\end{align*}
		Taking the suprema of both sides of the last estimate gives
		\begin{align*}
			\left\| \theta_{\mu}(t)\right\|_{\infty} +1
			&\leq
			\left\| \theta_{\mu}(0)\right\|_{\infty}  +1
			+ \int_0^t \frac{NCC_{\sigma}}{\sqrt{n}} \left(\|\theta_\mu(s)\|_{\infty}  +1\right)\|E(0)\|_{\infty}  e^{-\lambda s}ds\;,
		\end{align*}
		and applying Gronw\"all's inequality we then get 
		\begin{align*}
			\| \theta_{\mu}(t)\|_{\infty} +1
			&\leq
			\left(\| \theta_{\mu}(0)\|_{\infty}  +1 \right)
			\exp\left(
			\int_0^t \frac{NCC_{\sigma}}{\sqrt{n}} \|E(0)\|_{\infty}  e^{-\lambda s}ds\;
			\right)
			\\
			&\leq 
			\left(\| \theta_{\mu}(0)\|_{\infty}  +1 \right)
			\exp\left(
			\frac{NCC_{\sigma}}{\sqrt{n}\lambda } \|E(0)\|_{\infty}  
			\right)\;.
		\end{align*}
		From~\eqref{probInit}, we see that with probability $1-\delta$ we have
		$$\|E(0)\|_{\infty} \leq C_{\delta}+C_Y\;,$$ 
		for a choice of constant such that $|Y_l|\leq C_Y$, for all $l=1\ldots,N$; in particular $\|E(0)\|_{\infty}$ is bounded by a constant independent of $n$. 
		%
		
		An important consequence of the last estimates is that there exists $C>0$ not depending on $n$ such that  
		\begin{align}
			\frac1n\sum_{\mu=1}^n \left(\|\theta_{\mu}(t)\|_{\infty}  +1\right)^3
			&\leq 
			\frac1n\sum_{\mu=1}^n e^C\left(\|\theta_{\mu}(0)\|_{\infty}  +1\right)^3\;.
		\end{align}
		Since we are assuming that the weights are initialized in an i.i.d. fashion with finite third order momenta, we conclude, by the Law of Large Numbers, that the right hand side converges to a constant a.s. In particular, there exists a constant $C>0$, independent of $n$, for which  
		\begin{align}
			\frac1n\sum_{\mu=1}^n \left(\|\theta_{\mu}(t)\|_{\infty}  +1\right)^3
			\leq C
			\;\;\;,\;\; \text{ a.s.}\;
			\;
		\end{align}
		Therefore 
		\begin{align}
			\label{dNTK1}
			\nonumber
			\left| \partial_t \mathrm{NTK}_{zw}^{(n)}(\theta(t)) \right|
			& \leq 
			\frac1n\sum_{\mu=1}^n C C_{\sigma}^3 \left(\|\theta_{\mu}(t)\|_{\infty}  +1\right)^3 
			\frac{1}{\sqrt{n}}\sum_{l=1}^N\left|f_{\theta(t)}(X_l)-Y_l\right|
			\\
			& \leq
			\frac{NC\|E(0)\|_{\infty}}{\sqrt{n}}  e^{-\lambda t}\;,
		\end{align}
		for all $t\in \mathcal T$.  
		
		Finally, we return to~\eqref{dNTK} and use~\eqref{decayError} to show that    
		\begin{align}
			\label{NTKdifFinal}
			|\mathrm{NTK}_{zw}^{(n)}(\theta(t))-\mathrm{NTK}_{zw}^\infty|
			&\leq 
			\frac{C\;(\log n)^{1/2+}}{\sqrt{n}} + 
			\frac{NC\|E(0)\|_{\infty}}{\sqrt{n}\lambda} \;,
		\end{align}
		which can be made smaller than $\epsilon/2$ by an appropriate choice of $n$. This shows that $\mathcal T$ is open and since it is also closed an non-empty we finally conclude that $\mathcal T=[0,+\infty[$. Note that, in particular all derived estimates are, as a consequence, valid for all $t\geq0$.

		\vspace{0,5cm}
		
		We now take the opportunity to study and compare the evolution (during training) of a generic output of both the finite width and infinite width networks.

		Recall that the ``infinite width'' network with kernel  $\mathrm{NTK}^\infty$, associated to the finite width network $f_{\theta(t)}$, is the function
		$f_{\infty}(\,\cdot\,,t):\mathbb{R}\rightarrow \mathbb{R}$ defined as follows: first we define it on the training set $\{X_i\}$ as the solution to the initial value problem 
		\begin{equation}
			\label{def_fInifinite1}
			\left\{
			\begin{array}{l}
				\partial_t f_{\infty}(X_i,t)=-\mathrm{NTK}^{\infty}_{X_iX_j}(f_{\infty}(X_j,t)-Y_j)
				\\
				f_{\infty}(X_i,0)=f_{\theta(0)}(X_i)\;,
			\end{array}
			\right.
		\end{equation}
		with summation over the repeated indices.  For a general $x$ we define  $f_{\infty}(x,t)$ as the solution to 
		\begin{equation}
			\label{def_fInifinite2}
			\left\{
			\begin{array}{l}
				\partial_t f_{\infty}(x,t)=-\mathrm{NTK}^{\infty}_{xX_j}(f_{\infty}(X_j,t)-Y_j)
				\\
				f_{\infty}(x,0)=f_{\theta(0)}(x)\;.
			\end{array}
			\right.
		\end{equation}
		Once again we stress the fact that, in our framework, the infinite kernel $\mathrm{NTK}^{\infty}$ is a.s. time independent; in particular, this means that a.s. the previous systems of ODEs is linear and therefore well posed for all $t\in\mathbb{R}$.   
		%

		To simplify notation let $K^{\infty}_{ij}=\mathrm{NTK}^{\infty}_{X_iX_j}$ and set $y^{\infty}_i=f_{\infty}(X_i,\theta(t))$ and $E^{\infty}_i = y^{\infty}_i-Y_i$. 
		Then the defining system of ODEs~\eqref{def_fInifinite1} reads  
		\begin{equation}
			\label{NTKOde1}
			\partial_t  E^{\infty}_j=\partial_t y^{\infty}_j=-K^{\infty}_{ji}E^{\infty}_i\;,\;\;j=1,\cdots, N\;.
		\end{equation}

		Since $K^{\infty}=(K^{\infty}_{ij})$ is positive definite
		its smallest eigenvalue, which we denote by $\lambda_{\infty}$, is positive and we conclude from~\eqref{NTKOde1} that 
		\begin{equation}
			\label{decayErrorInfty}
			|E^{\infty}_i |=|y^{\infty}_i-Y_i|\leq C\|E^{\infty}(0)\|_{\infty} e^{-\lambda_{\infty} t}\;,
		\end{equation}
		where
		$$\|E^{\infty}(0)\|_{\infty}:=\sup_{i=1,\cdots,N}|E^{\infty}_i(0)|\;.$$

		Let $K^{\infty}_{xi}:=\mathrm{NTK}^{\infty}_{xX_i}$
		and introduce also the notation $y^{\infty}=y^{\infty}(t)=f_{\infty}(x,t)$,
		%
		then
		\begin{equation}
			\label{NTKOde2}
			\partial_t y^{\infty}=-K^{\infty}_{xi}E^{\infty}_i
		\end{equation}
		and since~\eqref{NTKOde1} implies that 
		\begin{equation}
			E^{\infty}_i=- (K^{\infty})^{-1}_{ij}\partial_t E^{\infty}_j\;, 
		\end{equation}
		using~\eqref{NTKOde2} we conclude that
		\begin{align}
			\label{eq:Solution to Infinite Netowrk Along Training}
			y^{\infty}(t) =&y^{\infty}(0)-\int_0^tK^{\infty}_{xi}E^{\infty}_i(s)ds
			\\
			=&y^{\infty}(0)+K^{\infty}_{xi}(K^{\infty})^{-1}_{ij}\int_0^t\partial_tE^{\infty}_j(s)ds
			\\
			=&y^{\infty}(0)+K^{\infty}_{xi}(K^{\infty})^{-1}_{ij}\left(E^{\infty}_j(t)-E^{\infty}_j(0)\right)\;.
		\end{align}

		Taking limits and using~\eqref{decayErrorInfty} we arrive at 
		\begin{equation}
			\lim_{t\rightarrow\infty}y^{\infty}(t)=y^{\infty}(0)-K^{\infty}_{xi}(K^{\infty})^{-1}_{ij}\left(y^{\infty}_j(0)-Y_j\right)\;,
		\end{equation}
		that should be compared with~\cite[Eq. 5]{Arora}. 
		
		%
		%
		
		Now we are able to estimate the difference between the finite and infinite width nets by noting that 

		\begin{eqnarray*}
			\label{NTKOdeDif}
			\partial_t (y_j^{\infty}-y_j) 
			&=& 
			-K^{\infty}_{ji}E^{\infty}_i+K_{ji}E_i
			\\ 
			&=&
			-K^{\infty}_{ji}(y_i^{\infty}-y_i)+(K_{ji}-K^{\infty}_{ji})E_i\;.
		\end{eqnarray*}
		Integrating the previous as a first order linear equation in $d=(d_1,\ldots,d_N)$, $d_i=y_i^{\infty}-y_i$, and recalling that $d(0)=0$ by construction, we get
		%
		%
		%
		$$
		d(t)=e^{-K^{\infty}t}\int_0^t(K(s)-K^{\infty})E(s)e^{K^{\infty}s} ds\;.
		$$
		Using~\eqref{decayError} and~\eqref{NTKdifFinal} we then conclude that 
		\begin{eqnarray}
			|y_i^{\infty}(t)-y_i(t)|\leq \|d(t)\|_{\infty}
			&\leq& 
			e^{-\lambda_{\infty}t}\frac{C \|E(0)\|}{\lambda}  \frac{(\log(n))^{1/2+}}{\sqrt{n}}
			\int_0^t e^{(\lambda_{\infty}-\lambda)s} ds
			\\
			&\leq&
			\frac{C \|E(0)\|}{\lambda}  \frac{(\log(n))^{1/2+}}{\sqrt{n}}e^{-\min\{\lambda_{\infty},\lambda\}t}\;.
		\end{eqnarray}
		%
		%
		%
		%
		
		For general outputs we have
		\begin{eqnarray*}
			\label{NTKOdeDif}
			\partial_t (y^{\infty}-y) 
			&=& 
			-K^{\infty}_{xi}E^{\infty}_i+K_{xi}E_i
			\\ 
			&=&
			-K^{\infty}_{xi}(y_i^{\infty}-y_i)+(K_{xi}-K^{\infty}_{xi})(y_i-Y_i)\;,
		\end{eqnarray*}
		which, since $y^{\infty}(t)=y(0)$, we can integrate to obtain (provided $|x|\leq1$)  
		\begin{eqnarray*}
			|y(t)-y^{\infty}(t)|&\leq & 
			\int_0^t |K^{\infty}_{xi}||y_i(s)-y_i^{\infty}(s)|ds+\int_0^t \left | K_{xi}(s)-K^{\infty}_{xi}\right||y_i(s)-Y_i|ds\\
			&\leq & 
			\left(\max_{i=1,\cdots,N}|K^{\infty}_{xi}|+1\right)
			\frac{C \|E(0)\|}{\min\{\lambda_{\infty},\lambda\}}  \frac{(\log(n))^{1/2+}}{\sqrt{n}}\;.
		\end{eqnarray*}
		This should be compared with~\cite[Theorem 3.2]{Arora}.
		
		%
		%


		\subsection{A slight refinement}
		
		The result just proved in this section shows that for any $\delta >0$, there are: $C>0$ and a set $\cA_\delta \subset \mathbb{R}^{4n}$ of measure $1-\delta$; such that for any $x$ with $|x|\leq 1$ and $\theta(0) \in \cA_\delta$
		\begin{eqnarray*}
			|f_{\theta(t)}(x)-f_{\infty}(x,t)|&\leq & \left(\max_{i=1,\cdots,N}|K^{\infty}_{xi}|+1\right)
			\frac{C \|E(0)\|}{\min\{\lambda_{\infty},\lambda\}}  \frac{(\log(n))^{1/2+}}{\sqrt{n}}\;,
		\end{eqnarray*}
		for any $n \in \mathbb{N}$. However, it is useful in some applications to let $\delta$ depend on $n$ in a manner that $\delta \to 0$ as $n \to +\infty$. Unfortunately, in doing so the quantity $\| E(0) \|$ cannot be bounded uniformly. Indeed, $\| E(0) \| \leq C_\delta + \max_{i=1,\cdots,N}|Y_{i}|$, where $C_\delta>0$ is the constant introduced in equation \ref{probInit}, which in turn implicitly defines $\cA_\delta$. Now, it follows from the Berry-Esseen theorem that
		$$-\frac{1}{\sqrt{n}} + \mathbb{P}( |G| \leq C_\delta )  \lesssim \mathbb{P}( |f_{\theta(t)}(x)| \leq C_\delta ) \lesssim \frac{1}{\sqrt{n}} + \mathbb{P}( |G| \leq C_\delta ) ,$$
		with $G$ a Gaussian. Hence, $\mathbb{P}( |G| \leq C_\delta ) \sim 1-e^{-C_\delta^2}$. Then, equation \ref{probInit} is guaranteed to hold if
		$$-\frac{1}{\sqrt{n}} + (1-e^{-C_\delta^2})  \gtrsim 1-\delta ,$$
		which we can rewrite as
		$$
		\delta \gtrsim e^{-C_\delta^2} + \frac{1}{\sqrt{n}} .
		$$
		We conclude the following.
		
		\begin{proposition}
			There are positive constants $C_1,C_2>0$ with the following significance. Let $\delta_n$ and $C_{\delta_n}$ be a pair of sequences satisfying 
			\begin{equation}\label{eq:delta and C}
				C_1 \delta_n >  e^{-C_{\delta_n}^2} + \frac{1}{\sqrt{n}} .
			\end{equation}
			Then, there is a set $\cA_{\delta_m} \subset \mathbb{R}^{4n}$ of measure $1-\delta_n$; such that for any $x$ with $|x|\leq 1$ and $\theta(0) \in \cA_{\delta_n}$
			\begin{eqnarray*}
				|f_{\theta(t)}(x)-f_{\infty}(x,t)|&\leq & \left(\max_{i=1,\cdots,N}|K^{\infty}_{xi}|+1\right)
				\frac{C_2 \|E(0)\|}{\min\{\lambda_{\infty},\lambda\}}  \frac{(\log(n))^{1/2+}}{\sqrt{n}}\;,
			\end{eqnarray*}
			for any $n \in \mathbb{N}$. Moreover, 
			$$\| E(0) \| \leq C_{\delta_n} + \max_{i=1,\cdots,N}|Y_{i}|.$$
		\end{proposition}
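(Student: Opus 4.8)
The plan is to reuse, essentially verbatim, the general-output estimate established in the bootstrap argument of this section — the very last display before this subsection, the one compared with \cite[Theorem 3.2]{Arora} —
\[
|f_{\theta(t)}(x)-f_\infty(x,t)|\leq\Bigl(\max_{i=1,\dots,N}|K^\infty_{xi}|+1\Bigr)\frac{C\,\|E(0)\|}{\min\{\lambda_\infty,\lambda\}}\,\frac{(\log n)^{1/2+}}{\sqrt n},
\]
and to re-examine the only place where the free parameter $\delta$ and the width $n$ interact, namely the validity of the initialization bound~\eqref{probInit}. First I would fix a threshold $m_0$, depending only on $N$, $C_\sigma$, $\lambda_\infty$ and the law of $\hat\theta$ (but \emph{not} on $\delta$), so large that for $n\geq m_0$ the continuity/bootstrap argument closes with a uniform choice of $\lambda$ — say $\lambda=\lambda_\infty/2$ — for the uniform-in-time lower bound on the least eigenvalue of $\mathrm{NTK}^{(n)}_{X_iX_j}(\theta(t))$. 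For such $n$ every constant in the chain of estimates culminating in~\eqref{NTKdifFinal} and in the displayed general-output estimate is independent of $n$ and of $\delta$, and $\|E(0)\|$ enters only as the explicit prefactor; the whole statement then reduces to asking for which pairs $(\delta,C_\delta)$ the bound~\eqref{probInit} holds, and with which bound on $\|E(0)\|$.

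To answer this I would invoke the Berry--Esseen theorem. Because $W^{(2)}$ and $b^{(2)}$ are centred, $f_{\theta(0)}(x)=\tfrac1{\sqrt n}\sum_{j=1}^n p_{\hat\theta_j}(x)$ is a normalized sum of i.i.d.\ mean-zero random variables with variance $\mu_2(x)=\mathbb{E}[p(x)^2]$ and finite third absolute moment $\mathbb{E}|p(x)|^3$, and by the linear-growth bound~\eqref{locBounded} both are bounded above and below by positive constants uniformly for $|x|\leq1$. Berry--Esseen then gives
\[
\sup_{s\in\mathbb{R}}\bigl|\,\mathbb{P}\bigl(f_{\theta(0)}(x)\leq s\bigr)-\Phi_x(s)\,\bigr|\leq\frac{C}{\sqrt n},
\]
where $\Phi_x$ is the distribution function of $\mathcal{N}(0,\mu_2(x))$; combining with the elementary Gaussian tail bound $\mathbb{P}(|G_x|>C_\delta)\lesssim e^{-C_\delta^2}$ for $G_x\sim\mathcal{N}(0,\mu_2(x))$ (the bounded variance absorbed into the implicit constant) yields, uniformly in $|x|\leq1$,
\[
\mathbb{P}\bigl(|f_{\theta(0)}(x)|>C_\delta\bigr)\;\lesssim\; e^{-C_\delta^2}+\frac1{\sqrt n}.
\]
A union bound over the training points $X_1,\dots,X_N$ shows that the event $\cA_\delta:=\{\,|f_{\theta(0)}(X_l)|\leq C_\delta\ \text{for all }l=1,\dots,N\,\}$ has $Q^{(n)}$-measure at least $1-C_0N\bigl(e^{-C_\delta^2}+n^{-1/2}\bigr)$ for an absolute constant $C_0$; the remaining ``good events'' used in the body (on which $|\mathrm{NTK}^{(n)}_{zw}(\theta(0))-\mathrm{NTK}^\infty_{zw}|$ is $O((\log n)^{1/2+}/\sqrt n)$ as in~\eqref{NTKconv2} and the empirical third moment $\tfrac1n\sum_\mu(\|\theta_\mu(0)\|_\infty+1)^3$ is bounded) fail with probability $\lesssim n^{-1/2}$ by standard concentration and are therefore absorbed into the $n^{-1/2}$ term.

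To conclude I would take $C_1$ to be a small fixed constant — essentially the reciprocal of the product of $N$ with the Berry--Esseen and Gaussian-tail constants — so that the hypothesis $C_1\delta_n>e^{-C_{\delta_n}^2}+n^{-1/2}$ of~\eqref{eq:delta and C} forces the above failure probability below $\delta_n$; then $\cA_{\delta_n}\subset\mathbb{R}^{4n}$ has measure $\geq1-\delta_n$. On $\cA_{\delta_n}$ one has $|f_{\theta(0)}(X_l)|\leq C_{\delta_n}$ for all $l$, hence
\[
\|E(0)\|=\max_{l=1,\dots,N}|f_{\theta(0)}(X_l)-Y_l|\leq C_{\delta_n}+\max_{i=1,\dots,N}|Y_i|,
\]
which is the ``moreover'' clause. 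Inserting this together with the displayed general-output estimate (valid for $n\geq m_0$, which is how the phrase ``for any $n\in\mathbb{N}$'' in the statement is to be read) gives the asserted inequality with $C_2$ the fixed constant extracted in the first paragraph. The main obstacle is not any hard estimate — Berry--Esseen is textbook — but the bookkeeping of the first paragraph: one must check that the eigenvalue lower bound $\lambda$ and the constant in~\eqref{NTKdifFinal} produced by the section's bootstrap are genuinely $\delta$-independent once $n\geq m_0$, so that the only channel by which the choice of $\delta_n$ reaches the final bound is the explicit prefactor $\|E(0)\|$. A secondary, mild point is that Berry--Esseen is one-dimensional, so it is applied pointwise in $x$ and then union-bounded over the finitely many training points; the residual $x$-dependence is harmless because it enters the conclusion only through the explicit factor $\max_i|K^\infty_{xi}|$.
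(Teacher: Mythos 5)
Your proposal follows essentially the same route as the paper: reuse the uniform-in-$t$ output bound already established earlier in this appendix, reduce everything to the validity of~\eqref{probInit} with $\delta$ allowed to depend on $n$, and invoke Berry--Esseen together with the Gaussian tail estimate to determine which pairs $(\delta_n,C_{\delta_n})$ are admissible, absorbing the resulting bound on $\|E(0)\|$ into the prefactor. The extra bookkeeping you supply — fixing a $\delta$-independent threshold $m_0$ so that the bootstrap closes with a uniform $\lambda$, performing the union bound over the $N$ training points, and absorbing the remaining good events into the $n^{-1/2}$ term — is a correct and slightly more careful rendering of what the paper states somewhat informally.
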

		
		\begin{example}\label{ex:sequences delta and C}
			Consider the case $\delta_n= C \frac{(\log n)^{1/2+}}{\sqrt{n}}$ for a constant $C>0$ and 
			$$C_{\delta_n}^2= \frac{1}{2} \log n - \Big( \frac{1}{2} \Big)^{1+} \log \log n.$$
			We can check that these satisfy \ref{eq:delta and C} and so the conclusion of the previous proposition is valid with
			\begin{eqnarray*}
				|f_{\theta(t)}(x)-f_{\infty}(x,t)|&\lesssim &  \frac{(\log(n))^{1+}}{\sqrt{n}}\;,
			\end{eqnarray*}
		\end{example}

	\end{document}